\def\eqref#1{equation~\ref{#1}}
\def\1{\bm{1}}
\def\vmu{{\bm{\mu}}}
\def\vk{{\bm{k}}}
\def\vx{{\bm{x}}}
\def\vz{{\bm{z}}}
\def\vomega{{\bm{\omega}}}
\def\vSigma{{\bm{\Sigma}}}
\def\vZ{{\bm{Z}}}
\def\vepsilon{{\bm{\epsilon}}}
\DeclareMathAlphabet{\mathsfit}{\encodingdefault}{\sfdefault}{m}{sl}
\SetMathAlphabet{\mathsfit}{bold}{\encodingdefault}{\sfdefault}{bx}{n}
\newcommand\remi{}
\newcommand\updateremi{}
\newcommand\revisremi{}
\newcolumntype{P}[1]{>{\centering\arraybackslash}m{#1}}
\newcommand{\CBMshap}{CLIP-SHAP\xspace}
\newcommand{\CBMLIME}{CLIP-LIME\xspace}
\newcommand{\method}{CLIP-QDA\xspace}
\newcommand{\methodsample}{CLIP-QDA$^{local}$\xspace}
\newcommand{\methoddata}{CLIP-QDA$^{global}$\xspace}
\newtheorem{proposition}{Proposition}
\title{CLIP-QDA: An Explainable Concept Bottleneck Model}
\author{\name Rémi Kazmierczak  \email remi.kazmierczak@ensta-paris.fr \\
      \addr Unité d'Informatique et d'Ingénierie des Systèmes \\
      ENSTA Paris, Institut Polytechnique de Paris
      \AND
      \name Eloïse Berthier  \email eloise.berthier@ensta-paris.fr \\
      \addr Unité d'Informatique et d'Ingénierie des Systèmes \\
      ENSTA Paris, Institut Polytechnique de Paris\\
      \AND
      \name Goran Frehse  \email goran.frehse@ensta-paris.fr \\
      \addr Unité d'Informatique et d'Ingénierie des Systèmes \\
      ENSTA Paris, Institut Polytechnique de Paris\\
      \AND
      \name Gianni Franchi  \email gianni.franchi@ensta-paris.fr \\
      \addr Unité d'Informatique et d'Ingénierie des Systèmes \\
      ENSTA Paris, Institut Polytechnique de Paris }
\begin{document}

\maketitle

\begin{abstract}

\remi{In this paper, we introduce an explainable %
algorithm designed from a multi-modal foundation model, that performs fast and explainable image classification. Drawing inspiration from CLIP-based Concept Bottleneck Models (CBMs), our method creates a latent space where each neuron is linked to a specific word. %
Observing that this latent space can be modeled with simple distributions, we use a Mixture of Gaussians (MoG) formalism to enhance the interpretability of this latent space. Then, we introduce \method, a classifier that only uses statistical values to infer labels from the concepts. In addition, this formalism allows for both \updateremi{sample-wise} and \updateremi{dataset-wise} explanations.  %
These explanations come from the inner design of our architecture, our work is part of a new family of greybox models, combining performances of opaque foundation models and the interpretability of transparent models. Our empirical findings show that in instances where the MoG assumption holds, \method achieves similar accuracy with state-of-the-art CBMs. Our explanations compete with existing XAI methods while being faster to compute.}

\end{abstract}

\section{Introduction}

The field of artificial intelligence is advancing rapidly, driven by sophisticated models like Deep Neural Networks \citep{lecun2015deep} (DNNs). These models find extensive applications in various real-world scenarios, including conversational chatbots \citep{ouyang2022training}, neural machine translation \citep{liu2020very}, and image generation \citep{rombach2021high}. Although these systems demonstrate remarkable accuracy, the process behind their decision-making often remains obscure. Consequently, deep learning encounters certain limitations and drawbacks. The most notable one is the lack of transparency regarding their behavior, which leaves users with limited insight into how specific decisions are reached. This lack of transparency becomes particularly problematic in high-stakes situations, such as medical diagnoses or autonomous vehicles.

The imperative to scrutinize the behavior of DNNs has become increasingly compelling as the field gravitates towards methods of larger scale in terms of both data utilization and number of parameters involved, culminating in what is commonly referred to as ``foundation models'' \citep{brown2020language,radford2021learning,ramesh2021zero}. These models have demonstrated remarkable performance, particularly in the domain of generalization, while concurrently growing more intricate and opaque. Additionally, there is a burgeoning trend in the adoption of multimodality \citep{reed2022generalist}, wherein various modalities such as sound, image, and text are employed to depict a single concept. This strategic use of diverse data representations empowers neural networks to transcend their reliance on a solitary data format. Nonetheless, the underlying phenomena that govern the amalgamation of these disparate inputs into coherent representations remain shrouded in ambiguity and require further investigation.

\remi{The exploration of latent representations is crucial for understanding the internal dynamics of a DNN.}
DNNs possess the capability to transform input data into a space, \remi{called latent space,} where inputs representing the same semantic concept are nearby. 
\remi{For example, in the latent space of a DNN trained to classify images, two different images of cats would be mapped to points that are close to each other} \citep{johnson2016perceptual}. This capability is further reinforced through the utilization of multimodality \remi{\citep{akkus2023multimodal}}, granting access to neurons that represent abstract concepts inherent to multiple types of data signals.

A class of networks that effectively exploits this notion is  Concept Bottleneck Models (CBMs) \citep{koh2020concept}. CBMs are characterized by their deliberate construction of representations for high-level human-understandable concepts, frequently denoted as words. Remarkably, there is a growing trend in employing CLIP \citep{radford2021learning}, a foundation model that establishes a shared embedding space for both text and images, to generate concept bottleneck models in an unsupervised manner.

Unfortunately, while CLIP embeddings represent tangible concepts, the derived values, often termed ``CLIP scores'' pose challenges in terms of interpretation. Furthermore, to the best of our knowledge, there is a notable absence of studies that seek to formally characterize  CLIP's latent space. %
The underlying objective here is to gain insights into how the pre-trained CLIP model organizes a given input distribution. Consequently, there is an opportunity to develop mathematically rigorous methodologies for elucidating the behavior of CLIP.

Then, our contributions are summarized as follows:
\begin{itemize}
\item 
\remi{We propose to represent }\remi{the distribution of} 
\remi{CLIP scores by a mixture of Gaussians.}
This representation enables a mathematically interpretable %
\remi{classification} of images \remi{using} human-understandable concepts. 

\item \remi{Utilizing the modeling approach presented in this study, we use Quadratic Discriminant Analysis (QDA) to classify the labels from the concepts, we name this method \method.  \method demonstrates competitive performance when compared to existing CBMs based on CLIP. Notably,  \method achieves this level of performance while utilizing a reduced set of parameters, limited solely to statistical values, including means, covariance matrices, and label probabilities. }

\item We \remi{propose two} efficient and mathematically grounded \remi{XAI} \updateremi{methods} \remi{for model explanation}, \updateremi{named \methodsample and \methoddata}. These \updateremi{methods} encompass both \remi{global} and \remi{local} assessments \remi{of \revisremi{how} the model behaves.} The \remi{global} metric directly emanates from our Gaussian modeling approach, providing a comprehensive evaluation of  \method's performance. Additionally, our \remi{local} metric draws inspiration from counterfactual analysis, furnishing insights into individual data points. 

\item \updateremi{We extend two established post-hoc XAI methods, LIME and SHAP, to formulate a novel XAI approach specifically tailored for 
CBMs. Departing from the conventional application of these methods, which typically %
produce explanations on the image level, \CBMLIME and \CBMshap generate explanations on the concept level allowed by CBMs. 
}

\item \updateremi{We propose a new evaluation protocol, %
 specifically designed for the unique characteristics of CBMs to assess the effectiveness of explanations. This protocol includes a deletion metric, which examines faithfulness to the model, and a detection metric, which evaluates faithfulness to the data.}

\end{itemize}

\begin{figure}
    \centering
    \includegraphics[width=0.8\textwidth]{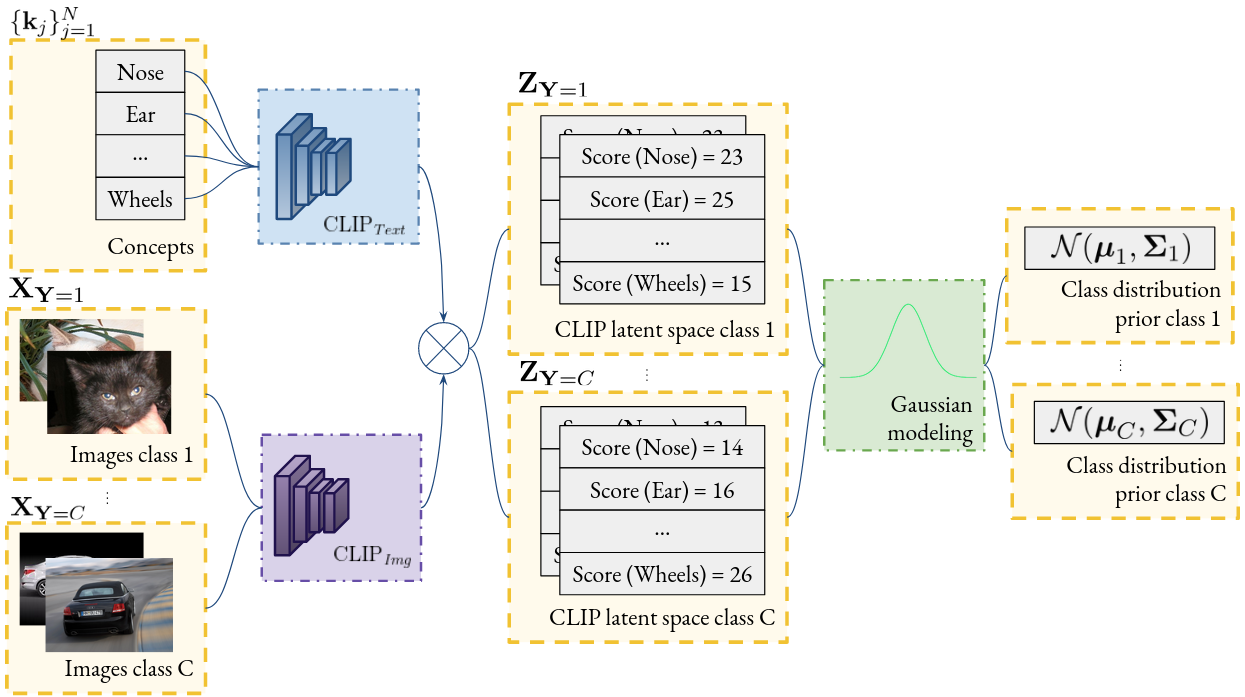}
    \caption{\textbf{Overview of our \remi{modeling} method.} \remi{By considering the whole dataset CLIP scores $\vz$ as class conditioned distributions $\vZ=\begin{bmatrix} Z^{\remi{1}} & \ldots & Z^{\remi{N}} \end{bmatrix}$, we model the CLIP latent space as a mixture of Gaussians, allowing for mathematically grounded explanations.}}
    \label{fig:overview}
\end{figure}

\section{Background and Related Work}

\subsection{Contrastive Image Language Pre-training (CLIP)}

CLIP \citep{radford2021learning} is a %
\revisremi{multi-modal} model that can \textcolor{black}{jointly} 
\remi{process image and text inputs.}
The model was pre-trained on a large dataset of images and texts to learn to associate visual and textual concepts. Then, the capacity of CLIP to create a semantically rich encoding induced the creation of many emergent models in detection, few-shot learning, or image captioning.

The widespread adoption of CLIP stems from the remarkable robustness exhibited by its pre-trained model. Through training on an extensive multimodal dataset, \remi{such as \citet{schuhmann2022laion}}, the model achieves impressive performance. Thus, on few- and zero-shot learning, for which it was designed, it \remi{obtains} impressive results across a wide range of datasets. Notably, CLIP provides a straightforward and efficient means of obtaining semantically rich representations of images in low-dimensional spaces. This capability enables researchers and practitioners to divert the original use of CLIP to various other applications \citep{luo2022clip4clip, menon2022visual,gabeff2023wildclip}.

\subsection{\updateremi{CLIP-based Concept Bottleneck Models (CLIP-based CBMs)}}

\updateremi{The term Concept Bottleneck Model (CBM), as outlined in \citet{koh2020concept}, %
\revisremi{refers to} to the implementation of a bottleneck reliant on human-specified concepts to execute a task, predominantly image classification. Consequently, the resultant algorithm inherently facilitates enhanced interpretability. While the term itself is relatively recent, it characterizes a lineage of methods that were employed in earlier research \citep{kumar2009attribute,lampert2009learning,koh2020concept,losch2019interpretability}. However, despite the advantages offered by CBMs in terms of better understanding, early implementations encountered challenges stemming from the requirement for dedicated datasets. These datasets needed to encompass not only inputs and labels but also incorporate human-specified concepts for each sample.}

\updateremi{In this context}, the emergence of multimodal foundational models has opened up novel opportunities. Recent research \citep{yang2023language,oikarinen2023label} has leveraged large language models to directly construct concepts from CLIP text embeddings, opening the door to a family of CLIP-based CBMs. Additionally, efforts have been made to create sparse CLIP-based CBMs \citep{panousis2023sparse,feng2023leveraging}. \citet{yan2023learning} explore methods to achieve superior representations with minimal labels. \citet{yuksekgonul2022post} capitalize on the CLIP embedding spaces, considering concepts as activation vectors. Finally, \citet{kim2023grounding} build upon the idea of activation vectors to discover counterfactuals. 
\subsection{Explainable AI}

According to \citet{arrieta2020explainable}, we can define an explainable model as a computational model, that is designed to provide specific details or reasons to ensure clarity and ease of understanding regarding its functioning. In broader terms, an explanation denotes the information or output that an explainable model delivers to elucidate its operational processes.

The literature shows a clear distinction between non-transparent (or blackbox) and transparent (or whitebox) models. Transparent models are characterized by their inherent explainability. These models can be readily explained due to their simplicity and easily interpretable features. Examples of such models include linear regression \citep{galton1886regression}, logistic regression \citep{mccullagh2019generalized}, and decision trees \citep{quinlan1986induction}. 
In contrast, non-transparent models are inherently non-explainable.  This category encompasses models that could have been explainable if they possessed simpler and more interpretable features \citep{galton1886regression, quinlan1986induction}, as well as models that inherently lack explainability, including deep neural networks. The distinction between these two types of models highlights the trade-off between model complexity and interpretability \citep{arrieta2020explainable}, with transparent models offering inherent explainability while non-transparent models allow for better performance but require the use of additional techniques for explanations,  named post-hoc methods. Commonly used post-hoc methods include visualization techniques, such as saliency maps, which highlight the influential features in an image that contribute to the model's decision-making. \updateremi{Within this category of methods, notable examples include approaches such as Grad-CAM \citep{selvaraju2017grad}, which generates activation maps by computing the gradients of the output labels.} Sensitivity analysis \citep{cortez2011opening} represents another avenue, involving the analysis of model predictions by varying input data. \updateremi{Sample-wise} explanation techniques are also used to explain the model from a local simplification of the model around a point of interest \citep{lime,plumb2018model}. Finally, feature relevance techniques aim at estimating the impact of each feature on the decision \citep{lundberg2017unified}.

In an endeavor to integrate the strengths of both black and whitebox models, the concept of greybox XAI has been introduced by \citet{bennetot2022greybox}. These models divide the overall architecture into two distinct components. Initially, a blackbox model is employed to process high-entropy input signals, such as images, and transform them into a lower-entropy latent space that is semantically meaningful and understandable by humans. By leveraging the blackbox model's ability to simplify complex problems, a whitebox model is then used to deduce the final result based on the output of the blackbox model. This approach yields a partially explainable model that outperforms traditional whitebox models while retaining partial transparency, in a unified framework. %

\updateremi{Feature Attribution Methods, a category of techniques employed to address the complexity of DNNs' output for explainability, strive to identify crucial features in the input. These methods leverage a mapping function to reduce input complexity. Notable examples in this family include DeepSHAP and KernelSHAP, as proposed by \citet{lundberg2017unified}. This approach resonates with the concept of greybox models, wherein the input is initially simplified to be explainable by a transparent classifier. However, greyboxes differ from Feature Attribution Methods in that the mapping function is independent from the input under consideration.}

\section{\updateremi{A Greybox Concept Bottleneck Model: CLIP-QDA}}

\subsection{\remi{General Framework}}

For our experimental investigations, we consider a 
\remi{general framework based on} prior work on CLIP-based CBMs \citep{yang2023language,oikarinen2023label}. This framework consists of two core components. The first component centers on the extraction of multi-modal features, enabling the creation of connections between images and text. %
The second component encompasses a classifier head. A visual depiction of this process is presented in Figure \ref{fig:baseline}.

\begin{figure}
    \centering
    \includegraphics[width=0.8\textwidth]{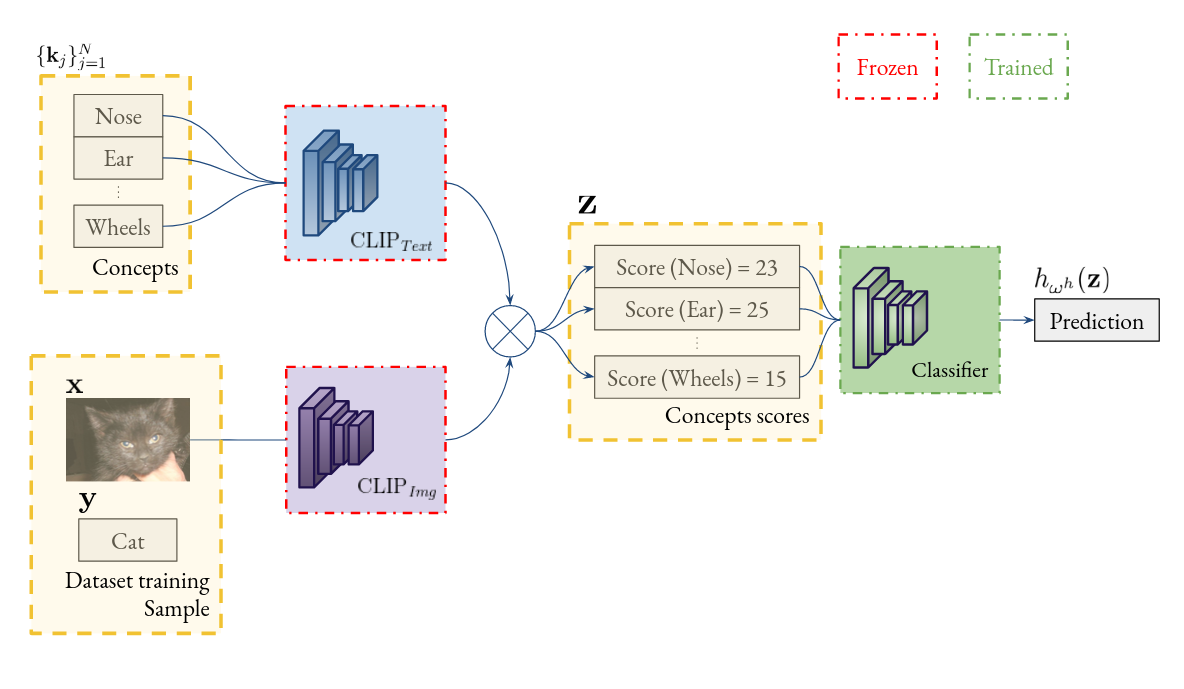}
    \caption{\textbf{Training procedure of \remi{the}
    \remi{general framework.}}
    \remi{First, CLIP scores $\vz$ are computed for each of the concepts $\{\vk^{j}\}_{j=1}^{N}$, then a classifier $h_{\vomega^h}(.)$, with parameters $\vomega^h$ is trained to classify the label from the concatenation of the CLIP scores. }}
    \label{fig:baseline}
\end{figure}

We build upon CLIP DNN \citep{radford2021learning}, which enables the creation of a multi-modal latent space through the fusion of image and text information. Rather than relying on a single text or prompt, we employ a set of diverse prompts, each representing distinct concepts. These concepts remain consistent across the dataset and are not subject to alterations. The purpose of CLIP's representation is to gauge the similarities between each concept and an image, thereby giving rise to a latent space. To prevent ambiguity, we denote the resulting space of CLIP scores as the ``CLIP latent space'', while the spaces generated by the text and image encoders are respectively referred to as the ``CLIP text embedding space'' and the ``CLIP image embedding space''. Here, ``CLIP score'' denotes the value derived from a cosine distance computation between the image and text encodings.

The selection of concepts is guided by expert input and acts as a hyperparameter within our framework. For comprehensive examples of concept sets, please refer to Section \ref{concepts_sets}. Notably, there is no requirement for individual image annotation with these concepts. This is due to CLIP's inherent design, which allows it to score \remi{concepts} in a zero-shot manner.

\remi{Following the acquisition of the CLIP latent space, it is given as an input to a classifier head, which is responsible for learning to predict the class. } %
\remi{Thanks to the low dimension of the latent space and the clear semantics of each component (concepts), it is possible to design simple and explainable classifiers.}

\subsection{\remi{CLIP Latent Space Analysis}}

\subsubsection{Notations and formalism}

Let us introduce the following notations used in the rest of the paper.
$X$ and $Y$ represent two random variables (RVs) with joint distribution $(X,Y) \sim \mathcal{P}_{X,Y}$. A realization of this distribution is a pair $(\vx,y)$ that concretely represents one image and its label. In particular, $y$ takes values in $\llbracket 1,C \rrbracket $, with $C \in \mathbb{N}$ the number of classes. From this distribution, we can deduce the marginal distributions $ X \sim \mathcal{P}_X$ and  $ Y \sim \mathcal{P}_Y$. We can also describe for each class $c$, an RV $X_{Y=c} \sim \mathcal{P}_{X_{Y=c}}$ that represents the \remi{conditional} distribution of images that have the class $c$.

Let $\{\vk^{j}\}_{j=1}^{N}$ denote a set of $N \in \mathbb{N}$ concepts, \remi{where each $\vk^{j}$ is a character string representing the concept in natural language}.
We consider ``CLIP’s DNN'' to refer to the vector of its pre-trained weights, denoted as $\vomega^g$, and a function $g$ that represents the architecture of the deep neural network (DNN). Given an image $\vx$ and a concept $\vk^{j}$, the output of CLIP’s DNN is represented as $z^{j}=g_{\vomega^g}(\vx,\vk^{j})$.  The projection in the multi-modal latent space of an image $\vx$ is the vector 
$\vz= \begin{bmatrix} g_{\vomega^g}(\vx,\vk^{\remi{1}}) & \ldots & g_{\vomega^g}(\vx,\vk^{\remi{N}}) \end{bmatrix}$. 
We define~$Z^{j}$ as the random variable associated with the observation~$z^{j}$. It should be noted that 
$\vZ=\begin{bmatrix} Z^{\remi{1}} & \ldots & Z^{\remi{N}} \end{bmatrix}$ is the random variable representing the concatenation of the CLIP scores associated with the $N$ concepts. 
\remi{Furthermore}, we denote the \remi{conditional} distributions of $\vz$ having class~$c$ as $\vZ_{Y=c}=\begin{bmatrix} Z^{\remi{1}}_{Y=c} & \ldots & Z^{\remi{N}}_{Y=c} \end{bmatrix}$.

Finally, we define the classifier as a function $h_{\vomega^h}(\vz)$ with parameters~$\vomega^h$ that, given a vector $\vz$, outputs the predicted class.   

\subsubsection{Gaussian modeling of CLIP\remi{'s latent space}} \label{gauss_modeling}%

To analyze the behavior of the CLIP latent space, we conduct a thorough examination of the distribution of CLIP scores. To elucidate our modeling approach, we suggest 
\remi{to visualize a large set of samples from $Z^{j}$ by observing the CLIP scores of an entire set of images taken from a toy example, which consists of images representing only cats and cars (see the Cats/Dogs/Cars dataset in Section \ref{setup}). In this instance, the concept denoted by ``$j$'' corresponds to~``Pointy-eared''.}

In Figure \ref{fig:Zi1}, which illustrates the histogram of CLIP scores, we observe that the distribution exhibits characteristics that can be effectively modeled as a mixture of two Gaussians. The underlying intuition here is that the distribution $Z^{j}$ represents two types of images: those without pointy ears, resulting in the left mode (low scores) of CLIP scores, and those with pointy ears, resulting in the right mode (high scores). Since this \remi{concept} uniquely characterizes the classes -- cats have \remi{ears} but not cars -- we can assign each mode to a specific class. This intuition is corroborated by the visualization of the distribution $Z^{j}_{Y=1}$ (Car) in violet and the distribution $Z^{j}_{Y=2}$ (Cat) in red. Since the extracted distributions exhibit similarities to normal distributions, we are motivated to describe $\vZ$ as a mixture of Gaussians. Yet, we also discuss the validity and limitations of this modeling approach in Section \ref{gauss_hyp_test}.

\begin{figure}[H]
     \centering
     \begin{subfigure}[b]{0.45\textwidth}
         \centering
         \includegraphics[width=\textwidth]{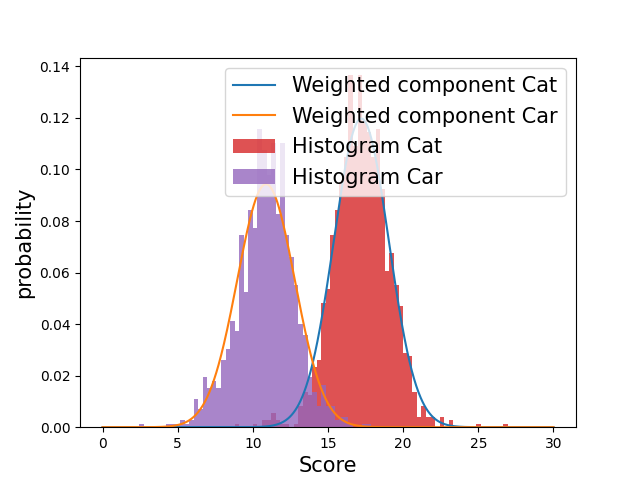}
         \caption{Modeling of $p_c Z_{Y=c}$.}
         \label{fig:Zi1_comp}
     \end{subfigure}
     \hfill
     \begin{subfigure}[b]{0.45\textwidth}
         \centering
         \includegraphics[width=\textwidth]{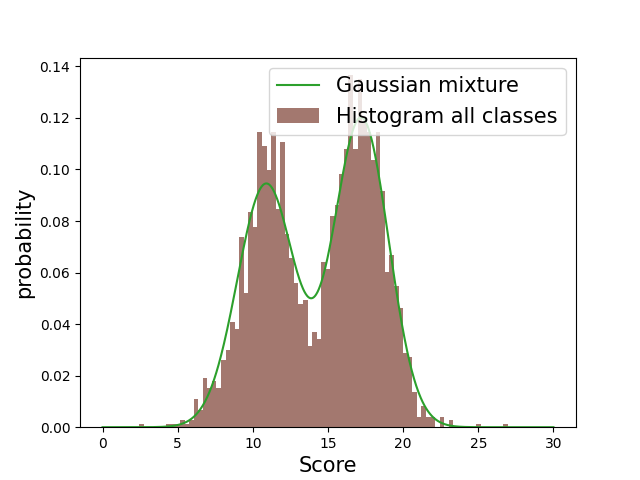}
         \caption{Modeling of $Z$.}
         \label{fig:Zi1_gmm}
     \end{subfigure}
    \caption{\remi{\textbf{Normalized histogram of scores $Z^{j}$ specifically for the concept ``Pointy-eared''.} On the left, we observe that the different classes can be modeled as weighted Gaussians. On the right, we show the resulting Gaussian mixture modeling.}}
    \label{fig:Zi1}
\end{figure}

Mathematically, the Gaussian prior assumption is equivalent to:
\begin{align}
 p(\vZ=\vz \mid Y=c) = \mathcal{N}(\vz \mid \vmu_c, \vSigma_c) ,\label{equ:gaussian}   \
\end{align}
where $\vSigma_c$ and $\vmu_c$ are the mean vectors and the covariance matrices, different for each class. Moreover, given the multinomial distribution of $Y$, with the notation $p_c=P(Y=c)$, we can model $\vZ$ as a mixture of Gaussians:
\begin{align}
p(\vZ=\vz) = & \sum_{\updateremi{c_i}=1}^C p_{\updateremi{c_i}} \mathcal{N}(\vz \mid \vmu_{\updateremi{c_i}}, \vSigma_{\updateremi{c_i}})   \ .
\end{align}

\subsection{CLIP \remi{Quadratic Discriminant Analysis} \updateremi{(\method)}}

Based on the Gaussian distribution assumption described in Section \ref{gauss_modeling}, a natural choice for $h_{\vomega^h}$ (the classifier in Figure \ref{fig:baseline}) is the \remi{Quadratic Discriminant Analysis  (QDA) as defined in \citet{hastie2009elements}}. To compute it, we need to estimate the parameters \remi{$(\vSigma_c, \vmu_c, p_c)$ of the probability distributions $\vZ_{Y=c}$ and $Y$,  which is done by computing the maximum likelihood estimators on the training data.}

Subsequently, with the knowledge of the functions $p(\vZ=\vz \mid Y=c)$ and $p(Y=c)$, we can apply Bayes theorem to make an inference on $p(Y=c \mid \vZ=\vz)$:
\begin{equation}
p(Y=c \mid \vZ=\vz) = \frac{p_c \mathcal{N}(\vz \mid  \vmu_c, \vSigma_c)}{\sum_{\updateremi{c_i}=1}^{N} p_{\updateremi{c_i}} \mathcal{N}(\vz \mid  \vmu_{\updateremi{c_i}}, \vSigma_{\updateremi{c_i}})}  \label{CLIP-QDA_proba} \, .
\end{equation}

\remi{Then, the output of the QDA classifier can be described as:}
\remi{\begin{equation}
h_{\vomega^h}(\vz) = \underset{c}{\arg\max} ~~ \frac{p_c}{(2\pi)^{N/2}|\vSigma_c|^{1/2}} e^{-\frac{1}{2}(\vz-\vmu_c)^T\vSigma_c^{-1}(\vz-\vmu_c)}  \label{CLIP-QDA} \, .
\end{equation}}

In practice, we leverage the training data to estimate $\vomega^h = (\vSigma_c, \vmu_c, p_c)$,
which enables us to bypass the standard stochastic gradient descent process, resulting in an immediate ``training time''. 
Furthermore, this classifier offers the advantage of transparency, akin to the approach outlined by \citet{arrieta2020explainable}, with its parameters comprising identifiable statistical values and its output values representing probabilities.

\subsection{Explainable AI for Concept Bottleneck Models} \label{XAI_methods}

\updateremi{Our \method model is founded upon a transparent probabilistic framework, hence, we have at our disposal a variety of statistical tools to explain the functioning of our classifier as illustrated in Figure \ref{organigram_XAI}. In this section, we present two distinct types of explanations: \methoddata, offering a \textit{global} perspective that sheds light on the classifier's behavior across the entire dataset (refer to Section \ref{dataset_wise}), and \methodsample, providing a \textit{local} explanation tailored to elucidate the model's actions on individual samples (refer to Section \ref{sample_wise}).}

\updateremi{Furthermore, we propose the adaptation of two well-established XAI post-hoc methods, LIME \citep{lime} and SHAP \citep{lundberg2017unified}, to the unique characteristics of CBMs. These methods are denoted as \CBMshap and \CBMLIME, respectively (see Section \ref{SHAPLIME-CBM}). An overview of the methods incorporated in our investigation is depicted in Figure \ref{organigram_XAI}. 
}

\begin{figure}[H]
     \centering
         \centering
         \includegraphics[width=0.5\textwidth]{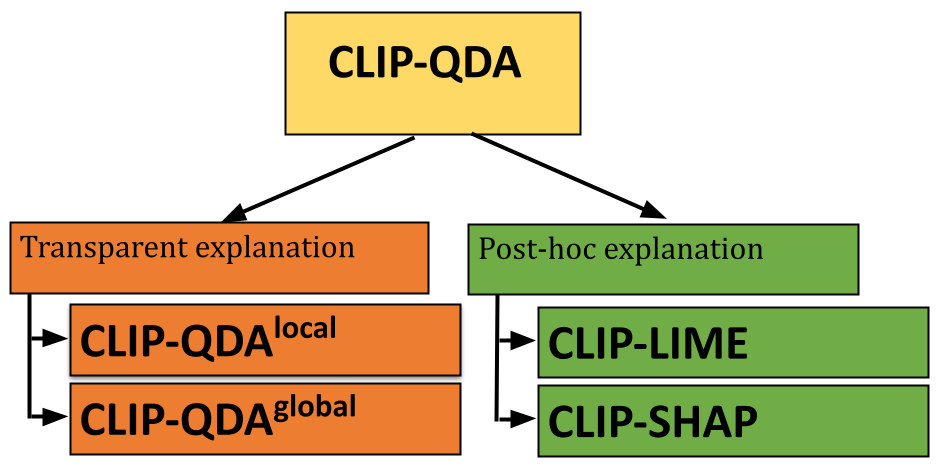}
    \caption{\textbf{The different source of explanation of \method. }
 Global (\methoddata)  vs. Local (\methodsample)  explanations offer insights into classifier behavior across the dataset and on individual samples, respectively.  Post-hoc explanations with \CBMshap and \CBMLIME use  traditional  XAI techniques.}
    \label{organigram_XAI}
\end{figure}

\subsubsection{\updateremi{Dataset-wise explanation with \methoddata }} \label{dataset_wise}

As we have access to priors that describe the distribution of each class, a valuable insight to gain an understanding of which concept our classifier aligns with is the measurement of distances between these distributions.
Specifically, we focus on the \remi{conditional} distributions of two classes of interest~$c_1$ and~$c_2$, that we denote by $Z^{j}_{Y=c_1}$ and $Z^{j}_{Y=c_2}$. The underlying intuition behind measuring the distance between these distributions is that the larger the distance, the more the attribute $j$ can differentiate between the classes~$c_1$ and~$c_2$. 

To accomplish this, we propose to use the Wasserstein-2 distance \updateremi{\citep{ramdas2017wasserstein}} as a metric for quantifying the separation between the two conditional distributions. It is worth noting that calculating the Wasserstein-2 distance can be a complex task in general. However, for Gaussian distributions, there exists a closed-form solution for computing the Wasserstein-2 distance. In addition, we sign the distance to keep the information of the position of $c_1$ relative to $c_2$:
\begin{align}
\tilde{W_2}(Z^{j}_{Y=c_1},Z^{j}_{Y=c_2}) & = \text{sign}( [\vmu_{c_1}]_{(j)}-[\vmu_{c_2}]_{(j)}) \left( ([\vmu_{c_1}]_{(j)}-[\vmu_{c_2}]_{(j)})^2+ \Lambda^{j}_{c_1, c_2} \right) , \nonumber
\end{align}
where $\Lambda^{j}_{c_1, c_2}=[\vSigma_{c_1}]_{(j,j)}+[\vSigma_{c_2}]_{(j,j)}-2 \remi{ \sqrt{[\vSigma_{c_1}]_{(j,j)} [\vSigma_{c_2}]_{(j,j)} }} $.

Note that the resulting value is no longer a distance since we lost the commutativity property. Examples of explanations based on this metric are given in Sections~\ref{examples} and \ref{examples_bis}. \updateremi{Also, an alternative way to produce dataset-wise explanations, oriented on example-based explanations, is also available in Section \ref{centroid_dist}.}

\subsubsection{\updateremi{Sample-wise explanation with \methodsample}} \label{sample_wise}

One would like to identify the key concepts associated with a particular image that plays a pivotal role in achieving the task's objective. To delve deeper into the importance and relevance of concepts in the decision-making process of the classifier, a widely accepted approach is to generate counterfactuals \citep{plumb2022finding,luo2023zero,kim2023grounding}. \remi{If a small perturbation of a concept score changes the class, the concept is considered important. We now formalize this mathematically.}

Consider a pre-trained classifier denoted as $h_{\vomega^h}(\cdot)$. In this context, $\vomega^h$ represents the set of weights associated with the \method, specifically $\vomega^h = (\vSigma_c,\vmu_c,p_c)$. 
\remi{Given a score vector $\vz$, }we define counterfactuals as hypothetical values $\vz + \vepsilon^{j}_{\remi{s}}$, \remi{$\vepsilon^{j}_{\remi{s}}$ being called the perturbation. This perturbation aims to be of minimal magnitude and is obtained by solving the following optimization problem:}
\remi{\begin{align}
\min \|\vepsilon^{j}_{\remi{s}}\|^2 ~~ \textnormal{s.t.} ~~ h_{\vomega^h}(\vz+\epsilon^{j}_{s}) \neq h_{\vomega^h}(\vz) \, .
\label{equ:Causal_Concept_Order}
\end{align}}
The idea behind this equation is to find the minimal perturbation $\vepsilon^{j}_{\remi{s}}$ of the input $\vz$ that makes the classifier produce a different label than $h_{\vomega^h}(\vz)$.
However, in our case, two important restrictions are applied to~$\vepsilon^{j}_{\remi{s}}$:

\begin{enumerate}
\item  \textbf{Sparsity}: \textcolor{black}{for interpretability,} we only change one attribute at a time, indicated by the index $j$. Then $\vepsilon^{j}_{\remi{s}} = [0,..,0,\epsilon^{j}_{\remi{s}},0,...,0]$. 
\item \textbf{Sign}: \remi{we take into account} the sign $s \in \{ -,+ \}$ of the perturbation. Then, we separate the positive counterfactuals $\vepsilon^{j}_{\remi{+}} = [0,..,0,\epsilon^{j}_{+},0,...,0],~\epsilon^{j}_{+} \in \mathbb{R^+}$ and the negative counterfactuals. $\vepsilon^{j}_{-} = [0,..,0,\epsilon^{j}_{-},0,...,0],~\epsilon^{j}_{-} \in \mathbb{R^-} $.
\end{enumerate}

These two constraints are imposed to generate concise and, consequently, more informative counterfactuals. In this context, if a solution to \eqref{equ:Causal_Concept_Order}, denoted as $\vepsilon^{j}_{s,*}$, exists, it represents the minimal modification (addition or subtraction)  \textcolor{black}{to the coordinate $j$ of} the original vector $\vz$ that results in a change from $h_{\vomega^h}(\vz)$ to $h_{\vomega^h}(\vz+\vepsilon^{j}_{s,*})$. \textcolor{black}{Note that this approach allows for an explicit evaluation of the effect of an \textit{intervention}, denoted as 
\remi{$do(\vZ=\vz+\epsilon^{j}_{s})$}
using a common notation in causal inference~\citep{peters2017elements}. Concretely, this emulates  answers to questions of the form: ``Would the label of my cat's image change if I removed a certain amount of its pointy ears?''.}

Another important point to notice is that to obtain all possible counterfactuals, this equation must be solved for all concepts $j$ and both signs $s$.
\remi{A practical way to compute counterfactuals is given below.}

\begin{proposition} \label{demo_sample_wise}
Let us consider a pre-trained \remi{QDA} classifier $h_{\vomega^h}(.)$ \remi{with parameters $\vomega^h$. Assume that the input data is drawn from the corresponding Gaussian Mixture model,} as defined in \ref{CLIP-QDA}, and that \remi{$\vepsilon^{j}_{s}$ a perturbation with the above sparsity and sign restrictions}. Then, there is a closed-form solution to problem~\ref{equ:Causal_Concept_Order}, 
which is a function of the parameters~$(\vSigma_c, \vmu_c, p_c)_{c=1}^C$.
\end{proposition}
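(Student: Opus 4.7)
The plan is to reduce problem~\ref{equ:Causal_Concept_Order} to the roots of a scalar quadratic, so that the quadratic formula yields an explicit closed form in the parameters. Let $c_0 = h_{\vomega^h}(\vz)$ denote the currently predicted class, and recall that, up to the same monotone transformation that turns~\ref{CLIP-QDA} into~\ref{CLIP-QDA_proba}, the classifier can be written as $h_{\vomega^h}(\vz) = \argmax_c D_c(\vz)$ with discriminant
\begin{equation*}
D_c(\vz) \;=\; \log p_c - \tfrac{1}{2}\log|\vSigma_c| - \tfrac{1}{2}(\vz-\vmu_c)^T \vSigma_c^{-1}(\vz-\vmu_c).
\end{equation*}
The label at $\vz+\vepsilon^{j}_{s}$ differs from $c_0$ if and only if some $c\neq c_0$ satisfies $D_c(\vz+\vepsilon^{j}_{s}) \geq D_{c_0}(\vz+\vepsilon^{j}_{s})$, so the minimal feasible perturbation necessarily lies on one of the pairwise boundaries $\{D_c = D_{c_0}\}_{c\neq c_0}$.

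First, I would substitute the sparse perturbation $\vepsilon^{j}_{s} = \epsilon\, \ve_j$ (with $\ve_j$ the $j$-th canonical vector) and expand the quadratic forms. Direct computation gives $D_c(\vz+\epsilon\ve_j) - D_{c_0}(\vz+\epsilon\ve_j) = \alpha_c\, \epsilon^2 + \beta_c\, \epsilon + \gamma_c$, with
\begin{align*}
\alpha_c &= -\tfrac{1}{2}\bigl([\vSigma_c^{-1}]_{(j,j)} - [\vSigma_{c_0}^{-1}]_{(j,j)}\bigr), \\
\beta_c &= -\bigl([\vSigma_c^{-1}(\vz-\vmu_c)]_{(j)} - [\vSigma_{c_0}^{-1}(\vz-\vmu_{c_0})]_{(j)}\bigr), \\
\gamma_c &= D_c(\vz) - D_{c_0}(\vz).
\end{align*}
Each of the three coefficients is an explicit algebraic function of $(\vSigma_c,\vmu_c,p_c)_{c=1}^C$ and the input $\vz$. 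Solving $\alpha_c \epsilon^2 + \beta_c \epsilon + \gamma_c = 0$ is then a standard quadratic (reducing to linear when $\alpha_c=0$, or to an inconsistency when $\alpha_c=\beta_c=0$ and $\gamma_c\neq0$), with at most two real roots given by the quadratic formula. This yields a finite list of candidate perturbations $\{\epsilon^{(c,k)}\}_{c\neq c_0,\,k\in\{1,2\}}$ per coordinate $j$.

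Second, I would enforce the sparsity-and-sign restrictions by filtering this finite list. For a chosen sign $s\in\{+,-\}$, I discard complex roots and roots of the wrong sign, then evaluate $h_{\vomega^h}(\vz + \epsilon\ve_j)$ at each remaining candidate to retain those that actually flip the predicted class. The closed-form solution $\epsilon^{j}_{s,*}$ to~\ref{equ:Causal_Concept_Order} is the retained root of smallest absolute value; if no candidate survives, no counterfactual of sign $s$ exists along direction $j$. The main obstacle is precisely this verification step: a root of $D_c - D_{c_0}$ only witnesses a tie with class $c$, and the classifier may already favour some third class $c'\notin\{c_0,c\}$ strictly before that point, so the naive minimum of $|\epsilon^{(c,k)}|$ over $c$ is not always valid. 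Sorting the at most $2(C-1)$ candidates by $|\epsilon|$ and scanning them in order while checking the argmax of $\{D_{c'}(\vz + \epsilon\ve_j)\}_{c'}$ at each root resolves this cleanly, and since all intermediate quantities are algebraic expressions in the QDA parameters, the resulting $\epsilon^{j}_{s,*}$ remains a closed-form function of $(\vSigma_c,\vmu_c,p_c)_{c=1}^C$ and $\vz$, concluding the proof.
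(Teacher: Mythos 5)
Your proof is correct and takes essentially the same route as the paper's: both reduce problem~\ref{equ:Causal_Concept_Order} to the pairwise boundary conditions $D_c(\vz+\epsilon\ve_j)=D_{c_0}(\vz+\epsilon\ve_j)$, which after taking logarithms become scalar quadratics in $\epsilon$ whose coefficients are explicit functions of $(\vSigma_c,\vmu_c,p_c)_{c=1}^C$ and $\vz$, and then select the admissible real root of the required sign with minimal magnitude. The only differences are cosmetic: the paper reaches the active-constraint conclusion via a Lagrangian with slack variables in the binary case and then reduces the multiclass problem to $C-1$ binary ones, whereas you argue boundary attainment directly and also cover the degenerate case $\alpha_c=0$, which the paper's formula (dividing by $2P$) silently omits. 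One remark on your ``main obstacle'': because you minimize over all $c\neq c_0$ and each gap $D_c-D_{c_0}$ is continuous and strictly negative at $\epsilon=0$, a third class $c'$ can only overtake $c_0$ before the root associated with $c$ if its own boundary is crossed at a smaller $|\epsilon|$, in which case that smaller root is already in your candidate list; hence the naive minimum is valid except in the measure-zero tangency case of a double root, and your extra argmax scan, while harmless, is essentially unnecessary.
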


The proof \remi{and expression are} given in Appendix~\ref{close_form_solution}.
\remi{We} illustrate the behavior of our classifier and our \textit{\remi{local}} metric with a toy example which consists of two Gaussians ($C=2$) among two concepts $Z_1$ and $Z_2$ ($N=2$). We find the counterfactuals for both signs following the first concept ($\vepsilon^{\remi{1}}_{-,*}$ and $\vepsilon^{\remi{1}}_{+,*}$). Results are presented in Figure~\ref{fig:illus_conterfact}.

\begin{figure}
     \centering
     \begin{subfigure}[t]{0.3\textwidth}
         \centering
         \includegraphics[width=\textwidth]{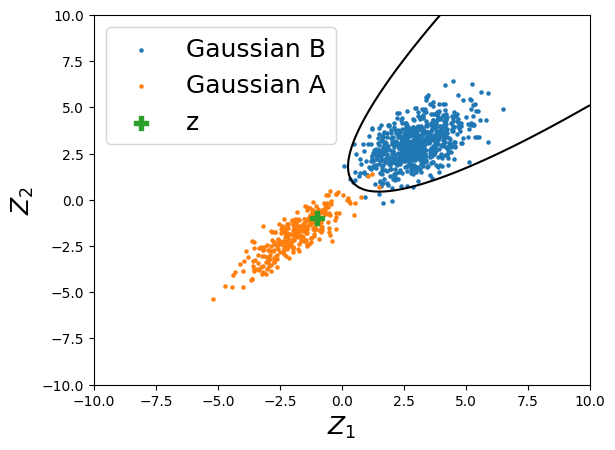}
         \caption{In this case, the input sample (in green) does not have counterfactuals by only changing $z_1$.}
         \label{fig:illus_conterfact_0s}
     \end{subfigure} \hfill
     \begin{subfigure}[t]{0.3\textwidth}
         \centering
         \includegraphics[width=\textwidth]{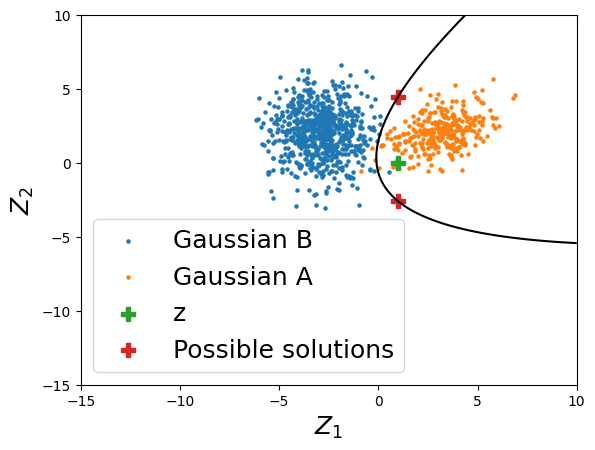}
         \caption{In this case, the input sample (in green) has counterfactuals $\vepsilon^1_{-,*}$ and $\vepsilon^1_{+,*}$. Changing $z_1$ leads to two intersections with the equiprobability line, one by adding score, the other by removing score.}
         \label{fig:illus_conterfact_2s}
     \end{subfigure} \hfill
     \begin{subfigure}[t]{0.3\textwidth}
         \centering
         \includegraphics[width=\textwidth]{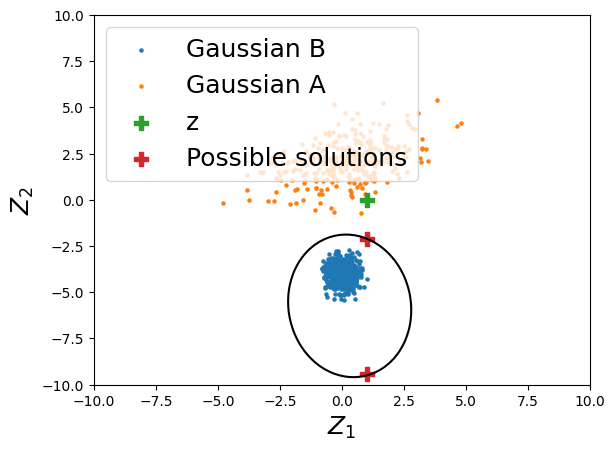}
         \caption{In this case, the input sample (in green) only has one counterfactual $\vepsilon^1_{-,*}$ and no counterfactual $\vepsilon^1_{+,*}$. Changing $z_1$ leads to two intersections with the equiprobability line, but since both are by removing score, only the one of minimal magnitude is conserved.}
         \label{fig:illus_conterfact_1s}
     \end{subfigure} \hfill
        \caption{\textbf{Visualization of the counterfactuals in the two Gaussians toy example.} Samples of the two distributions are plotted in blue and orange. The equiprobability line is plotted in black. }
        \label{fig:illus_conterfact}
\end{figure}

It is worth noting that these counterfactual values are initially expressed in CLIP score units, which may not inherently provide meaningful interpretability. To mitigate this limitation, we introduce scaled counterfactuals, denoted as $\vepsilon^{j}_{s,*,scaled}$, obtained by dividing each counterfactual by the standard deviation associated with its respective distribution:
\begin{equation}
    \vepsilon^{j}_{s,*,scaled} = \frac{\vepsilon^{j}_{s,*}}{\sqrt{[\vSigma_{c}]_{(j,j)}}} \, .
    \label{scaled_counterfact}
\end{equation}
Then, the value of each counterfactual can be expressed as ``the addition (or subtraction) of standard deviations in accordance to $\vZ_{Y=h_{\vomega^h}(\vz)}$ that changes the label''. Examples of such explanations are given in Sections  \ref{examples} and \ref{examples_bis}.

\subsubsection{\updateremi{\CBMLIME and  \CBMshap }} \label{SHAPLIME-CBM}

\paragraph{\updateremi{\CBMLIME.}}

\updateremi{ To adapt LIME to the operation of CBMs, we begin with the image input $x$. From this input, we calculate the projection in the latent space $\vz= \begin{bmatrix} g_{\vomega^g}(\vx,\vk^{\remi{1}}) & \ldots & g_{\vomega^g}(\vx,\vk^{\remi{N}}) \end{bmatrix}$. Subsequently, following the LIME method, we train a surrogate model to approximate $h_{\vomega^h}$ in the vicinity of $\vz$ 
by training it on a dataset comprised of perturbed inputs around $\vz$. Finally, the explanation is derived from the importance weights of the resulting surrogate model.}

\paragraph{\updateremi{\CBMshap.}}

\updateremi{To adapt SHAP to CBMs, we also consider the projection $\vz= \begin{bmatrix} g_{\vomega^g}(\vx,\vk^{\remi{1}}) & \ldots & g_{\vomega^g}(\vx,\vk^{\remi{N}}) \end{bmatrix}$. To compute statistical values relevant to \CBMshap, we incorporate the projections of all the images in the training set. Given that the number of concepts can be high, we employ the Kernel version of SHAP (Kernel SHAP) to reduce the computational cost. The resulting explanation comprises the Shapley values associated with each of the concepts.}

\section{Experimental Setup} \label{setup}

\subsection{Datasets}

We evaluate our methods on ImageNet \citep{deng2009imagenet}, PASCAL-Part \citep{DBLP:journals/ia/DonadelloS16}, MIT Indoor Scenes dataset \citep{quattoni2009recognizing}, \updateremi{MonuMAI} \citep{lamas2021monumai} and Flowers102 \citep{Nilsback08}. In addition to these well-established datasets, we introduce a custom dataset \remi{named} Cats/Dogs/Cars dataset \updateremi{(Section \ref{Cats_Dogs})}. To construct this dataset, we concatenated two widely recognized datasets, namely, the Kaggle Cats and Dogs Dataset 
\remi{\citep{dogs-vs-cats}} and the Standford Cars Dataset \remi{\citep{KrauseStarkDengFei-Fei_3DRR2013}}. Subsequently, we filtered the dataset to contain images of white and black animals and cars exclusively. This curation resulted in six distinct subsets: ``Black Cars'', ``Black Dogs'', ``Black Cats'', ``White Cars'', ``White Dogs'', ``White Cats''. The primary objective of this dataset is to facilitate experiments under conditions of substantial data bias, such as classifying white cats when the training data has only encountered white dogs and black cats. \updateremi{Specifically, we refer to two distinct scenarios (Table \ref{dataset-composition}): one containing cats and cars of both colors (referred to as the unbiased setup) and the other one with only black cats and white cars (referred to as the biased setup). When none of theses scenarios are referred to (for example in the $Del$ values of Table \ref{perf_cats_dogs} and \ref{perf_pascalpart}), the complete dataset is used.} In its final form, the dataset comprises 6,436 images. \revisremi{Additional information is available in Section \ref{Cats_Dogs}}.

\begin{table}[h]
\updateremi{
\caption{\updateremi{\textbf{Cats/Dogs/Cars compositions used in our study}}}
\label{dataset-composition}
\centering
\begin{tabular}{lccc}
\toprule
 & \textbf{Complete dataset} & \textbf{Biased setup} & \textbf{Unbiased setup} \\
\midrule 
Composition & \thead{Black Cats, White Cats, Black Cars, \\  White Cats, Black Dogs, White Dogs} & Black Cats, White Cars & \thead{Black Cats, White Cats, \\ Black Cars, White Cars} \\
Num samples & 6436 & 2536 & 4031 \\
\bottomrule
\end{tabular}}
\end{table}

\subsection{Baselines} \label{baselines}

\paragraph{\revisremi{Classifiers.}}

\revisremi{Here, we present the baseline used for comparing performance in terms of inference (i.e., accuracy) with other algorithms performing classification tasks. First, we evaluate various classifiers, which are represented as the trainable component illustrated in Figure \ref{fig:baseline}. We denote the method proposed by Yan et al., which involves training a linear layer as a classifier from CLIP scores, as one of the baselines. As used notably in \citep{yan2023learning,yan2023robust}, this approach is referred to as linear probe. Additionally, we adopt LaBo, introduced by \citet{yang2023language}, which employs a class-concept matrix as the classifier of the CBMs. For all these methods, the same set of concepts is used, with a comprehensive description of the procedure for acquiring the concept set provided in Appendix \ref{concepts_sets}. We also incorporate previous CBMs such as Greybox XAI \citep{bennetot2022greybox}, X-NeSyL \citep{diaz2022explainable}, and the method proposed by \citet{morales4402768fusion} into our study, which provides explainability but requires \revisremi{additional} training data \revisremi{related} with annotated concepts. It is noteworthy that these methods, requiring training from images, entail significantly longer training times. Additionally, we assess methods widely used as classifiers that are not CBMs, including the use of CLIP as a zero-shot classifier. We also include ResNet \citep{he2016deep} and ViT \citep{dosovitskiy2020image} trained in a supervised manner using images as inputs. Further implementation details can be found in Section \ref{impl_details}.}

\paragraph{XAI Methods.}

\revisremi{To compare our proposed XAI methods (\methoddata, \methodsample, \CBMLIME, and \CBMshap) with existing works, we evaluated several methods, including those providing explanations at both the image and concept levels. At the concept level, we examined explanations provided by LaBo \citep{yang2023language} and Yan et al. \citep{yan2023robust}. For image-based explanations, we assessed SHAP \citep{lundberg2017unified} and LIME \citep{lime}. Further implementation details can be found in Section \ref{impl_details}.}

\subsection{CBM Quantitative Evaluation Process} \label{CBM_benchmark}

\updateremi{Assessing the quality of explanations has long been a challenge, given its subjective nature. While quantitative evaluation processes exist for evaluating performance in a general setup \citep{hedstrom2023quantus}, they may not be well-suited to the specific characteristics of CBMs. Consequently, we present a novel method designed to evaluate XAI solutions, tailored to CBMs. This method comprises two metrics: the Deletion metric, gauging faithfulness to the model, and the Detection metric, quantifying faithfulness to the data.}

\paragraph{\updateremi{Deletion metric.} }

\updateremi{The deletion metric is adapted from the methodology introduced by \citet{petsiuk2018rise}. The procedure involves taking each sample from the test set and nullifying (i.e., setting to the average value of the score across classes) a certain number, $N_{null}$, of concepts. We nullify the concepts based on their importance order as determined by each explanation method. If nullifying the concepts leads to a significant decrease in performance, we consider it a successful selection of concepts that influenced the classifier's decision. \updateremi{The intuition behind this idea is that if the concept is important, its absence will result in a loss of performance.} Given the different values of accuracy obtained by nullifying $N_{null}$ concepts, denoted by $Acc(N_{null})$, we deduce the deletion score $Del$ by computing the area under the curve of $Acc(N_{null})$. \updateremi{The interest is to probe the ability to correctly order the important concepts in the explanation}: }
\updateremi{
\begin{equation}
    Del = \frac{1}{Acc(0)}\sum_{i=1}^{N_{max}} \frac{Acc(i-1)+Acc(i)}{2} \, .
    \label{deletion_score}
\end{equation}}
\updateremi{Here, $N_{max}$ represents a hyperparameter that defines the maximum number of deletions considered. The selection of this hyperparameter is critical: it balances between capturing the metric's capacity to fit the model and ensuring that the computed inputs remain plausible. A default value of 9 is assigned to maintain this equilibrium. \updateremi{Note that we normalize the result by the maximum accuracy $Acc(0)$ to allow for better comparisons among different classifiers.}} 

\updateremi{The evaluation framework is structured into two setups: the first one uses the concept set outlined in Table~\ref{tableconcepts} (referred to as Set 1), while the second one uses an equivalent number of concepts randomly chosen from a dictionary of words (referred to as Set 2). } 

\paragraph{\updateremi{Detection metric.}}

\updateremi{
The detection metric evaluates the model's ability to identify relevant concepts. For each sample $s$, a set of ground truth concepts $\mathcal{S}_s$ to detect \updateremi{is defined by an oracle.} Then, we construct a set of concepts $\mathcal{T}_s$ consisting of the top $|\mathcal{S}_s|$ concepts in the explanation to be tested. 
The resulting detection metric is the average ratio of agreements among all samples in the test dataset:
\begin{equation}
    Det = \frac{1}{S} \sum_{s=1}^S \frac{|\mathcal{S}_s \cap \mathcal{T}_s|}{|\mathcal{S}_s|} ,
    \label{detection_score}
\end{equation}
where $S$ is the total number of samples in the dataset. 
The selection of ground truth concepts depends on the available options within each dataset, as elaborated in Section \ref{results_CBM_XAI}.
}

\section{Experiments}

\subsection{Assessing the Gaussian Prior Hypothesis} \label{gauss_hyp_test}

In this section, we investigate to which extent the Gaussian prior hypothesis (Equation \ref{equ:gaussian}) holds. To assess this, we use Chi-Square Q-Q plots \citep{chambers2018graphical, mahalanobis2018generalized}, \textcolor{black}{a normality assessment method adapted to multidimensional data.}  We display Chi-Square Q-Q plots on the conditional distribution $({\vZ \mid Y=c})$, with data sourced from PASCAL-Part and class $c$ = ``aeroplane''.   %
 \remi{First, we compute a set of concepts adapted to the output class, following the procedure described in Appendix~\ref{concepts_sets}.  
Subsequently, we perform the same experiment with a set of words specifically dedicated to the class of interest (Figure \ref{fig:Aeroplanesuited}). In addition, we show a visualization with randomly chosen concepts from the PASCAL-Part set of concepts, as depicted in Figure \ref{fig:Aeroplane5}.}

\begin{figure}[H]
     \centering
     \begin{subfigure}[b]{0.4\textwidth}
         \centering
         \includegraphics[width=\textwidth]{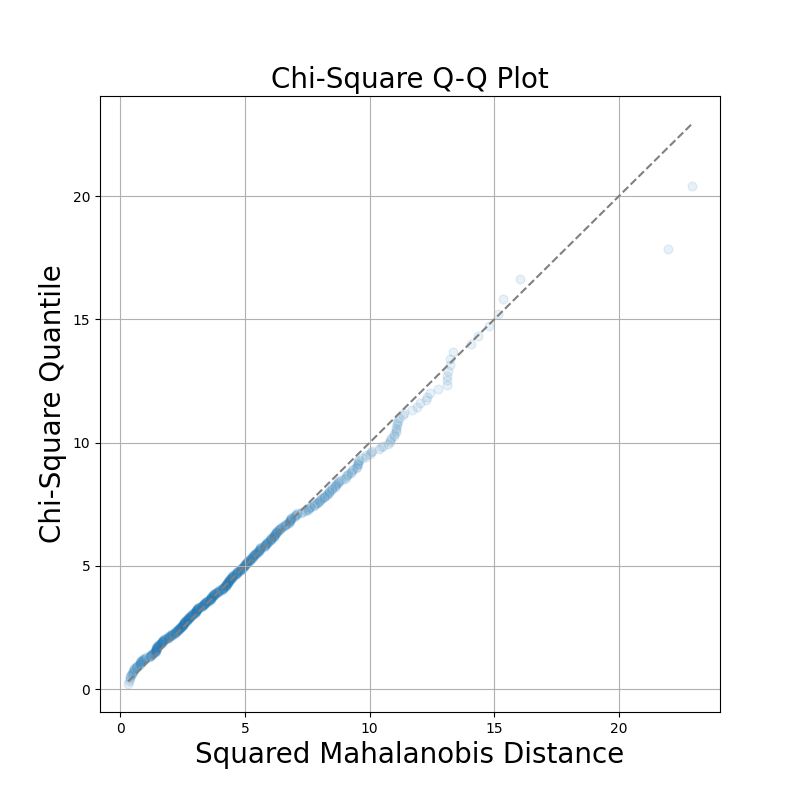}
         \caption{\textbf{\remi{CBM with a set of concepts related to the label ``Aeroplane''}}: [Winged, Jet engines, Tail fin, Fuselage, Landing gear] }
         \label{fig:Aeroplanesuited}
     \end{subfigure}
     \hfill
     \begin{subfigure}[b]{0.4\textwidth}
         \centering
         \includegraphics[width=\textwidth]{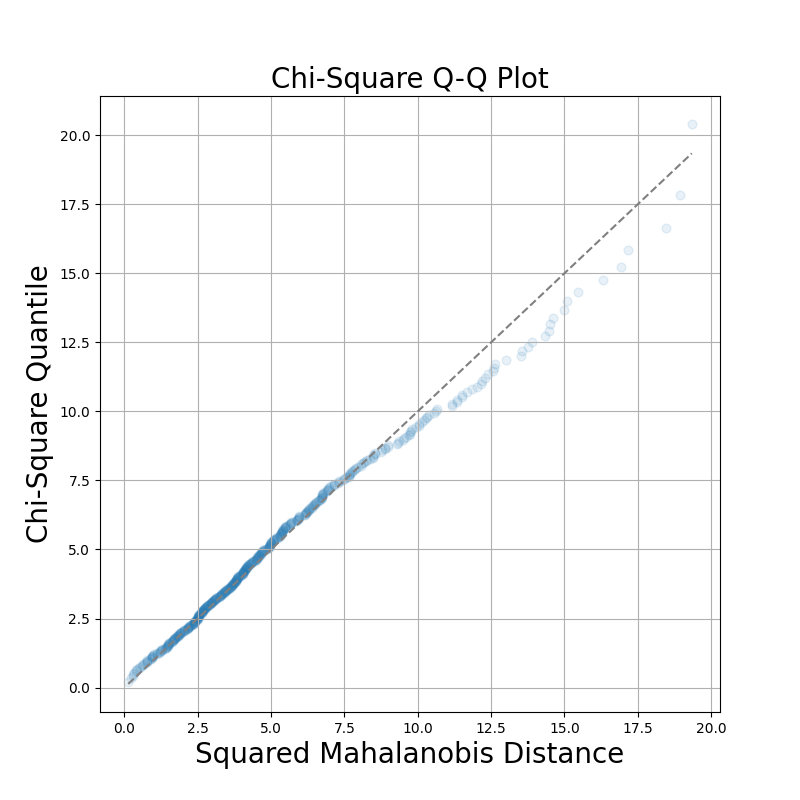}
         \caption{\textbf{\remi{CBM with a set of concepts unrelated to the label ``Aeroplane''}}: [Furry, Equine, Container or pot, Saddle or seat, Multi-doored]}
         \label{fig:Aeroplane5}
     \end{subfigure}
        \caption{\revisremi{\textbf{Multivariate Q-Q plot illustrating the Gaussian fit for conditional modeling of the latent space, given a fixed label.} The plot is specifically for PASCAL-Part images labeled as "Aeroplane".}}
        \label{fig:aeroplanesuit}
\end{figure}

\remi{Notably,} employing a less precise set of concepts can introduce disturbances, as shown in the observations. As indicated in Section \ref{gauss_modeling}, the ambiguity associated with certain concepts, such as ``Multi-doored'', can lead to bimodal distributions (an airplane having one, multiple, or no doors). \remi{In Figure \ref{fig:z_multidoored}, we show the histogram of the clip scores $\vz$ of the images that have the class ``aeroplane''. Compared to the histogram of less ambiguous cases (like in Figure~\ref{fig:Zi1}), we observe that the histogram presents anomalies, especially around the mean.}

\begin{figure}  
    \centering
    \includegraphics[width=0.5\textwidth]{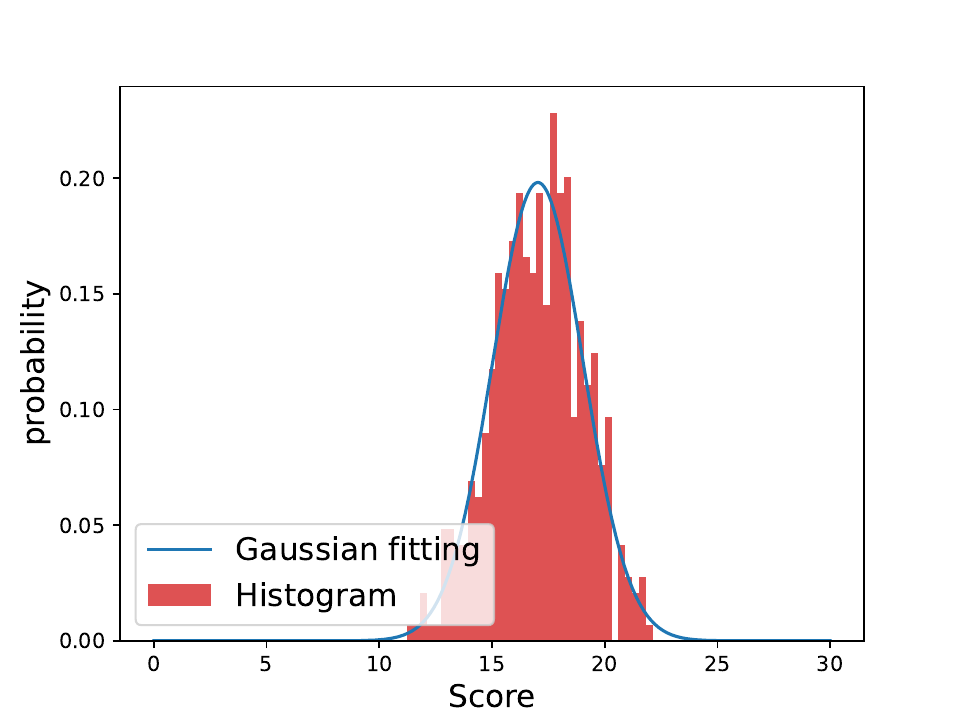}
    \caption{\remi{\textbf{Histogram and Gaussian fitting of CLIP scores $\vz$ of the attribute ``Multi-doored'', for images that have the class ``aeroplane''.}}}
    \label{fig:z_multidoored}
\end{figure}

\remi{Additionally, we performed a similar experiment with larger sets of concepts. We selected random subsets containing 10, 15, and 20 concepts from the PASCAL-Part set listed in Table \ref{tableconcepts}. The outcomes are displayed in Figure \ref{fig:aeroplanemulc}.} In this scenario, it becomes obvious that the Gaussian assumption is increasingly violated as the number of concepts grows. Indeed, as the number of concepts increases, the likelihood of encountering ambiguous concepts in each sample significantly increases, which undermines the feasibility of modeling the data as an unimodal Gaussian distribution.

\begin{figure}[H]
     \centering
     \begin{subfigure}[b]{0.3\textwidth}
         \centering
         \includegraphics[width=\textwidth]{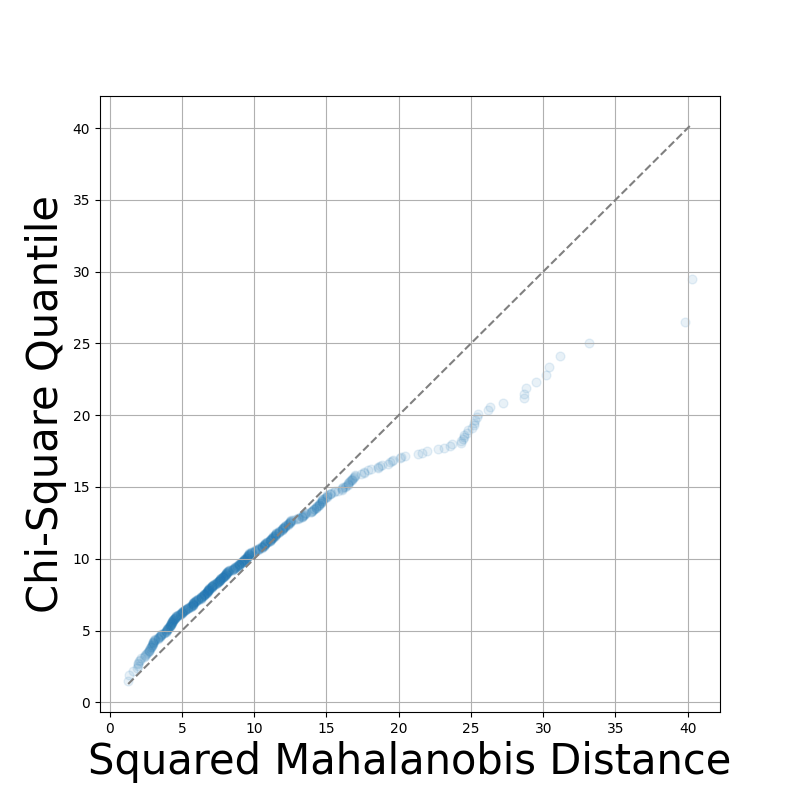}
         \caption{\remi{CBM with} 10 concepts}
         \label{fig:aeroplane10c}
     \end{subfigure}
     \hfill
     \begin{subfigure}[b]{0.3\textwidth}
         \centering
         \includegraphics[width=\textwidth]{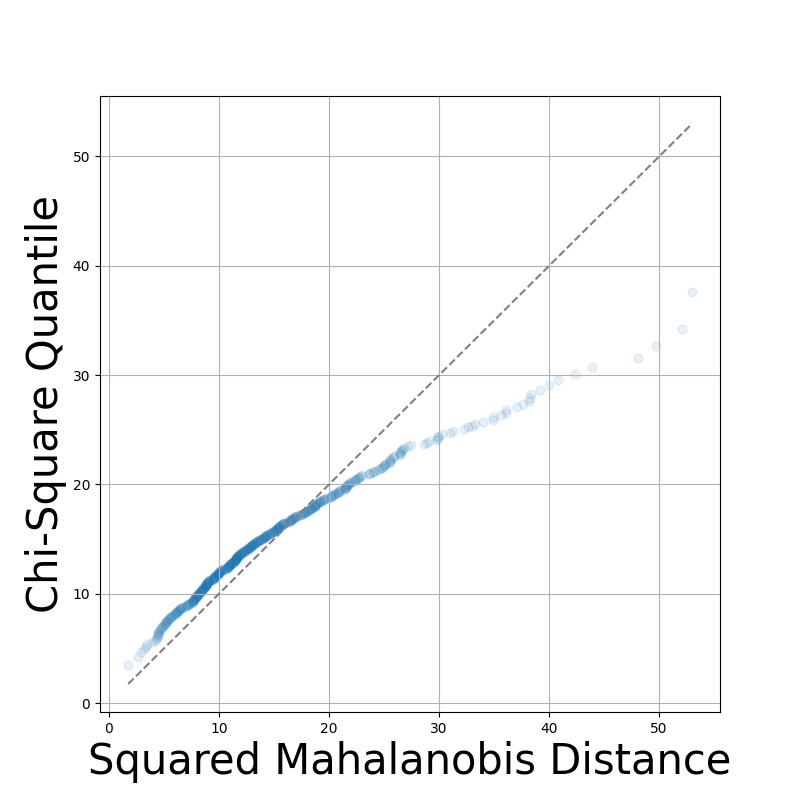}
         \caption{\remi{CBM with} 15 concepts}
         \label{fig:aeroplane15c}
     \end{subfigure}
     \hfill
     \begin{subfigure}[b]{0.3\textwidth}
         \centering
         \includegraphics[width=\textwidth]{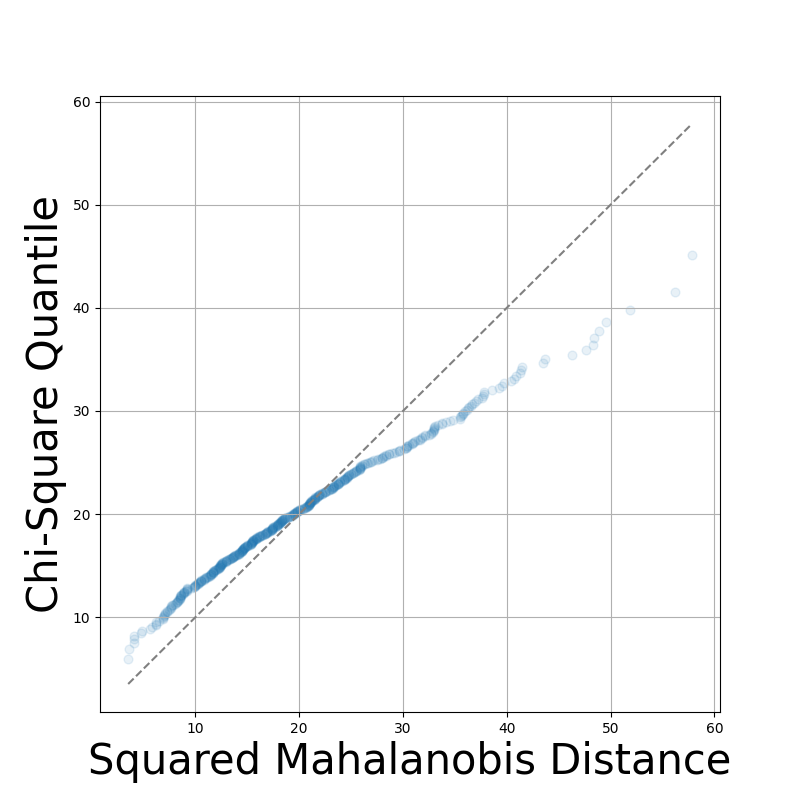}
         \caption{\remi{CBM with} 20 concepts}
         \label{fig:aeroplane20c}
     \end{subfigure}
        \caption{\revisremi{\textbf{Multivariate Q-Q plot illustrating the Gaussian fit for conditional modeling of the latent space, given a fixed label.} The plot is specifically for PASCAL-Part images labeled as "Aeroplane".}}
        \label{fig:aeroplanemulc}
\end{figure}

\subsection{Assessing the Accuracy}

\paragraph{Comparison with other classifiers. }

In this section, we undertake a comparative evaluation of the performance of our \method classifier in contrast to \revisremi{classifiers used in the baseline (see Section \ref{baselines}).} \updateremi{Results of these experiments are available in Table \ref{sample-table}. The image encoder used in all our experiments based on CLIP-based CBMs is $ViT-L/14@336px$ provided by  \revisremi{\href{https://github.com/openai/CLIP}{OpenAI’s public repository.}}}

\begin{table}[H]
\caption{\updateremi{\textbf{Test set accuracy.} On the top are methods that require full training on images, and on the bottom, CBMs. Because Greybox XAI, X-NeSyL and \citet{morales4402768fusion} require concept annotations, their results on MIT scenes and ImageNet are not available.}}
\label{sample-table}
\begin{center}
\updateremi{
\begin{tabular}{clllll}
\toprule
&\multicolumn{1}{c}{\bf Method}  &\multicolumn{1}{c}{\bf PASCAL-Part} $\uparrow$  &\multicolumn{1}{c}{\bf MIT scenes} $\uparrow$ &\multicolumn{1}{c}{\bf \updateremi{MonuMAI}} $\uparrow$ &\multicolumn{1}{c}{\bf ImageNet} $\uparrow$
\\ \midrule \\
\multirow{2}{*}{\rotatebox[origin=c]{90}{Non XAI}}&\thead{Resnet 50 \\ \citep{he2016deep}}  &0.84&0.86&0.95&0.80\\
&\thead{ViT-L 336px \\ \citep{dosovitskiy2020image}}    &0.95&0.94&0.94&0.85 \\
\\ \midrule \\
\multirow{10}{*}{\rotatebox[origin=c]{90}{CBMs}}&\textit{\thead{ Greybox XAI \\ \citep{bennetot2022greybox}}} & 0.88 & - & 0.94 & - \\
& \textit{\thead{X-NeSyL \\ \citep{diaz2022explainable}}} & 0.82 & - & 0.90 & - \\
& \textit{\citet{morales4402768fusion}} & 0.86 & - & \textbf{0.98} & - \\
&\textit{\thead{ CLIP (zero-shot) \\ \citep{radford2021learning}}}   &0.81&0.63&0,52&0.76\\
&\textit{\thead{LaBo \\ \citep{yang2023language}}}  &0.83&0.75&0.74&0.69\\
&\textit{\thead{\citet{yan2023robust}, \\ \citet{yan2023learning} }} &\textbf{0.91}&0.77&0.77 & \textbf{0.81} \\
&\thead{\textit{\method} \updateremi{(ours)}}  &0.90&\textbf{0.81}&0.89&0.60 
\end{tabular}}
\end{center}
\end{table}

\remi{Our findings reveal that fine-tuning using CBM, either as a linear or a QDA probe, significantly improves performance, as evidenced by the increase in accuracy compared to using CLIP as a zero-shot classifier. This improvement is particularly pronounced on datasets dedicated to specialized tasks, such as \updateremi{MonuMAI}. Additionally, CLIP CBMs tend to achieve performances comparable to networks trained from raw images, making these models appealing for image classification due to their reduced training cost in both time and resources, as well as their interpretability.
Notably, \method demonstrates competitive performance, when compared to linear probe techniques.}

\revisremi{However, \method faces challenges in delivering competitive results for datasets like PASCAL-Part and ImageNet, which feature a significantly larger number of labels and concepts. %
The accuracy decline could be attributed to the use of a broader set of concepts tailored specifically for these datasets. This challenges the Gaussian assumption and potentially impacts the effectiveness of our classifier. Notably, there appears to be a correlation between the tests conducted on the different datasets and the observations depicted in Figure \ref{fig:aeroplanemulc}. For instance, while MonuMAI uses 20 concepts, MIT scenes use 25, PASCAL-Part employs 80, and ImageNet involves a staggering 5000 concepts.}

\paragraph{Influence of the number of concepts. }

To assess the influence of the number of concepts $C$ on the accuracy, we conduct experiments on the PASCAL-Part dataset. These experiments involve testing accuracy for both \remi{QDA} and linear probe with concept sets of different lengths, all generated following the methodology described in Section \ref{concepts_sets}. As seen in Figure~\ref{fig:ablation_nconcepts}, \method \remi{performs better than linear probe} when the number of concepts is relatively low. In contrast, the linear probe outperforms \method as the number of concepts increases. This observation aligns with the insights gained from the discussion on Gaussian modeling in Section~\ref{gauss_hyp_test}, where a higher number of concepts challenges the grounding assumptions of \method.

\begin{figure}[h]
    \centering
    \includegraphics[width=0.5\textwidth]{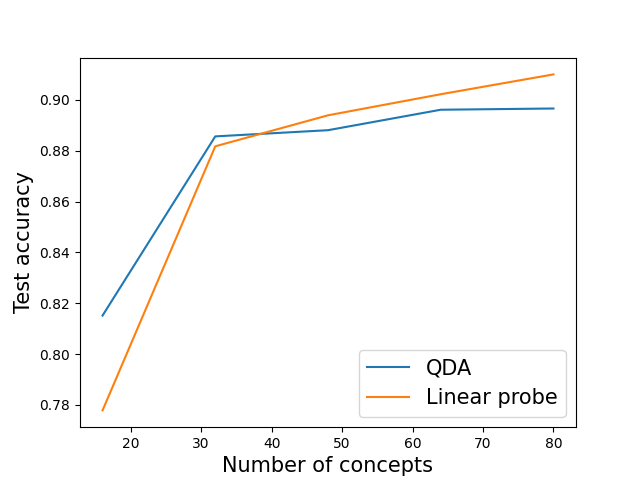}
    \caption{\textbf{Accuracy of the classifiers \updateremi{on PASCAL-Part} for different concept sizes.} QDA refers to the use of \method as a classifier. Linear probe refers to the use of a linear layer as a classifier.}
    \label{fig:ablation_nconcepts}
\end{figure}

\subsection{\updateremi{Assessing the Sample-wise Interpretability of XAI Methods}}

\updateremi{The objective of this subsection is to present our method within the context of current XAI methods (see Section \ref{baselines}) applied to CBMs. A comparative overview of the distinctive features of these methods is presented in Table \ref{feature XAI methods}. Since Greybox XAI, X-NeSyL and \citet{morales4402768fusion} require additional annotations, we chose to omit these methods from our study.
}

\begin{table}[H]
\caption{\updateremi{\textbf{Comparison of the different features of the XAI methods.} Top: existing methods. Bottom: ours. \textit{Dataset-wise} refers to methods that provide dataset-wise explanations. \textit{Sample-wise} refers to methods that provide sample-wise explanations. \textit{Closed-form solution} refers to methods that do not require an optimization process to produce explanations. \textit{Computable from weights} refers to methods that produce explanations from the model parameters and input values. \textit{Image level} refers to methods that produce explanations from the image input. \textit{Concept level} refers to methods that produce explanations from the concept input. \textit{Compatible with \method} refers to methods that can produce explanations with \method as a classifier. \textit{Compatible with CLIP linear probe} refers to methods that can produce explanations with CLIP linear probe as a classifier.}}

\label{feature XAI methods}
\centering
\updateremi{
\begin{tabular}{|c|c|c|c|c|c|c|c|c|}
\hline
\bf Metric & \textit{\thead{Dataset \\ wise}} & \textit{\thead{Sample \\ wise}} & \textit{\thead{Closed-form \\ solution}} & \textit{\thead{Computable \\ from weights}} & \textit{\thead{Image \\ level}} &  \textit{\thead{Concept \\ level}} & \textit{\thead{Compatible \\ with \\ \method}} & \textit{\thead{Compatible \\ with CLIP \\ linear probe}} \\
\hline
\bf GradCAM & & \checkmark & & & \checkmark & & \checkmark & \checkmark \\
\hline
\bf LIME & & \checkmark & & & \checkmark & & \checkmark & \checkmark \\
\hline
\bf SHAP & & \checkmark & & & \checkmark  & &  \checkmark & \checkmark \\
\hline
\bf LaBo & \checkmark & & \checkmark & \checkmark & & \checkmark &  & \\
\hline
\bf Yan et al & \checkmark & \checkmark & \checkmark & \checkmark & & \checkmark & & \checkmark \\
\hline
\bf Greybox XAI & & \checkmark & & & \checkmark & \checkmark & & \\
\hline
\bf X-NeSyL & & \checkmark & & & \checkmark & \checkmark & & \\
\hline
\bf Morales et al & & \checkmark & & & & \checkmark & & \\
\hhline{|=|=|=|=|=|=|=|=|=|}
\bf \CBMLIME & & \checkmark & & & & \checkmark & \checkmark & \checkmark \\
\hline
\bf \CBMshap & & \checkmark & & & & \checkmark & \checkmark & \checkmark \\
\hline
\bf QDA-CBM & \checkmark & \checkmark & \checkmark  & \checkmark & & \checkmark & \checkmark & \\
\hline
\end{tabular}}
\end{table}

\paragraph{\updateremi{Qualitative analysis.}} \label{qualitative_pascalpart}

\updateremi{To facilitate the comparison of various explanations generated for the same sample in the dataset, we computed the results for one image from PASCAL-Part \citep{DBLP:journals/ia/DonadelloS16}. The results are categorized into two sections: image-level explanations and concept-level explanations. The subsequent section focuses on explaining \updateremi{the model's prediction process given} the image \ref{image_person} \updateremi{as the input}, which is labeled as ``person''. The top two predicted labels by the classifier for this image are ``person'' and ``potted plant''. Note that additional samples are available in Section \ref{examples_bis}.}

\updateremi{Image-level explanations are presented in Figure \ref{fig:sample_person_image}. For each of these samples, a weight is assigned to each pixel. 
All the results are computed by applying these post-hoc explanation methods on \method.
Additional implementation and computation details can be found in \ref{impl_details}. While the results are easy to comprehend, criticism may arise due to potential misunderstandings regarding the specific focus of the method. While these methods successfully highlight the object of interest, it is challenging to discern the exact pattern that the method emphasizes. Another observation is that, compared to classical classifiers, the results on CLIP-based CBMs are more imprecise, highlighting large areas of interest. }

\begin{figure}[h]
     \centering
     \begin{subfigure}[t]{0.24\textwidth}
         \centering
         \includegraphics[width=\textwidth]{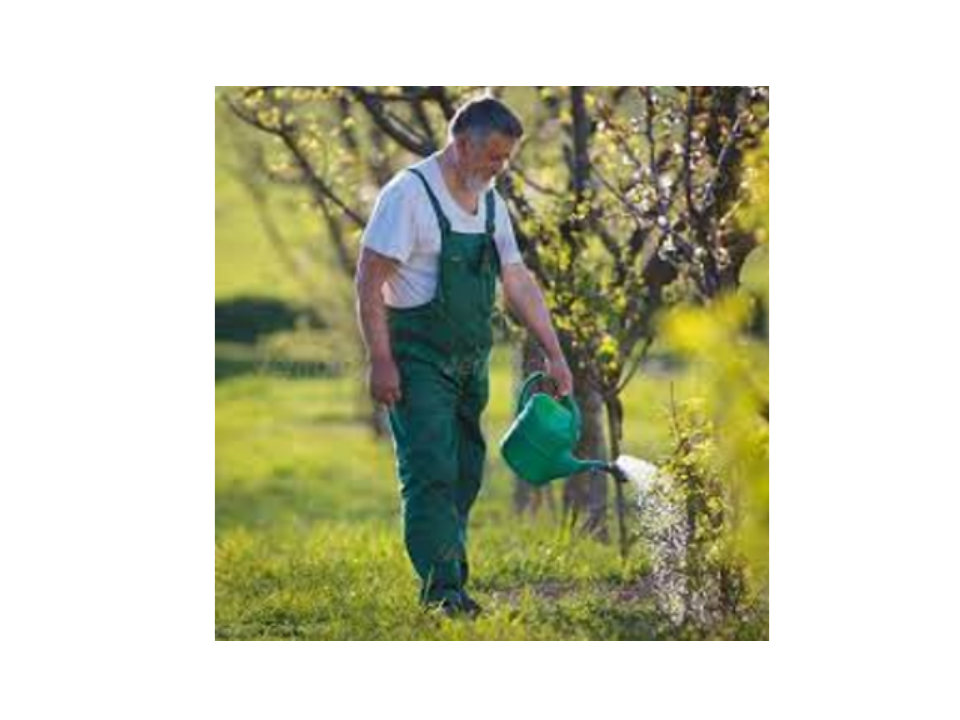}
         \caption{\textbf{\updateremi{Input image}}}
         \label{image_person}
     \end{subfigure}
     \hfill
     \begin{subfigure}[t]{0.24\textwidth}
         \centering
         \includegraphics[width=\textwidth]{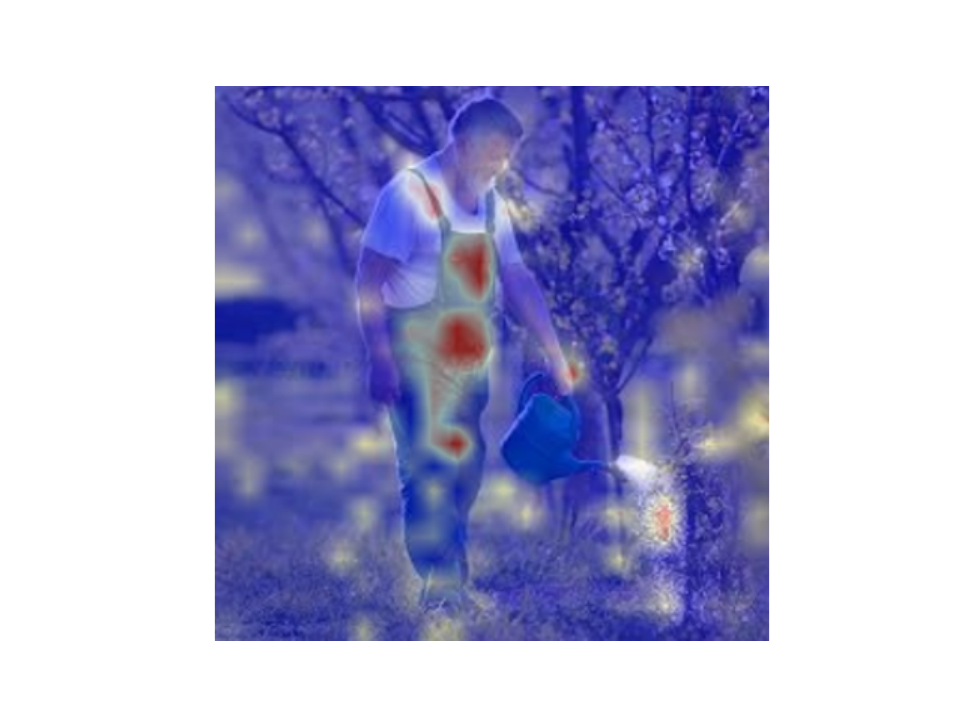}
         \caption{\textbf{\updateremi{GradCAM Explanation}} }
     \end{subfigure}
     \hfill
     \begin{subfigure}[t]{0.24\textwidth}
         \centering
         \includegraphics[width=\textwidth]{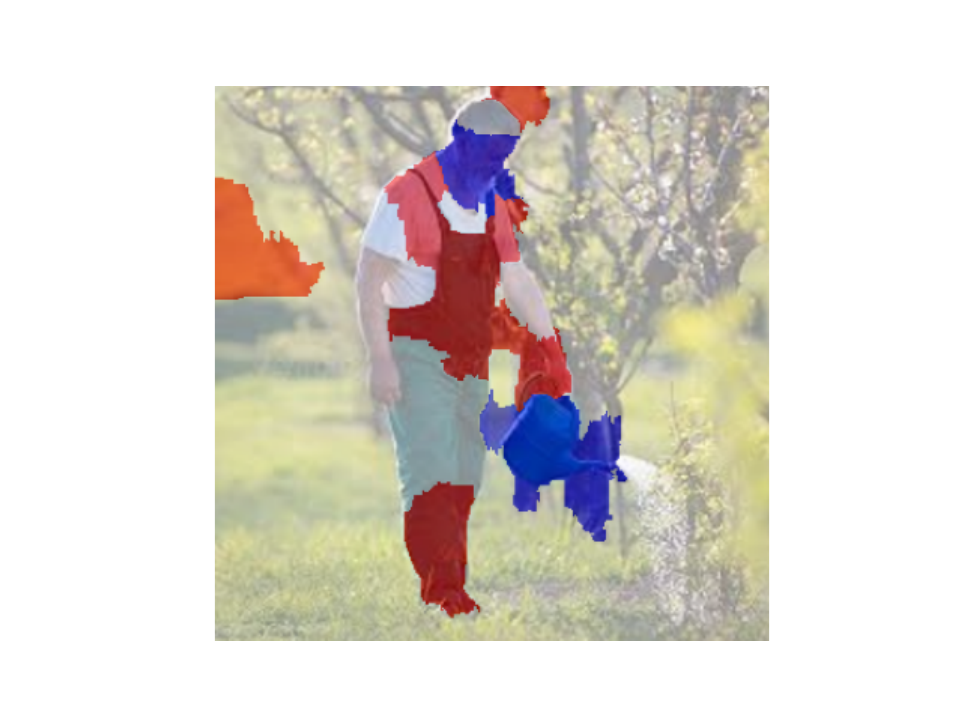}
         \caption{\textbf{\updateremi{LIME Explanation}}}
     \end{subfigure}
     \hfill
     \begin{subfigure}[t]{0.24\textwidth}
         \centering
         \includegraphics[width=\textwidth]{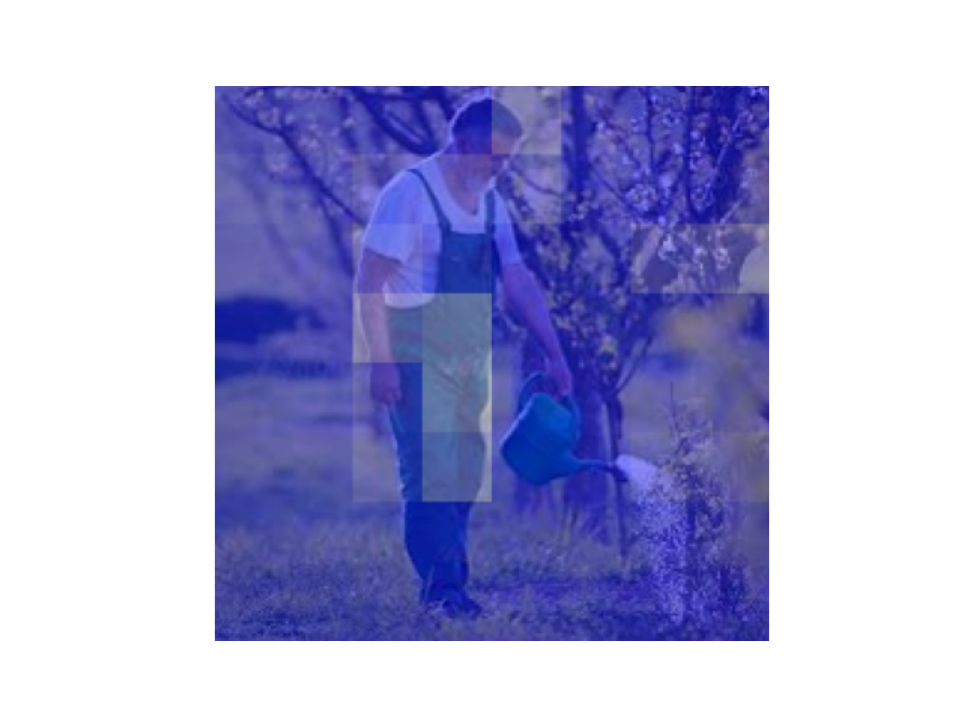}
         \caption{\textbf{\updateremi{SHAP explanation}}}
     \end{subfigure}
        \caption{\updateremi{\textbf{Sample-wise explanations (image level).} Note that the classifier classified the image correctly.}}
        \label{fig:sample_person_image}
\end{figure}

\updateremi{Concept-level sample explanations are computed in Figure \ref{fig:sample_person_concept}. On a sample-wise basis, \CBMLIME and \CBMshap are computed using \method. Given that the method is model-specific, the Yan et al. method is computed on a linear classifier.}

\begin{figure}[htb!]
     \centering
     \begin{subfigure}[t]{0.45\textwidth}
         \centering
         \includegraphics[width=\textwidth]{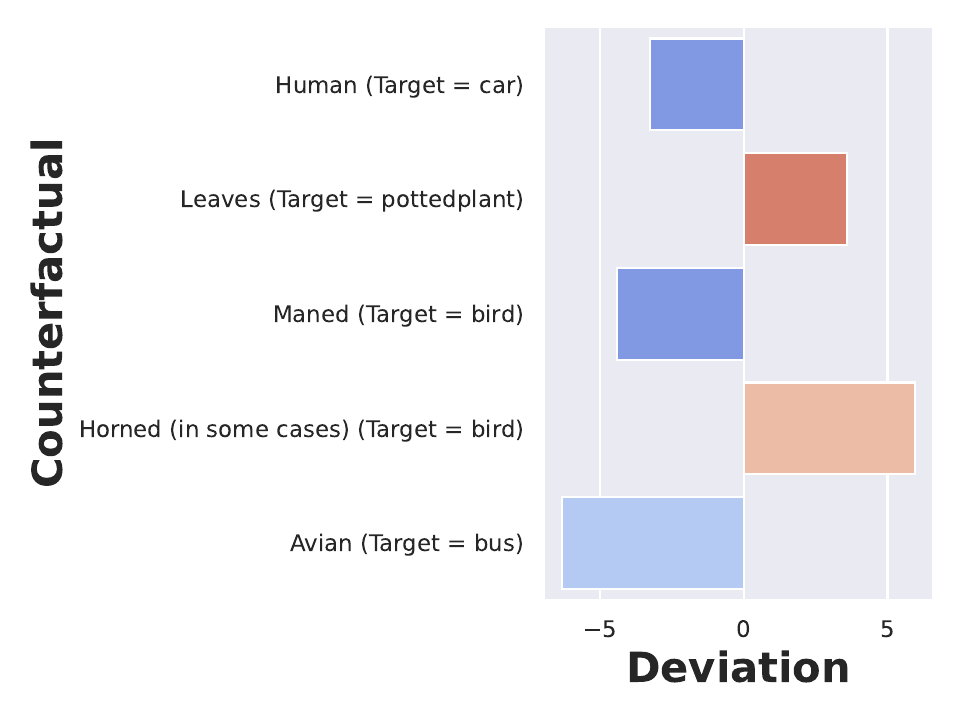}
         \caption{\updateremi{\textbf{\methodsample explanation.} The first row can be read as follows: removing a little of the concept ``Human'' to the vector $\vz$ would change the label to ``Car''.}}
     \end{subfigure}
     \hfill
     \begin{subfigure}[t]{0.45\textwidth}
         \centering
         \includegraphics[width=\textwidth]{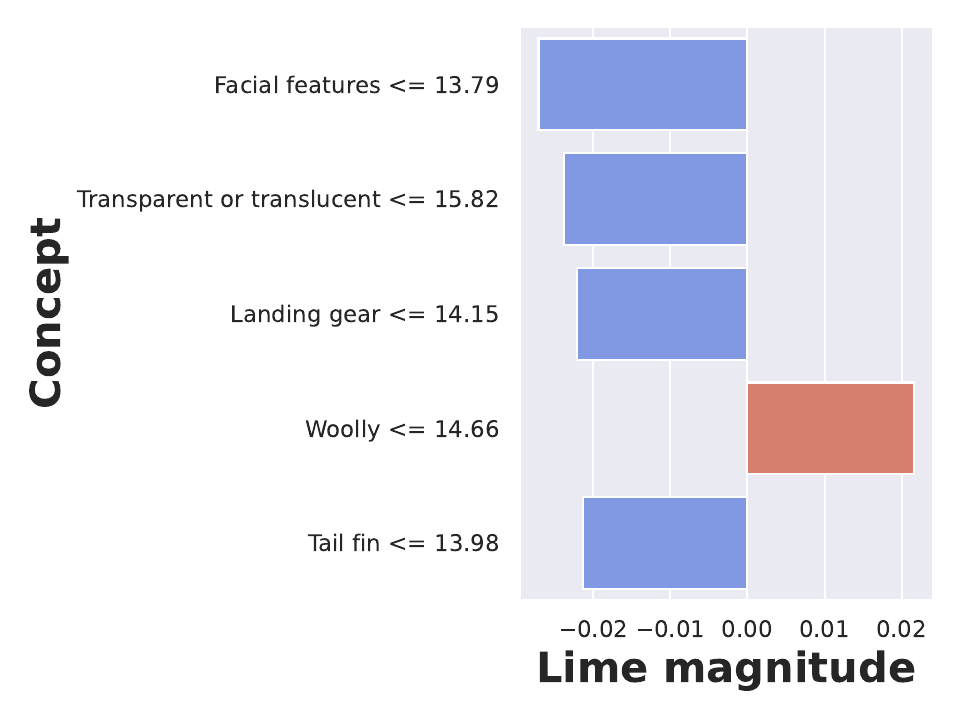}
         \caption{\updateremi{\textbf{\CBMLIME explanation.} The first row can be read as follows: the fact that the concept score of ``Facial features'' is below 13.79 has a negative impact on the predicted label.}}
     \end{subfigure}
     \hfill
     \begin{subfigure}[t]{0.45\textwidth}
         \centering
         \includegraphics[width=\textwidth]{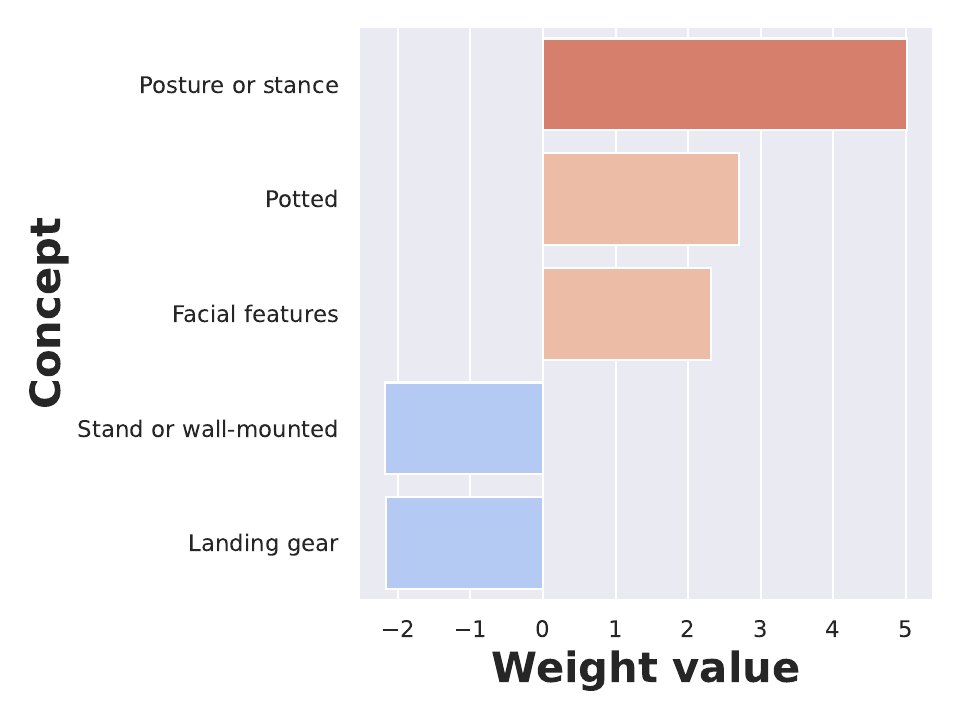}
        \caption{\updateremi{\textbf{Yan et al.~explanation (sample).} The first row can be read as follows: the concept ``Posture or stance'' has a positive impact on the predicted label.} %
        }
     \end{subfigure}
     \hfill
     \begin{subfigure}[t]{0.9\textwidth}
         \centering
         \includegraphics[width=\textwidth]{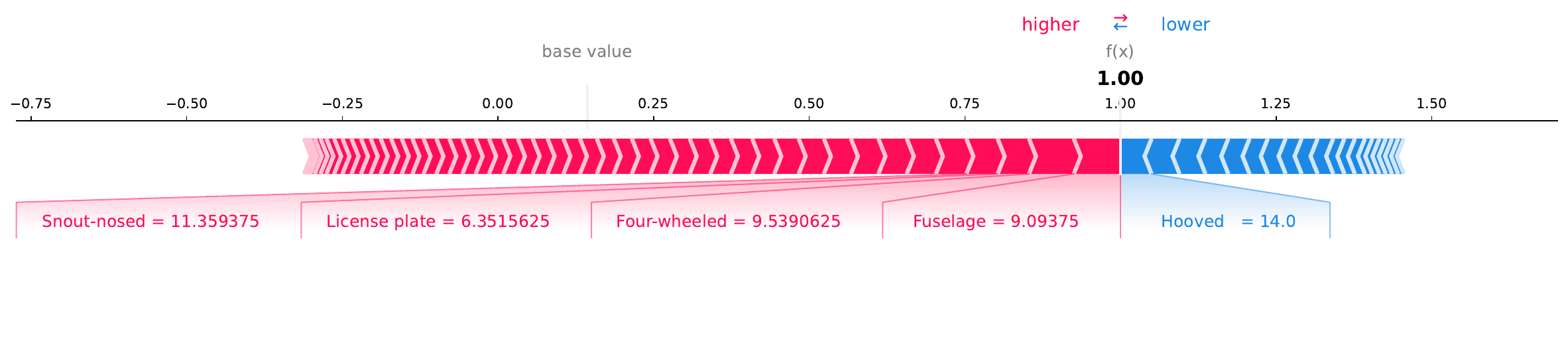}
         \caption{\updateremi{\textbf{\CBMshap explanation.} The highest value can be read as follows: the concept ``Fuselage'' has a positive impact on the predicted label.}}
     \end{subfigure}
     \hfill
    \caption{\updateremi{\textbf{Sample-wise explanations (concept level) of the image \ref{image_person}.} Except for \CBMshap, only the top 5 concepts are displayed. Note that the classifier correctly labeled the image.}}
     \label{fig:sample_person_concept}
     \hfill
\end{figure}

\updateremi{First, we notice that there is a high diversity of concepts displayed among the methods. Overall, there are concepts associated with the prediction label (``Posture or stance'', ``Human'') and concepts that arise from the specificities of the image (``Leaves''). Some concepts seem irrelevant (``Landing Gear'', ``Windshield'', ``Wooly''), which may be explained by the fact that their scores are useful to the classification process despite their absence in the image. %
In addition, it is worth noting that the context provided by the counterfactual analysis helps to unveil insights into the reasons behind the explanation.
For example, when taking into account the concept ``Leaves'' seems inappropriate, \methodsample explanations suggest that adding that concept influences the prediction of the class ``potted plant,'' indicating that this concept is important because it reveals the decision process behind predicting a person rather than a plant.}

\paragraph{\updateremi{Quantitative analysis.}} \label{results_CBM_XAI}

\updateremi{To conduct a more comprehensive analysis beyond qualitative observations, we subject the various methods to \updateremi{quantitative evaluations}. We apply the procedure outlined in Section \ref{CBM_benchmark} to the PASCAL-Part and Cats/Dogs/Cars datasets. In addition to \CBMLIME, \CBMshap, and QDA-CBM explanations, we also tested the explanations provided by LaBo and Yan et al. 
For the selection of $S_s$ \revisremi{in Equation \ref{detection_score}}, in the case of the PASCAL-Part dataset, we opted to designate the concepts to detect
as the concepts that are associated with the two highest probabilities of the model's inference
(refer to the Classes/Concepts association in Table \ref{tableconcepts}). In the Cats/Dogs/Cars dataset, we chose to identify the concepts ``Black'' and ``White'' in the biased setup to \revisremi{spot} potential biases. %
\revisremi{The resulting $Det$ score for this dataset is calculated by considering $S_s = \left[Black,White\right]$ for classifiers trained with biased setups as described in \ref{dataset-composition}. To reduce uncertainties, we do not only take the $Det$ score for the set [Black Cats, White Cars] but also for all possible similar biased binary classification tasks ([Black Cats, White Cars], [Black Dogs, White Cars], etc.).}
Additionally, we incorporated inference time as a parameter in our experiments \updateremi{referring to the time taken to produce explanations inferences on the entire validation set}. Results on PASCAL-Part are displayed in Table \ref{perf_pascalpart} and results on Cats/Dogs/Cars are displayed in Table \ref{perf_cats_dogs}.}

\begin{table}[ht!]
\caption{\updateremi{\textbf{Quantitative results for different XAI methods (Cats/Dogs/Cars dataset).} Top: existing methods. Bottom:
ours. Del refers to the deletion score (\ref{deletion_score}) on a random set of concepts (Set 1) and the 
set of concepts defined in \ref{tableconcepts} (Set 2). Det refers to the detection score (\ref{detection_score}). The explanations of the three upper methods being direct, they present no computation time.}}
\label{perf_cats_dogs}
\centering
\updateremi{
\begin{tabular}{lcccc}
\toprule
\textbf{Method} & \textbf{Del (Set 1)} $\downarrow$ & \textbf{Del (Set 2)} $\downarrow$ & \textbf{Det} $\uparrow$ & \textbf{Inference time (im/s)} $\downarrow$ \\
\midrule
\textit{Yan et al.} & 0.6253 & 0.7522 & 0.3183 & / \\
\textit{LaBo} & 0.5971 & 0.6224 & 0.2140& / \\
\textit{Random} & 0.8278 & 0.7474 & 0.1171& / \\
\\ \hline
\\
\methodsample (ours) & 0.7609 & 0.5820 & 0.2724 & \textbf{3.01} \\
\textit{\CBMLIME} (ours)& 0.5397 & 0.5646 & \textbf{0.4042} & 76.69 \\
\textit{\CBMshap} (ours)& \textbf{0.4821} & \textbf{0.3831} & 0.3696 & 256.76 \\
\bottomrule
\end{tabular}}
\end{table}

\begin{table}[ht!]
\caption{\updateremi{\textbf{Quantitative results for different XAI methods (PASCAL-Part dataset).} Top: existing methods. Bottom:
ours. Del refers to the deletion score (\ref{deletion_score}) on a random set of concepts (Set 1) and the usual set of concepts defined in \ref{tableconcepts} (Set 2). Det refers to the detection score (\ref{detection_score}). The explanations of the three upper methods being direct, they present no computation time.}}
\label{perf_pascalpart}
\centering
\updateremi{
\begin{tabular}{lcccc}
\toprule
\textbf{Method} & \textbf{Del (Set 1)} $\downarrow$ & \textbf{Del (Set 2)} $\downarrow$ & \textbf{Det} $\uparrow$ & \textbf{Inference time (im/s)}$\downarrow$ \\ %
\midrule
\textit{Yan et al.} & 0.6968 & 0.8824 & \textbf{0.4157} & / \\
\textit{LaBo} & \textbf{0.4446} & 0.8822 & 0.4213 & / \\
\textit{Random} & 0.8452 & 0.8999 & 0.1258 & / \\
\\ \hline
\\
\methodsample (ours) & 0.7983 & 0.8189 & 0.1564 & \textbf{2341.12} \\
\textit{\CBMLIME} (ours) & 0.7313 & 0.8511 & 0.0873 & 2857.28 \\
\textit{\CBMshap} (ours) & 0.5698 & \textbf{0.5510} & 0.2679 & 7207.58 \\
\bottomrule
\end{tabular}}
\end{table}

\updateremi{Regarding the Cats/Dogs/Cars dataset, our methods exhibit superior performance compared to the state of the art, emphasizing the reliability of \method on datasets with a low number of concepts. Particularly, QDA-CBM demonstrates significantly faster inference times than \CBMLIME and \CBMshap, albeit at the expense of slightly lower deletion and detection scores. On datasets with a higher number of concepts and classes, such as PASCAL-Part, existing methods maintain higher scores. 
}

\subsection{\updateremi{Assessing the Dataset-wise Interpretability of XAI Methods}} \label{examples}

\subsubsection{\revisremi{Evaluation of \methoddata on the PascalPART dataset}}

\updateremi{Following the setup expressed in Section \ref{qualitative_pascalpart}, for dataset-wise explanations (Figure \ref{dataset_wise_img}), we present the explanations for the \methoddata method, computed on \method, the LaBo explanation, computed on the LaBo classifier, and the Yan et al.~explanation, computed on a linear classifier. \revisremi{Additional samples are available in Section \ref{examples_bis}}}

\begin{figure}[htb!]
     \centering
     \hfill
     \begin{subfigure}[t]{0.49\textwidth}
         \centering
         \includegraphics[width=\textwidth]{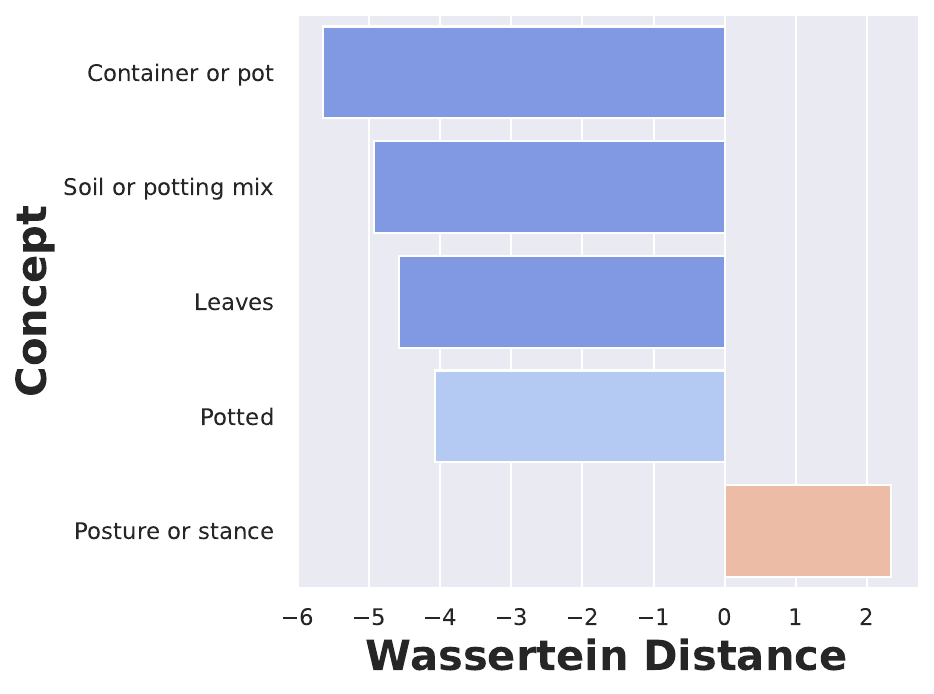}
         \caption{\updateremi{\textbf{\methoddata explanation associated with the top two predictions (``person'' and ``potted plant'').} The first row can be read as follows: the distribution of the concept ``Container or pot'' for the class ``person'' is mostly to the left (smaller values) of the one for the class ``potted plant''.}
         }
     \end{subfigure}
     \hfill
     \begin{subfigure}[t]{0.49\textwidth}
         \centering
         \includegraphics[width=\textwidth]{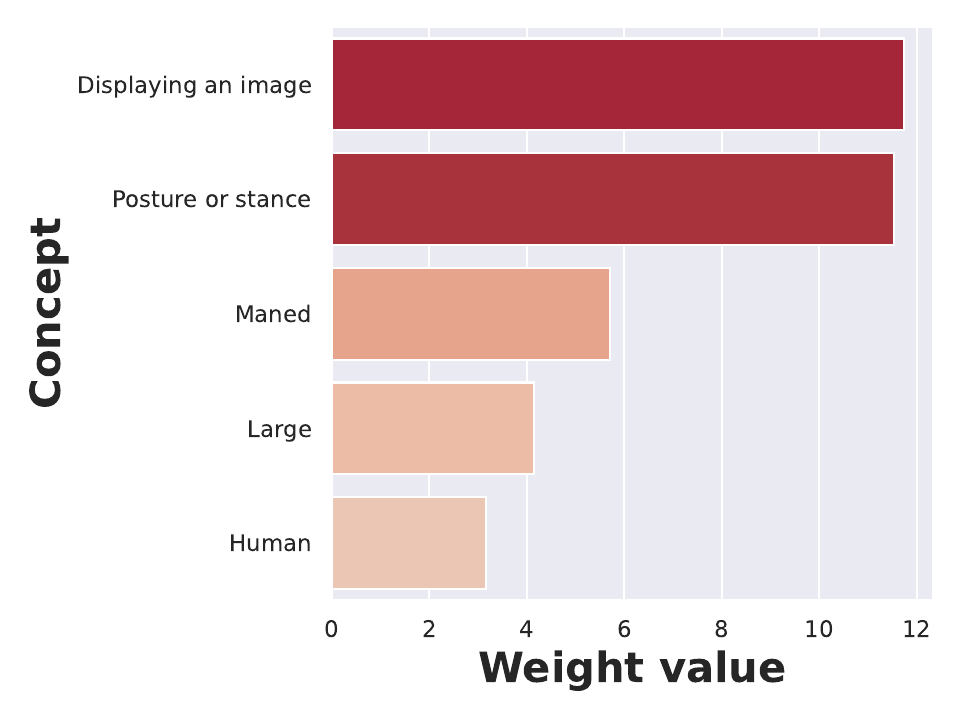}
         \caption{\updateremi{\textbf{LaBo explanation associated with the top prediction (``person'').} The first row can be read as follows: the weight associated with the concept ``Displaying an image'' has the highest value among the weights related to the class ``person''.}}
     \end{subfigure}
     \hfill
     \begin{subfigure}[t]{0.49\textwidth}
         \centering
         \includegraphics[width=\textwidth]{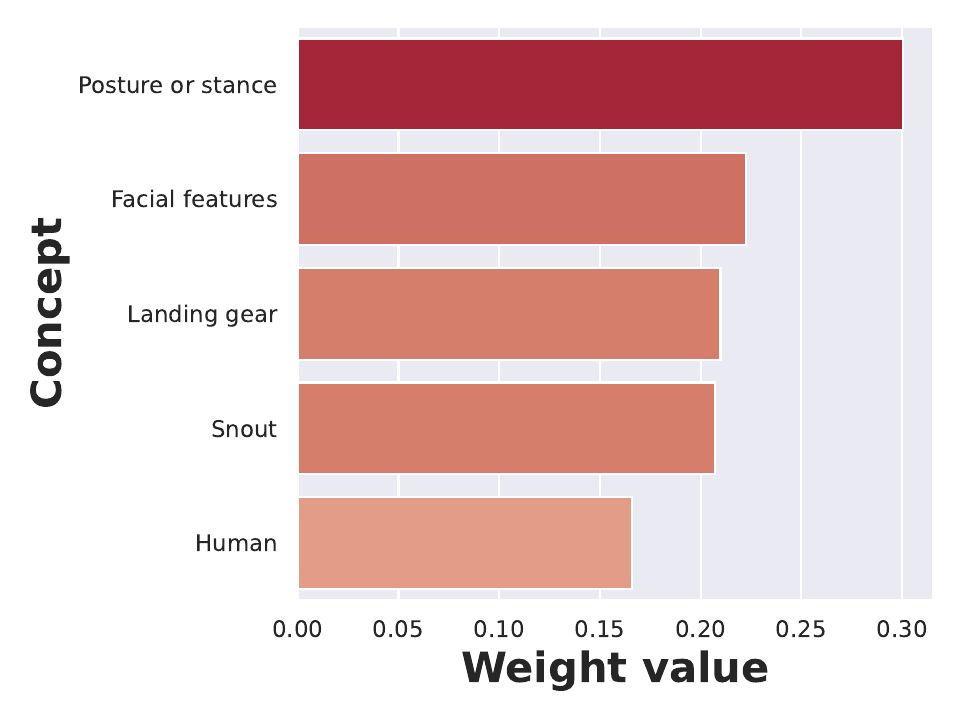}
         \caption{\updateremi{\textbf{Yan et al.~explanation (dataset) associated with the top prediction (``person'').} The first row can be read as follows: the weight associated with the concept ``Posture or stance'' has the highest value among the weights related to the class ``person''.}}
     \end{subfigure}
     \caption{\updateremi{\textbf{Dataset-wise explanations of the image \ref{image_person}.} Only the top 5 concepts are displayed. Note that the classifier correctly labeled the image.} 
     }
     \label{dataset_wise_img}
     \hfill
\end{figure}

\updateremi{We observe that the concepts obtained from all the dataset-wise methods are generally coherent, including concepts commonly associated with persons, such as ``Posture or stance'' or ``Facial features''. However, in contrast to the other two  methods that impose positive magnitudes, the \methoddata explanation allows for negative values, providing insights into whether the impact is negative or positive. 
Additionally, our \methoddata method emphasizes the comparison between two classes of interest, enabling a detailed exploration of specific disparities, like here between ``potted plant'' and ``person''.}

\subsubsection{\updateremi{Evaluation of \methoddata on the Cats/Dogs/Cars dataset}}

We also present a toy example from our Cats/Dogs/Cars dataset by constructing a CBM consisting of the concepts Table~\ref{tableconcepts}, plus the concepts ``Black'' and ``White''. Next, we display the \updateremi{10} most influential concepts according to our global metric (the top \updateremi{10} concepts that have the highest Wasserstein distance) \updateremi{in both biased and unbiased setups. Results are presented in Figure \ref{fig:expbiaised}.}

We can observe that the concepts ``Black'' and ``White'' hold significantly higher importance in the biased setup, indicating that the classifier is likely to be \remi{biased} about these concepts. \textcolor{black}{This shows that our global explanation method has the potential for detecting biases in datasets~\citep{tommasi2017deeper}.} Additional explanation samples for various use cases are available in Appendix \ref{examples_bis}.

\begin{figure}[htb!]
     \centering
     \begin{subfigure}[b]{0.45\textwidth}
         \centering
         \includegraphics[width=\textwidth]{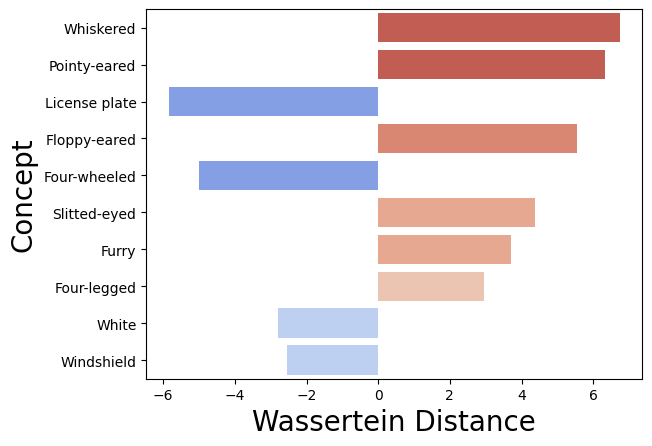}
         \caption{Unbiased setup}
         \label{fig:datasetunbiased}
     \end{subfigure}
     \hfill
     \begin{subfigure}[b]{0.45\textwidth}
         \centering
         \includegraphics[width=\textwidth]{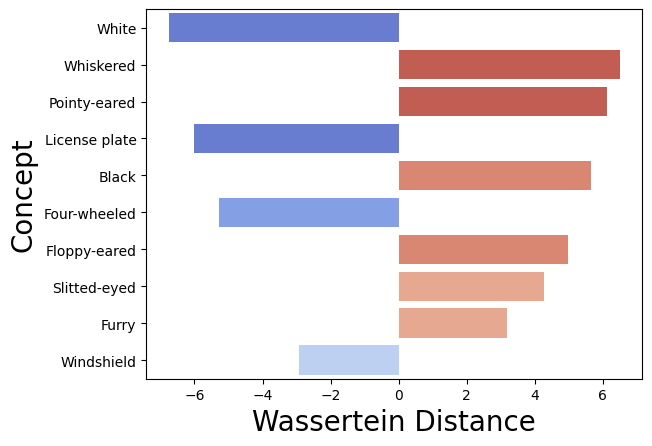}
         \caption{Biased setup}
         \label{fig:datasetbiased}
     \end{subfigure}
        \caption{\textbf{\remi{Global} explanation on subsets of Cats/Dogs/Cars.} Here, $c_1$=``Cat'' and $c_2$=``Car''. \remi{Positive values indicate concepts that are more prevalent in cat images than car images, while negative values indicate concepts that are more common in car images compared to cat images. We display here only the top 10 concepts that have the highest Wasserstein distance (the concept ``Black'' is positioned $15^{th}$ in the unbiased setup).}}
        \label{fig:expbiaised}
\end{figure}

\updateremi{
Finally, we show an application of our \remi{local} metric within the framework of our biased setup, as previously described. 
Subsequently, we feed an image of a white cat into our classifier (Figure \ref{fig:expbiaisedcat}). It is noteworthy that the image is misclassified as a car. Our \remi{local} metric demonstrates sensitivity to the dataset's color bias, corroborating the warning issued by the \remi{global} explanation. }

\begin{figure}[htb!]
     \centering
     \begin{subfigure}[b]{0.45\textwidth}
         \centering
         \includegraphics[width=\textwidth]{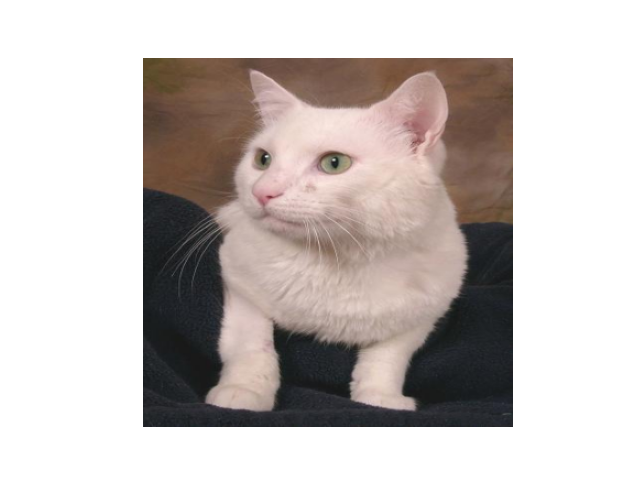}
         \caption{Input image}
         \label{fig:imagewhitecat}
     \end{subfigure}
     \hfill
     \begin{subfigure}[b]{0.45\textwidth}
         \centering
         \includegraphics[width=\textwidth]{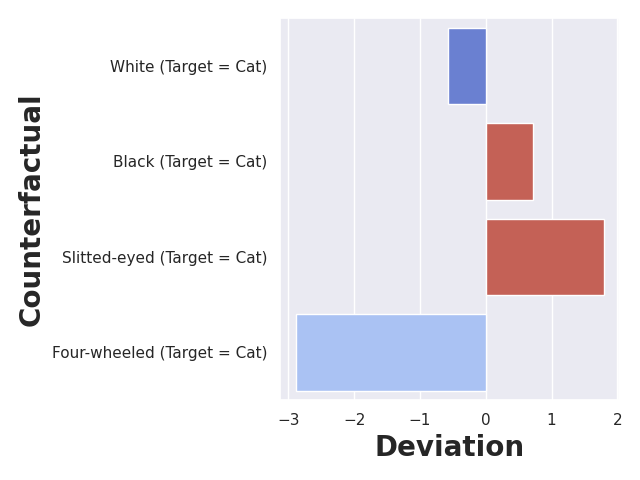}
         \caption{\updateremi{\textbf{\methodsample explanation.} Only the top 5 concepts are displayed. Note that the classifier misclassified the image as ``Car''. The first row must be read as follows: removing a little of the concept ``White'' to the vector $\vz$ induces a change of label to ``Cat''.}
         }
         \label{fig:expwhitecat}
     \end{subfigure}
        \caption{\updateremi{\textbf{\remi{Local} explanation on subsets of Cats/Dogs/Cars.} 
        \remi{On the right figure, the blue/red scale represents the scaled counterfactuals as in equation~\ref{scaled_counterfact}, the text in each box corresponds to the label predicted after the perturbation, followed to the concept changed to obtain its result (in parentheses).} 
        }}
        \label{fig:expbiaisedcat}
\end{figure}

\section{Conclusion}

In this paper, we introduce a modeling approach for the embedding space of CLIP, a foundation model trained on a vast dataset. Our observations reveal that CLIP \remi{can} organize information in a distribution that exhibits similarities with a mixture of Gaussians. Building upon this insight, we develop an adapted concept bottleneck model that demonstrates \remi{competitive} performance \textcolor{black}{along with} transparency.
While the model that we have presented offers the advantage of simplicity and a limited number of parameters, it does encounter challenges when dealing with a broader and more ambiguous set of \remi{concepts}. 
As a suggestion for future research, we propose to extend this modeling approach to incorporate other priors, such as Laplacian distributions, and to explore more complex models, including those with multiple components, \remi{\textit{i.e.}, using more than one Gaussian to describe a class for example}. \updateremi{Another avenue to potentially enhance the performance of our model is to explore guiding the latent space \revisremi{of CLIP's scores} towards a Gaussian distribution.}
Additionally, our research is centered around a specific embedding space (CLIP scores). Exploring similar work on other latent spaces, particularly those associated with multimodal foundation models, could be valuable to determine if similar patterns exist in those spaces.

\section{\revisremi{Acknoledgements}}

\revisremi{This work was performed using HPC resources from
GENCI-IDRIS (Grant 2023 - AD011014675).}

\bibliography{main}

\bibliographystyle{tmlr}

\appendix

\section{Appendix}

\subsection{Implementation details} \label{impl_details}

\paragraph{GradCAM.}

\updateremi{To compute GradCAM explanations, we applied the method to the %
$21^{th}$ block of the image encoder in \method, \revisremi{using the PyTorch package \href{https://github.com/jacobgil/pytorch-grad-cam}{\textit{pytorch-grad-cam}} \citep{jacobgilpytorchcam}. Specifically, we generated the heatmap by backpropagating the gradient from the classifier's inference class, considering the entire network (CLIP+classifier).}
}

\paragraph{LIME.}

\revisremi{The LIME image-level explanations are generated using the superpixel approach defined in \citep{lime} using the \revisremi{\href{https://github.com/marcotcr/lime}{official author's repository}}. All of our samples are generated using the default parameters for image-level explanations, i.e. an exponential kernel of width of 0.25 on an image segmented using quickshift clustering. Concerning the visualization, each explanation underlines the top 10 superpixels that exert the most influence on the prediction. Superpixels that positively contribute to the prediction of the label are in red and superpixels that positively contribute to the prediction of the label are in blue.}

\paragraph{SHAP.}

\revisremi{The SHAP image-level explanations are generated using the partitioning approach defined in the \revisremi{\href{https://github.com/shap/shap}{official author's repository}}. All of our samples are generated using the default parameters for image-level explanations, i.e., running a blur image masker to partition the input image with rectangular masks. The estimation of Shapley values is obtained by running 200 evaluations. Partitions that positively contribute to the prediction of the label are in red and superpixels that negatively contribute to the prediction of the label are in blue. }

\paragraph{\CBMLIME.}

\revisremi{The \CBMLIME explanations are generated using the approach used for tabular data as defined in \citep{lime} using the \revisremi{\href{https://github.com/marcotcr/lime}{official author's repository}}. All of our samples are generated using the default parameters for tabular-level explanations, i.e., an exponential kernel of width  0.75$\times N$, ($N$ being the number of concepts).}

\paragraph{\CBMshap.}

\revisremi{The \CBMshap explanations are generated using the approach used for tabular data as in the \revisremi{\href{https://github.com/shap/shap}{official author's repository}}. All of our samples are generated using the default parameters for explanations, i.e., DeepLIFT algorithm using the training data as a background dataset.}

\paragraph{LaBo.}

\updateremi{For LaBo, we train a LaBo classifier by using Adam optimizer with a learning rate of $0.5$, a weight decay of $0$, and a batch size of $8192$. The resulting explanations follow the resulting weight matrix.}

\paragraph{Yan et al.}
 
\updateremi{For the Yan et al. method, we train a linear classifier by using Adam optimizer with a learning rate of $5\times 10^{-3}$, a weight decay of $10^{-4}$, and a batch size of $512$. The sample-wise (concept level) explanation results from the product between the concept score and its associated weight. The dataset-wise explanation results from the weights alone.}

\paragraph{Resnet.}

\revisremi{To train the Resnet classifier on PASCAL-Part, MIT scenes and MonuMAI, we initialized the network with ImageNet pertaining. Then, we trained the network using Adam optimizer with a learning rate of $10^{-3}$ for the probe, $10^{-4}$ for the backbone, a batch size of $64$, a momentum of $0.9$ and a weight decay of $10^{-4}$. For ImageNet results, we use results provided by the authors.}

\paragraph{ViT.}

\revisremi{To train the ViT classifier on PASCAL-Part, MIT scenes and MonuMAI, we initialized the network with ImageNet pertaining. Then, we trained the network using Adam optimizer with a learning rate of $10^{-2}$ for the probe, a frozen backbone, a batch size of $128$, a momentum of $0.9$, and no weight decay. For ImageNet results, we use results provided by the authors.}

\subsection{Cats/Dogs/Cars} \label{Cats_Dogs}

\remi{The Cats/Dogs/Cars dataset is available on \revisremi{\href{https://huggingface.co/datasets/RemiKaz/Cats_Dogs_Cars}{hugging face}}. To create this dataset, we combined two well-known datasets: the Kaggle Cats and Dogs Dataset \citep{dogs-vs-cats} and the Standford Cars Dataset \citep{KrauseStarkDengFei-Fei_3DRR2013}. We then filtered the dataset to include only images featuring black and white animals and cars, leading to six different categories. The primary objective behind assembling this dataset is to facilitate research in scenarios characterized by significant data bias, such as the classification of white cats when the training data predominantly consists of images of white dogs and black cats.}

\subsection{Set of concepts} \label{concepts_sets}

Inspired by \cite{yang2023language}, we use large language models to provide concept sets. Concretely, we use GPT-3 \citep{brown2020language} with the following preprompt: ``In this task, you have to give visual descriptions that describe an image. Respond as a list. Each item being a word.'' Then, we generate the sets of words by the following prompt: ``What are [N] useful visual descriptors to distinguish a [class] in a photo?''. By doing so, we generated 5 concepts per class, presented in Table \ref{tableconcepts}. \updateremi{An additional ordering of concepts by subcategories is available in Table \ref{tableconcepts_bis}.}

\newpage

\begin{table}[H]
\caption{List of concepts used\updateremi{, ordered by classes of interest.}}
\label{tableconcepts}
\begin{center}
\begin{tabular}{p{0.15\textwidth} p{0.82\textwidth}}
\multicolumn{1}{c}{\bf Dataset}  &\multicolumn{1}{c}{\bf Set of concepts} 
\\ \hline \\
\textit{PASCAL-Part}  & 'aeroplane':[Winged, Jet engines, Tail fin, Fuselage, Landing gear],'bicycle':[Two-wheeled, Pedals, Handlebars, Frame, Chain-driven],'bird':[Feathery, Beaked, Wingspread, Perched, Avian],'bottle':[Glass or plastic, Cylindrical, Necked, Cap or cork, Transparent or translucent],'bus':[Large, Rectangular, Windows, Wheels, Multi-doored],'cat':[Furry, Whiskered, Pointy-eared, Slitted-eyed, Four-legged],'car':[Metallic, Four-wheeled, Headlights, Windshield, License plate],'dog':[Snout, Wagging-tailed, Snout-nosed, Floppy-eared, Tail-wagging],'cow':[Bovine, Hooved, Horned (in some cases), Spotted or solid-colored, Grazing (if in a field)],'horse':[Equine, Hooved, Maned, Tailed, Galloping (if in motion)],'motorbike':[Two-wheeled , Engine, Handlebars , Exhaust, Saddle or seat],'person':[Human, Facial features, Limbs (arms and legs), Clothing, Posture or stance],'potted plant':[Potted, Green, Leaves, Soil or potting mix, Container or pot],'sheep':[Woolly, Hooved, Grazing, Herded (if in a group), White or colored fleece],'train':[Locomotive, Railcars, Tracks, Wheels , Carriages],'tvmonitor':[Screen, Rectangular , Frame or bezel, Stand or wall-mounted, Displaying an image] \\
\textit{\updateremi{MonuMAI}} & 'Baroque':[Ornate, Elaborate sculptures, Intricate details, Curved or asymmetrical design, Historical or aged appearance],'Gothic':[Pointed arches, Ribbed vaults, Flying buttresses, Stained glass windows, Tall spires or towers],'Hispanic muslim':[Mudejar style, Intricate geometric patterns, Horseshoe arches, Decorative tilework (azulejos), Islamic-inspired motifs],'Rennaissance':[Classical proportions, Symmetrical design, Columns and pilasters, Human statues and sculptures, Dome or dome-like structures] \\
\textit{MIT scenes}  & 'Store':[Building or structure, Signage or banners, Glass windows or doors, Displayed products or merchandise, People entering or exiting (if applicable)],'Home':[Residential, Roofed, Windows, Landscaping or yard, Front entrance or door],'Public space':[Open area, Crowds (if people are present), Benches or seating, Pathways or walkways, Architectural features (e.g., buildings, statues)],'Leisure':[Recreational, Play equipment (if applicable), Greenery or landscaping, Picnic tables or seating, Relaxing atmosphere],'Working space':[Office equipment (e.g., desks, computers), Task-oriented, Office chairs, Organized or structured, People working (if applicable)] \\
\textit{Cats/Dogs/Cars}  &  'Cat':[Furry, Whiskered, Pointy-eared, Slitted-eyed, Four-legged],'Car':[Metallic, Four-wheeled, Headlights, Windshield, License plate],'Dog':[Snout, Wagging-tailed, Snout-nosed, Floppy-eared, Tail-wagging] \\
\end{tabular}
\end{center}
\end{table}

\begin{table}[H]
\caption{List of concepts used\updateremi{, ordered by subcategories.}}
\label{tableconcepts_bis}
\begin{center}
\updateremi{
\begin{tabular}{p{0.15\textwidth} p{0.82\textwidth}}
\multicolumn{1}{c}{\bf Dataset}  &\multicolumn{1}{c}{\bf Set of concepts} 
\\ \hline \\
\textit{PASCAL-Part}  & 'Objects':[Screen, Rectangular, Frame or bezel, Stand or wall-mounted, Displaying an image, Glass or plastic, Cylindrical, Necked, Cap or cork, Transparent or translucent, Potted, Green, Leaves, Soil or potting mix, Container or pot],
'Transportation-related':[Two-wheeled, Engine, Handlebars, Exhaust, Saddle or seat, Four-wheeled, Headlights, Windshield, License plate, Wheels, Multi-doored, Locomotive, Railcars, Tracks, Carriages, Chain-driven],'Aircraft-related':[Winged, Jet engines, Tail fin, Fuselage, Landing gear],'Building/Structure-related':[Large, Rectangular, Windows]
'Human Characteristics':[Human, Facial features, Limbs (arms and legs), Clothing, Posture or stance],'Avian Characteristics':[Feathery, Beaked, Wingspread, Perched, Avian],'Animal Characteristics':[Furry, Whiskered, Pointy-eared, Slitted-eyed, Four-legged, Equine, Hooved, Maned, Tailed, Galloping (if in motion), Snout, Wagging-tailed, Snout-nosed, Floppy-eared, Tail-wagging, Woolly, Grazing, Herded (if in a group), White or colored fleece, Bovine, Horned (in some cases), Spotted or solid-colored]\\
\textit{\updateremi{MonuMAI}} & 'Architectural Styles and Elements':[Mudejar style, Intricate geometric patterns, Horseshoe arches, Decorative tilework (azulejos), Islamic-inspired motifs, Stained glass windows, Pointed arches, Ribbed vaults],'Artistic Details and Features':[Ornate, Elaborate sculptures, Intricate details, Curved or asymmetrical design, Human statues and sculptures, Historical or aged appearance],'Architectural Components':[Flying buttresses, Classical proportions, Symmetrical design, Columns and pilasters, Dome or dome-like structures, Tall spires or towers]\\
\textit{MIT scenes}  & 'Work Environment':[Office equipment (e.g., desks, computers), Task-oriented, Office chairs, Organized or structured, People working (if applicable)],
'Leisure Environment':[Play equipment (if applicable), Greenery or landscaping, Picnic tables or seating, Relaxing atmosphere, Recreational],'Community Environment':[Roofed, Windows, Landscaping or yard, Front entrance or door, Building or structure, Signage or banners, Glass windows or doors, Displayed products or merchandise, People entering or exiting (if applicable), Open area, Crowds (if people are present), Benches or seating, Pathways or walkways, Architectural features (e.g., buildings, statues), Residential]\\
\textit{Cats/Dogs/Cars}  & 'Vehicle Features':[License plate, Headlights, Windshield, Four-wheeled], 'Animal Features':[Wagging-tailed, Tail-wagging, Snout, Whiskered, Pointy-eared, Slitted-eyed, Floppy-eared, Furry, Snout-nosed, Four-legged],'Material/Texture':[Metallic]\\
\end{tabular}}
\end{center}
\end{table}

\subsection{\updateremi{Study of images according to their position in the latent space}} \label{centroid_dist}

\updateremi{In addition to the \methoddata method, we introduce another approach leveraging multivariate Gaussian modeling to generate dataset-wise visual explanations for our model. Specifically, for a given class $c$, we compute the Mahalanobis distance with respect to $\mathcal{N}(\vz \mid \vmu_c, \vSigma_c)$ for each sample's latent space projection~$\vz$ labeled as~$c$ within a dataset of interest:}
\updateremi{
\begin{equation}
d_M(\vz, \vmu_c, \vSigma_c) = \sqrt{(\vz-\vmu_c)^{\top} \vSigma_c^{-1}(\vz-\vmu_c)}.
\label{mahalaobis}
\end{equation}}
\updateremi{Then, we plot in Figures \ref{fig:centroids_exps_close_pascalpart}, \ref{fig:centroids_exps_far_pascalpart},  \ref{fig:centroids_exps_close_monumai} and \ref{fig:centroids_exps_far_monumai}, the closest and farthest images with the dataset, according to this distance for PascalPART and MonuMAI.}

\begin{figure}[H]
     \hfill
     \centering
     \begin{subfigure}[t]{0.24\textwidth}
         \centering
         \includegraphics[width=\textwidth]{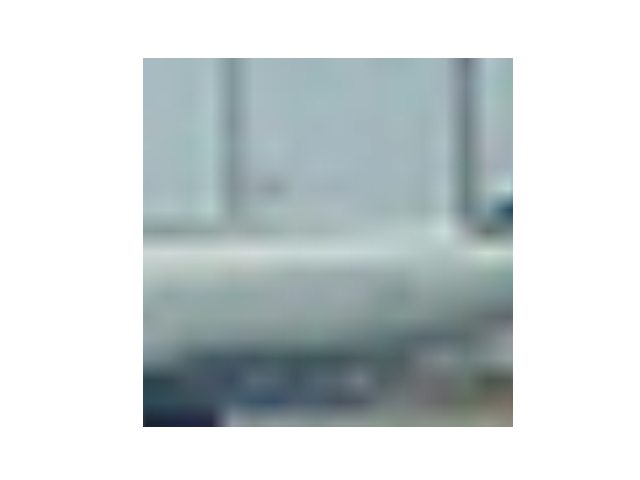}
         \caption{\updateremi{\textbf{``Aeroplane''}}}
     \end{subfigure}
     \begin{subfigure}[t]{0.24\textwidth}
         \centering
         \includegraphics[width=\textwidth]{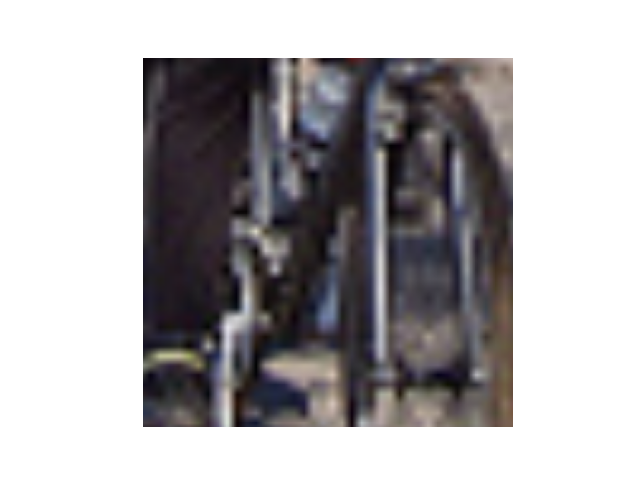}
         \caption{\updateremi{\textbf{`Bicycle''}}}
     \end{subfigure}
     \begin{subfigure}[t]{0.24\textwidth}
         \centering
         \includegraphics[width=\textwidth]{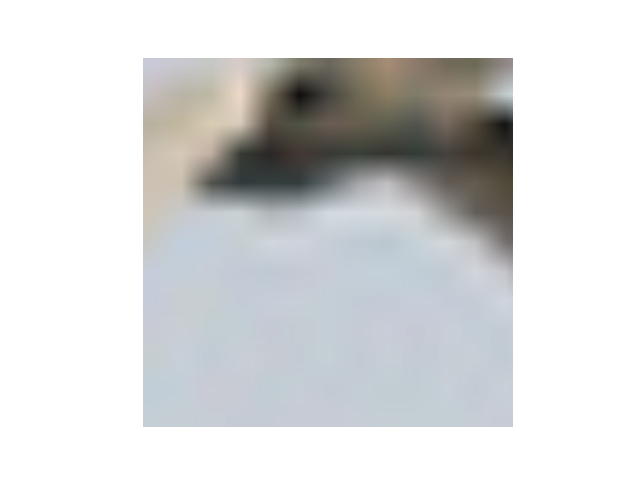}
         \caption{\updateremi{\textbf{``Bird''}}}
     \end{subfigure}
     \begin{subfigure}[t]{0.24\textwidth}
         \centering
         \includegraphics[width=\textwidth]{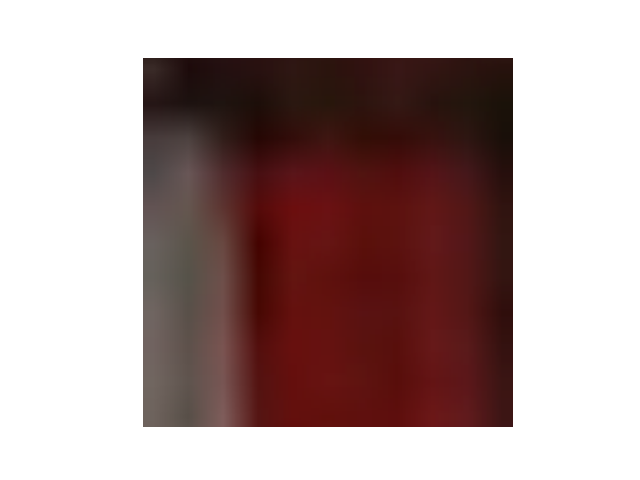}
         \caption{\updateremi{\textbf{``Bottle''}}}
     \end{subfigure}
     \begin{subfigure}[t]{0.24\textwidth}
         \centering
         \includegraphics[width=\textwidth]{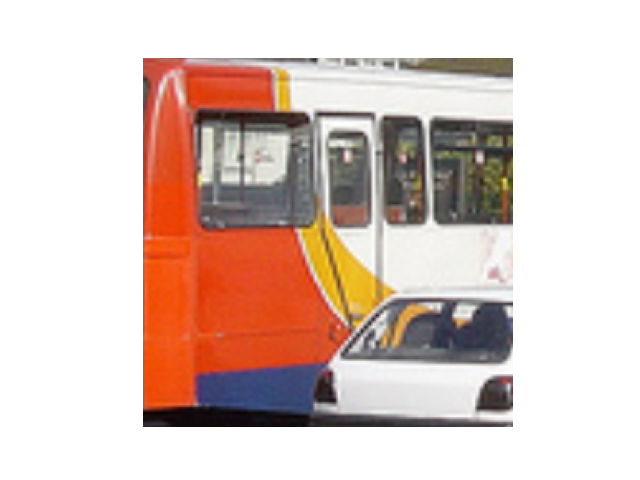}
         \caption{\updateremi{\textbf{``Bus''}}}
     \end{subfigure}
     \begin{subfigure}[t]{0.24\textwidth}
         \centering
         \includegraphics[width=\textwidth]{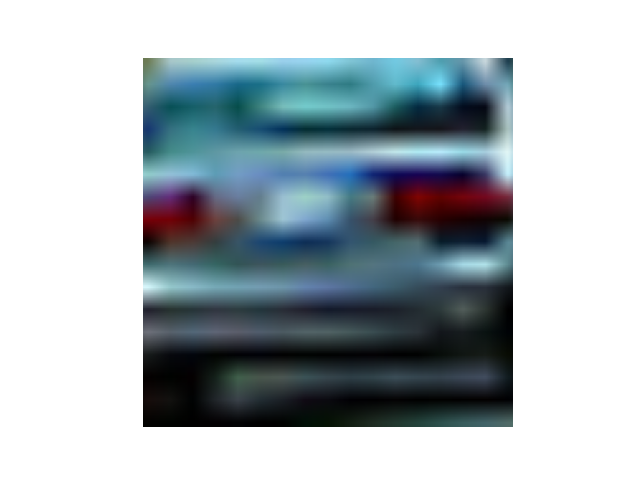}
         \caption{\updateremi{\textbf{``Car''}}}
     \end{subfigure}
     \begin{subfigure}[t]{0.24\textwidth}
         \centering
         \includegraphics[width=\textwidth]{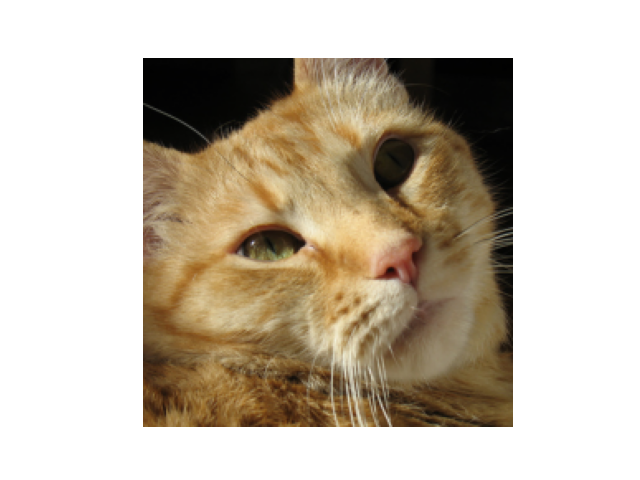}
         \caption{\updateremi{\textbf{``Cat''}}}
     \end{subfigure}
     \begin{subfigure}[t]{0.24\textwidth}
         \centering
         \includegraphics[width=\textwidth]{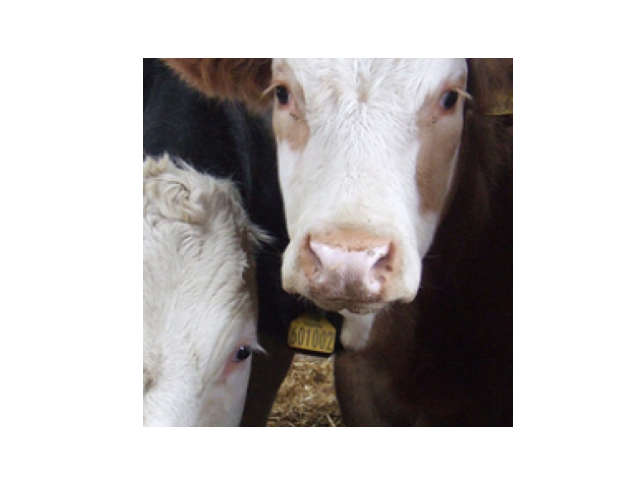}
         \caption{\updateremi{\textbf{``Cow''}}}
     \end{subfigure}
     \begin{subfigure}[t]{0.24\textwidth}
         \centering
         \includegraphics[width=\textwidth]{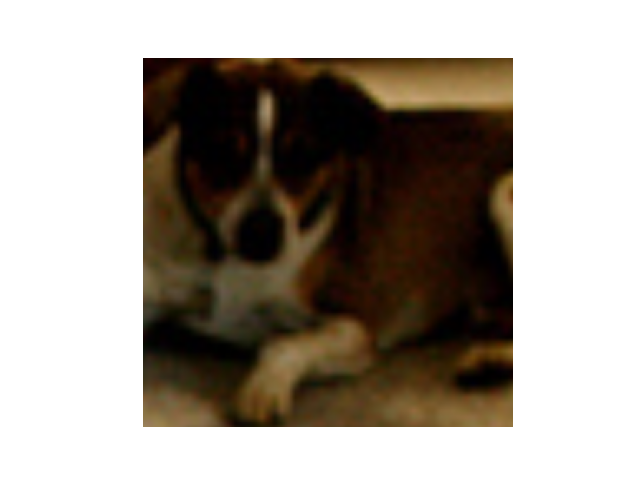}
         \caption{\updateremi{\textbf{``Dog''}}}
     \end{subfigure}
     \begin{subfigure}[t]{0.24\textwidth}
         \centering
         \includegraphics[width=\textwidth]{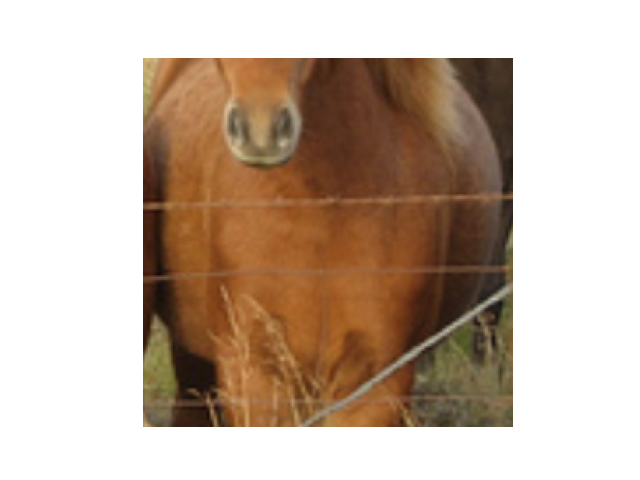}
         \caption{\updateremi{\textbf{``Horse''}}}
     \end{subfigure}
     \begin{subfigure}[t]{0.24\textwidth}
         \centering
         \includegraphics[width=\textwidth]{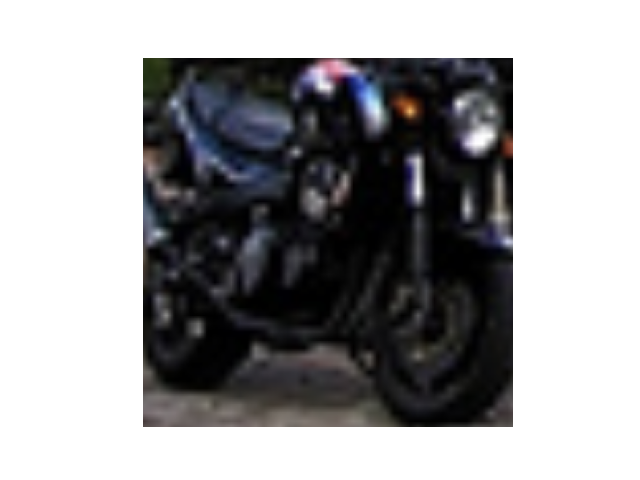}
         \caption{\updateremi{\textbf{``Motorbike''}}}
     \end{subfigure}
     \begin{subfigure}[t]{0.24\textwidth}
         \centering
         \includegraphics[width=\textwidth]{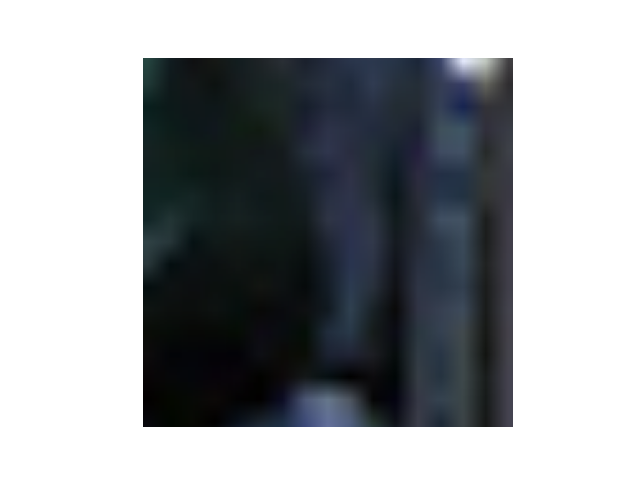}
         \caption{\updateremi{\textbf{``Person''}}}
     \end{subfigure}
     \begin{subfigure}[t]{0.24\textwidth}
         \centering
         \includegraphics[width=\textwidth]{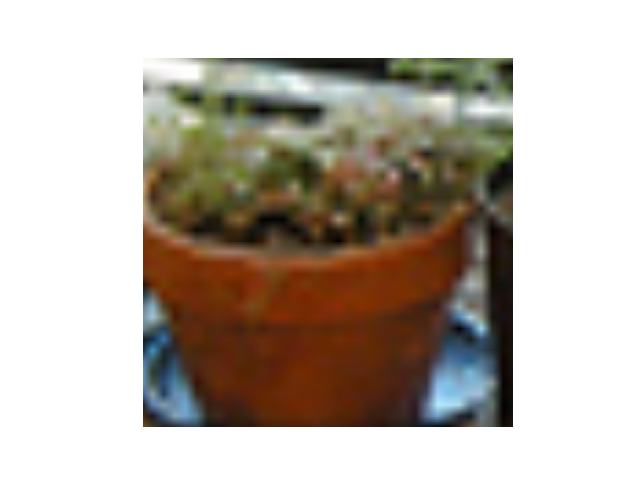}
         \caption{\updateremi{\textbf{``Potted plant''}}}
     \end{subfigure}
     \begin{subfigure}[t]{0.24\textwidth}
         \centering
         \includegraphics[width=\textwidth]{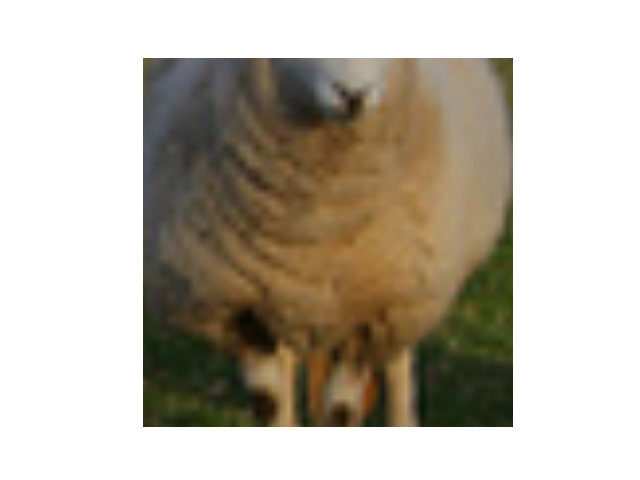}
         \caption{\updateremi{\textbf{``Sheep''}}}
     \end{subfigure}
     \begin{subfigure}[t]{0.24\textwidth}
         \centering
         \includegraphics[width=\textwidth]{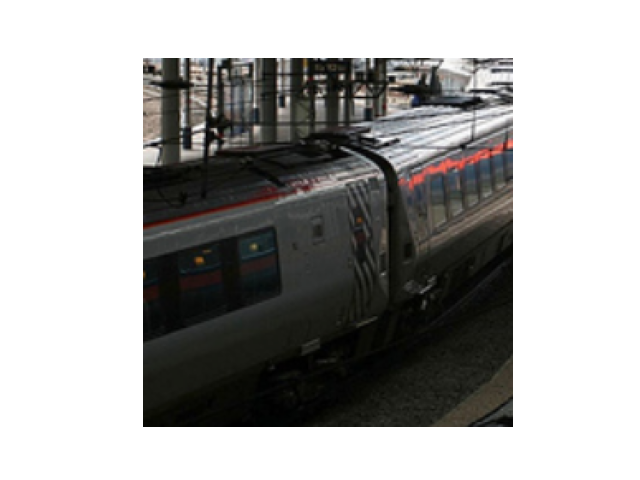}
         \caption{\updateremi{\textbf{``Train''}}}
     \end{subfigure}
     \begin{subfigure}[t]{0.24\textwidth}
         \centering
         \includegraphics[width=\textwidth]{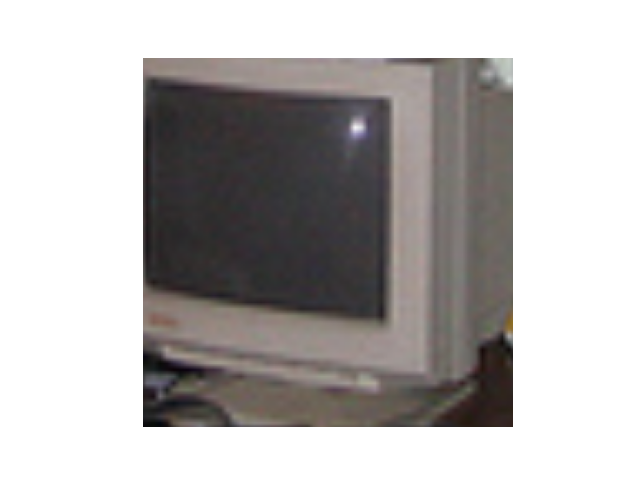}
         \caption{\updateremi{\textbf{``TV monitor''}}}
     \end{subfigure}
     \hfill
    \caption{\updateremi{\textbf{Images of minimal distance for each class of PascalPART.}} \revisremi{Each subfigure in the plot corresponds to the image of class $c$ from the test set that exhibits the minimal Mahalanobis distance, as defined in Equation \ref{mahalaobis}, with respect to the Gaussian representation of the given class $c$.}}
     \label{fig:centroids_exps_close_pascalpart}
     \hfill
\end{figure}

\begin{figure}[H]
     \hfill
     \centering
     \begin{subfigure}[t]{0.24\textwidth}
         \centering
         \includegraphics[width=\textwidth]{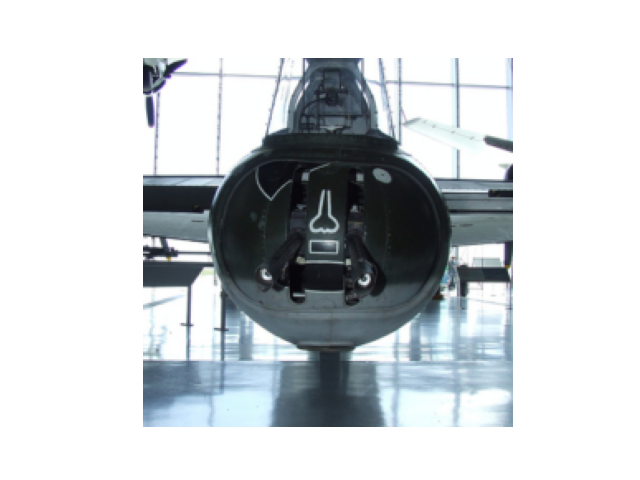}
         \caption{\updateremi{\textbf{``Aeroplane''}}}
     \end{subfigure}
     \begin{subfigure}[t]{0.24\textwidth}
         \centering
         \includegraphics[width=\textwidth]{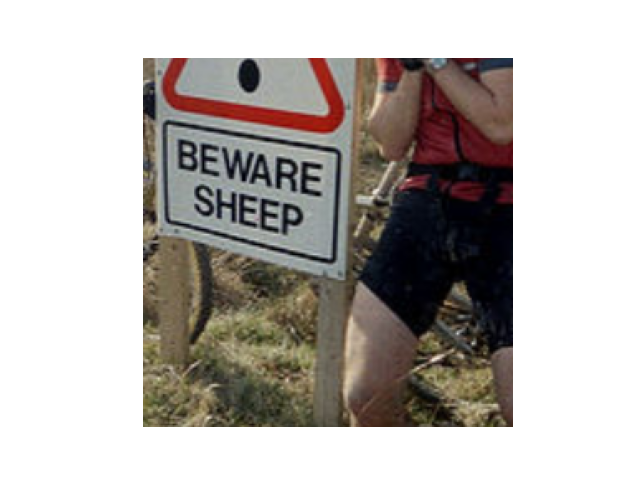}
         \caption{\updateremi{\textbf{`Bicycle''}}}
     \end{subfigure}
     \begin{subfigure}[t]{0.24\textwidth}
         \centering
         \includegraphics[width=\textwidth]{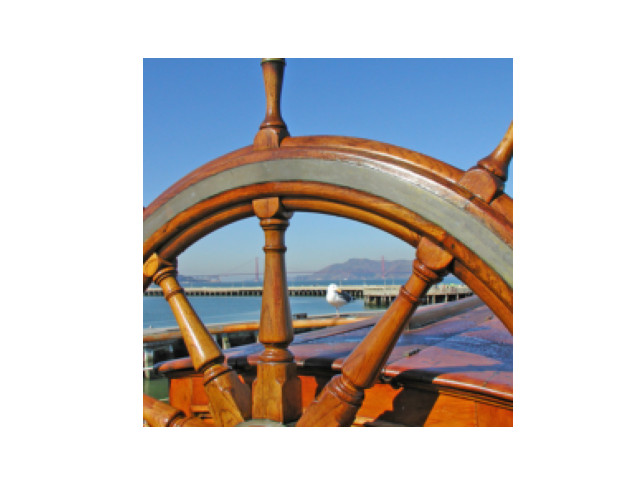}
         \caption{\updateremi{\textbf{``Bird''}}}
     \end{subfigure}
     \begin{subfigure}[t]{0.24\textwidth}
         \centering
         \includegraphics[width=\textwidth]{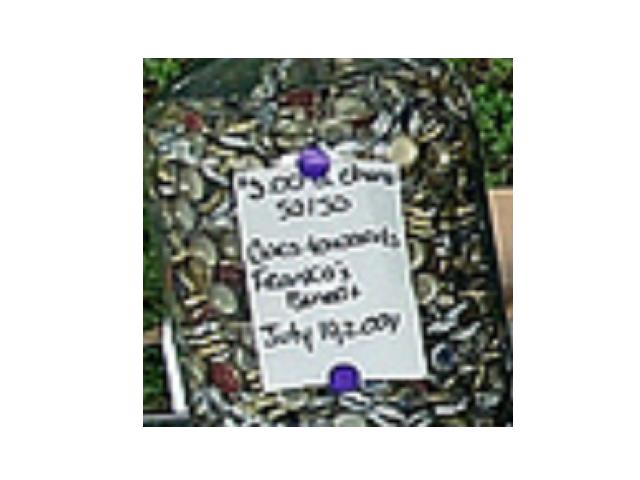}
         \caption{\updateremi{\textbf{``Bottle''}}}
     \end{subfigure}
     \begin{subfigure}[t]{0.24\textwidth}
         \centering
         \includegraphics[width=\textwidth]{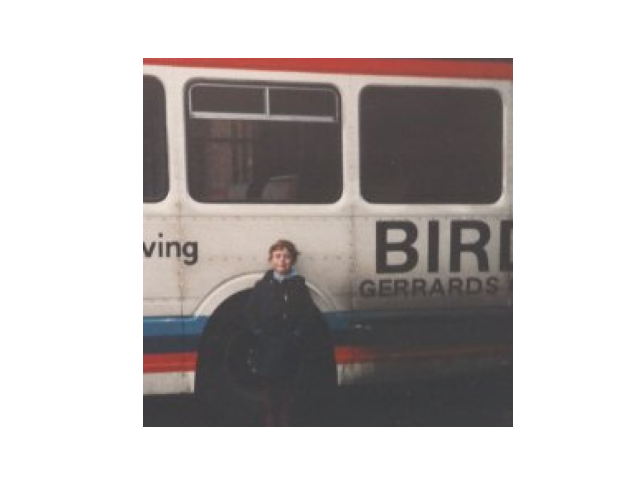}
         \caption{\updateremi{\textbf{``Bus''}}}
     \end{subfigure}
     \begin{subfigure}[t]{0.24\textwidth}
         \centering
         \includegraphics[width=\textwidth]{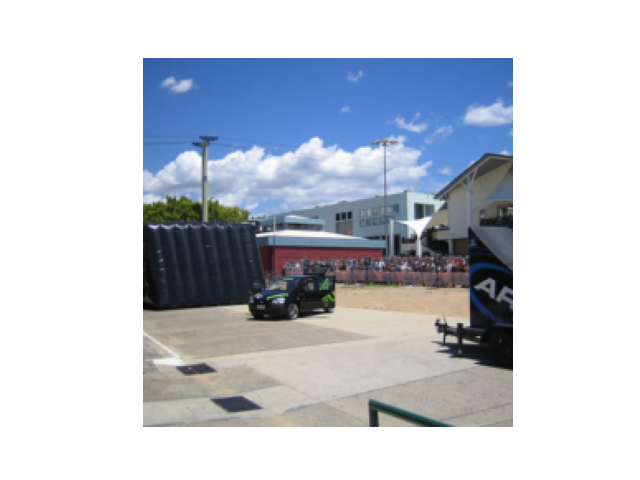}
         \caption{\updateremi{\textbf{``Car''}}}
     \end{subfigure}
     \begin{subfigure}[t]{0.24\textwidth}
         \centering
         \includegraphics[width=\textwidth]{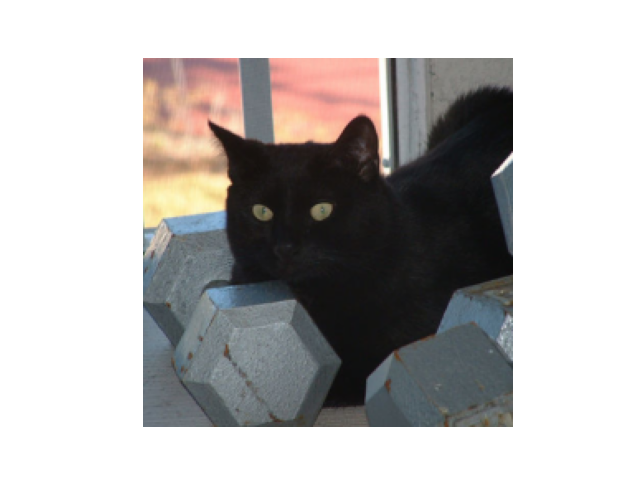}
         \caption{\updateremi{\textbf{``Cat''}}}
     \end{subfigure}
     \begin{subfigure}[t]{0.24\textwidth}
         \centering
         \includegraphics[width=\textwidth]{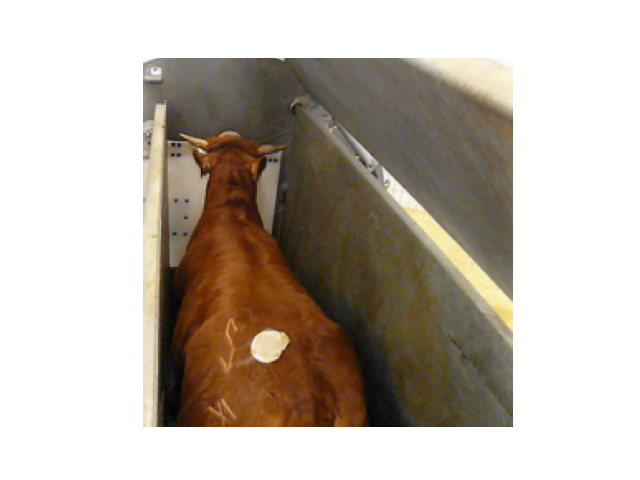}
         \caption{\updateremi{\textbf{``Cow''}}}
     \end{subfigure}
     \begin{subfigure}[t]{0.24\textwidth}
         \centering
         \includegraphics[width=\textwidth]{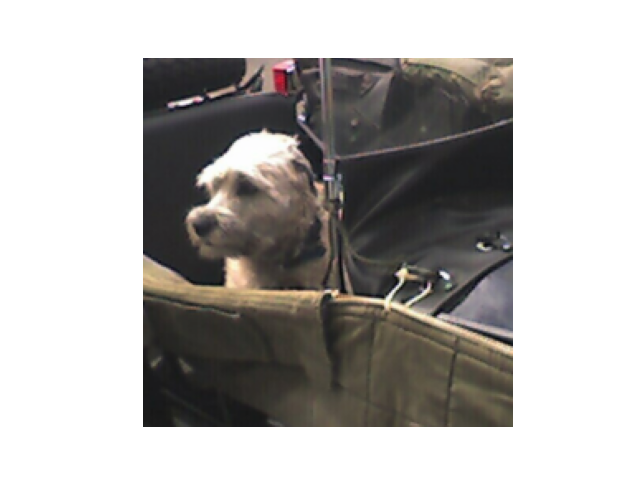}
         \caption{\updateremi{\textbf{``Dog''}}}
     \end{subfigure}
     \begin{subfigure}[t]{0.24\textwidth}
         \centering
         \includegraphics[width=\textwidth]{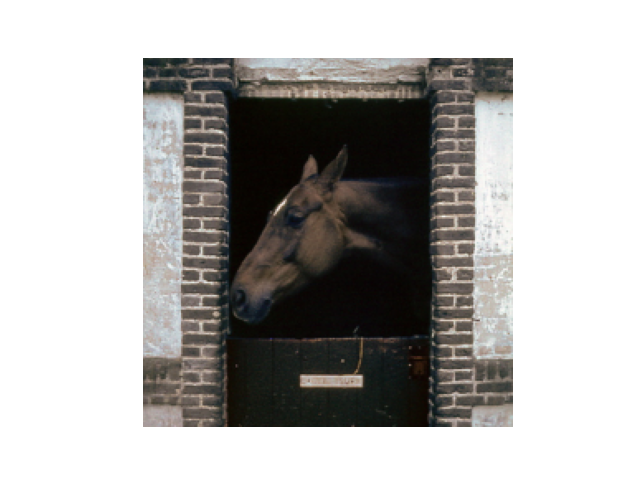}
         \caption{\updateremi{\textbf{``Horse''}}}
     \end{subfigure}
     \begin{subfigure}[t]{0.24\textwidth}
         \centering
         \includegraphics[width=\textwidth]{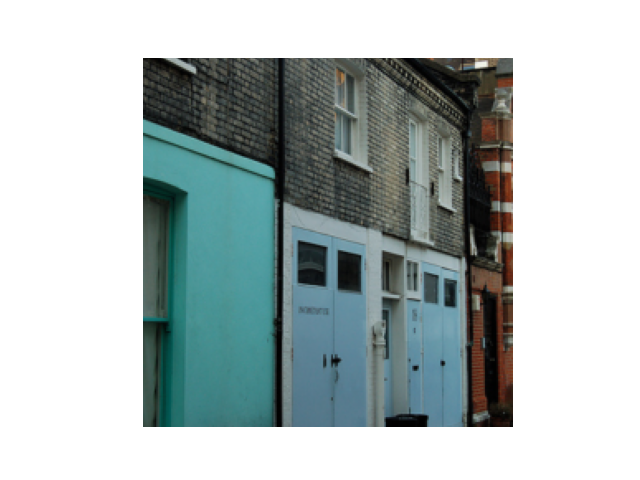}
         \caption{\updateremi{\textbf{``Motorbike''}}}
     \end{subfigure}
     \begin{subfigure}[t]{0.24\textwidth}
         \centering
         \includegraphics[width=\textwidth]{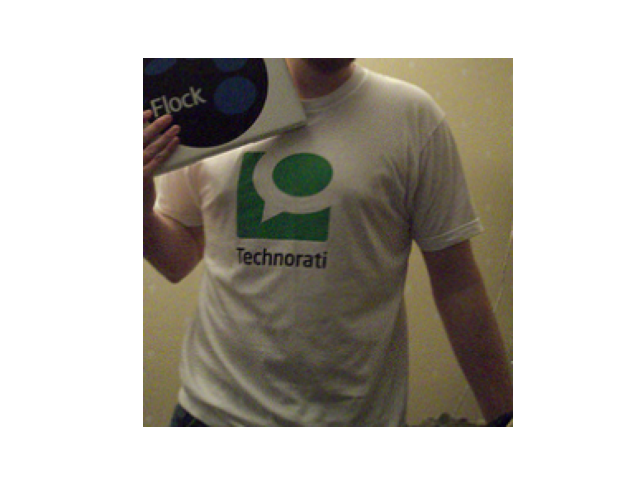}
         \caption{\updateremi{\textbf{``Person''}}}
     \end{subfigure}
     \begin{subfigure}[t]{0.24\textwidth}
         \centering
         \includegraphics[width=\textwidth]{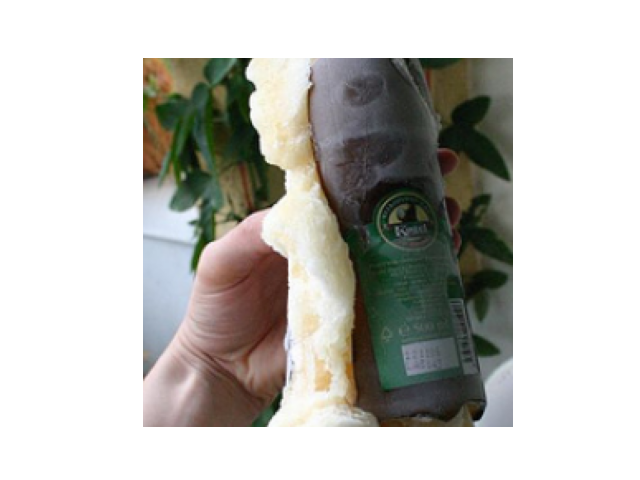}
         \caption{\updateremi{\textbf{``Potted plant''}}}
     \end{subfigure}
     \begin{subfigure}[t]{0.24\textwidth}
         \centering
         \includegraphics[width=\textwidth]{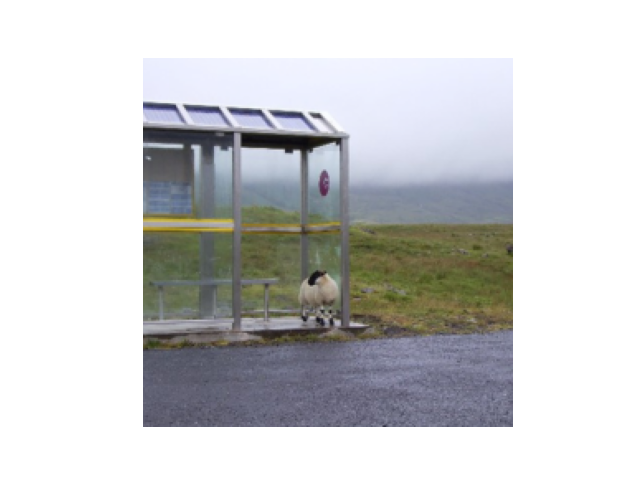}
         \caption{\updateremi{\textbf{``Sheep''}}}
     \end{subfigure}
     \begin{subfigure}[t]{0.24\textwidth}
         \centering
         \includegraphics[width=\textwidth]{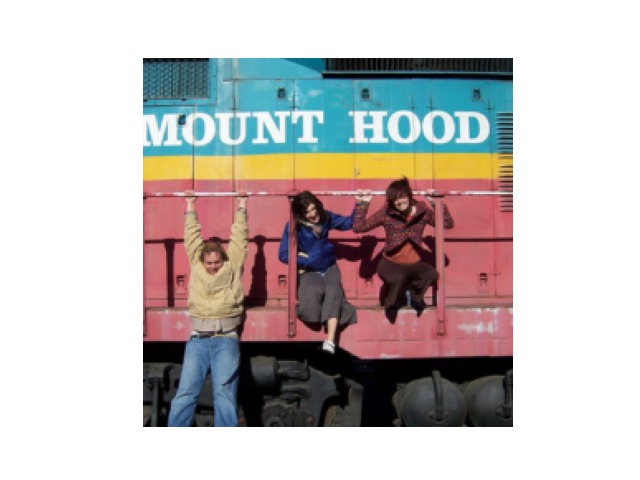}
         \caption{\updateremi{\textbf{``Train''}}}
     \end{subfigure}
     \begin{subfigure}[t]{0.24\textwidth}
         \centering
         \includegraphics[width=\textwidth]{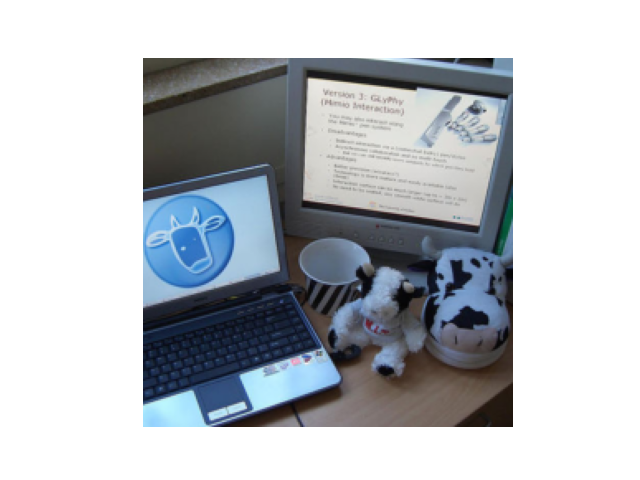}
         \caption{\updateremi{\textbf{``TV monitor''}}}
     \end{subfigure}
     \hfill
    \caption{\updateremi{\textbf{Images of maximal distance for each class of PascalPART.} \revisremi{Each subfigure in the plot corresponds to the image of class $c$ from the test set that exhibits the maximal Mahalanobis distance, as defined in Equation \ref{mahalaobis}, with respect to the Gaussian representation of the given class $c$.}}}
     \label{fig:centroids_exps_far_pascalpart}
     \hfill
\end{figure}

\begin{figure}[H]
     \hfill
     \centering
     \begin{subfigure}[t]{0.24\textwidth}
         \centering
         \includegraphics[width=\textwidth]{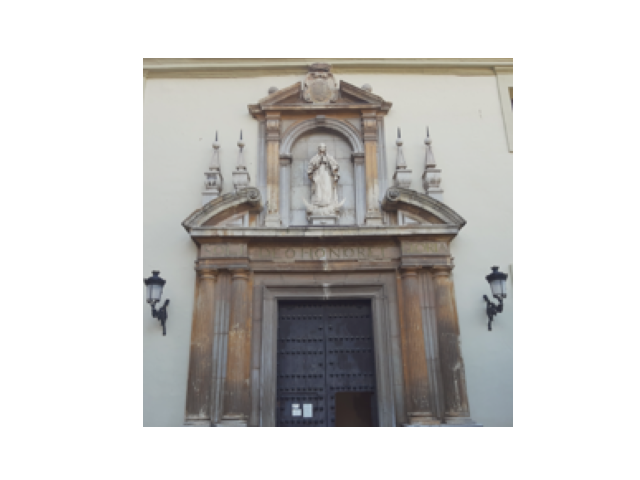}
         \caption{\updateremi{\textbf{``Baroque''}}}
     \end{subfigure}
     \begin{subfigure}[t]{0.24\textwidth}
         \centering
         \includegraphics[width=\textwidth]{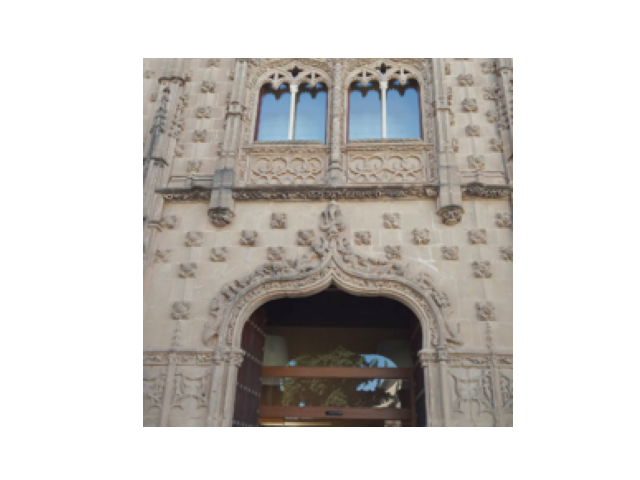}
         \caption{\updateremi{\textbf{``Gothic''}}}
     \end{subfigure}
     \begin{subfigure}[t]{0.24\textwidth}
         \centering
         \includegraphics[width=\textwidth]{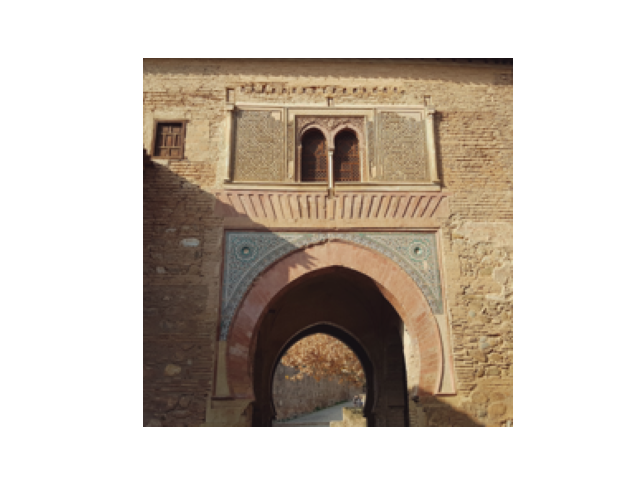}
         \caption{\updateremi{\textbf{``Hispanic-Muslim''}}}
     \end{subfigure}
     \begin{subfigure}[t]{0.24\textwidth}
         \centering
         \includegraphics[width=\textwidth]{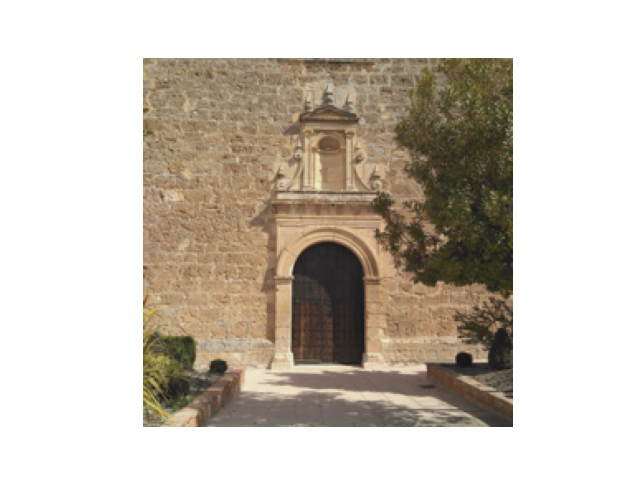}
         \caption{\updateremi{\textbf{``Renaissance''}}}
     \end{subfigure}
     \hfill
    \caption{\updateremi{\textbf{Images of minimal distance for each class of MonuMAI.} \revisremi{Each subfigure in the plot corresponds to the image of class $c$ from the test set that exhibits the minimal Mahalanobis distance, as defined in Equation \ref{mahalaobis}, with respect to the Gaussian representation of the given class $c$.}}}
     \label{fig:centroids_exps_close_monumai}
     \hfill
\end{figure}

\begin{figure}[H]
     \hfill
     \centering
     \begin{subfigure}[t]{0.24\textwidth}
         \centering
         \includegraphics[width=\textwidth]{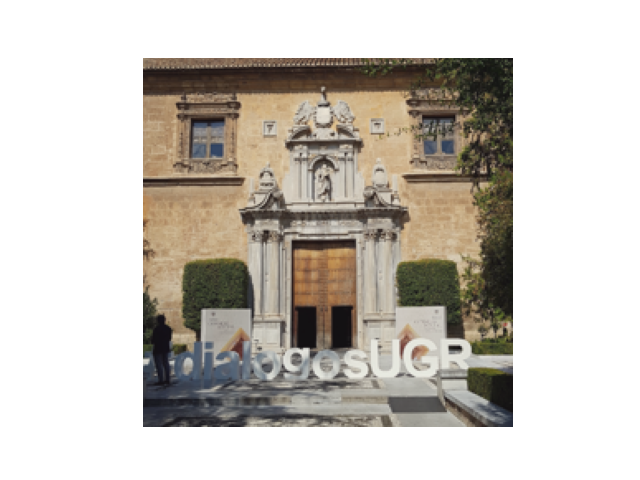}
         \caption{\updateremi{\textbf{``Baroque''}}}
     \end{subfigure}
     \begin{subfigure}[t]{0.24\textwidth}
         \centering
         \includegraphics[width=\textwidth]{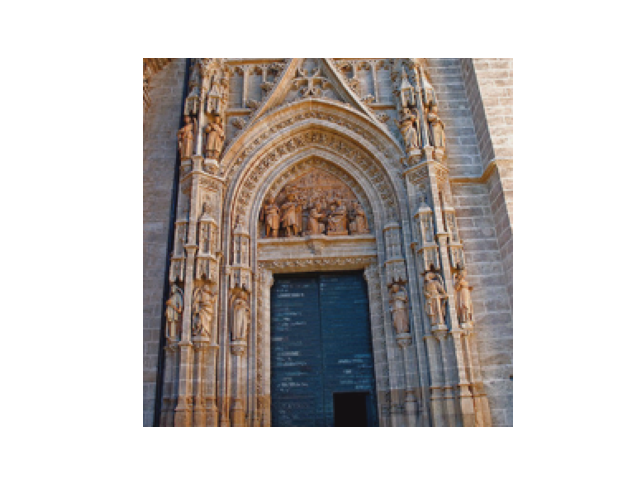}
         \caption{\updateremi{\textbf{`Gothic''}}}
     \end{subfigure}
     \begin{subfigure}[t]{0.24\textwidth}
         \centering
         \includegraphics[width=\textwidth]{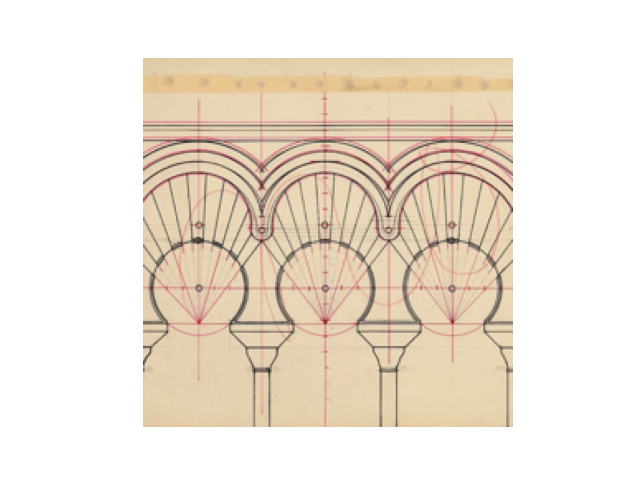}
         \caption{\updateremi{\textbf{``Hispanic-Muslim''}}}
     \end{subfigure}
     \begin{subfigure}[t]{0.24\textwidth}
         \centering
         \includegraphics[width=\textwidth]{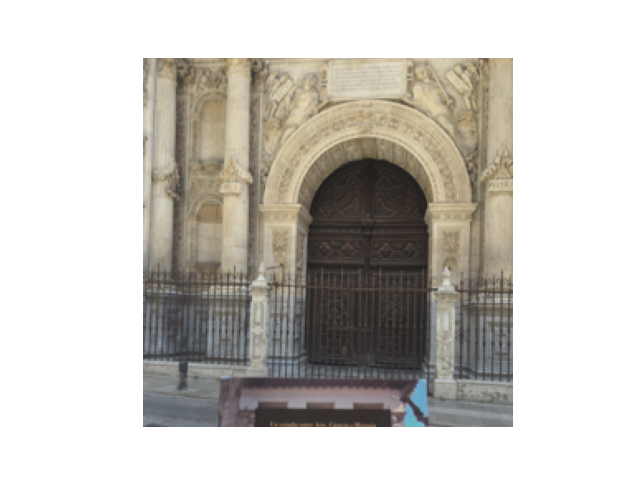}         \caption{\updateremi{\textbf{``Renaissance''}}}
     \end{subfigure}
     \hfill
    \caption{\updateremi{\textbf{Images of maximal distance for each class of MonuMAI.} \revisremi{Each subfigure in the plot corresponds to the image of class $c$ from the test set that exhibits the minimal Mahalanobis distance, as defined in Equation \ref{mahalaobis}, with respect to the Gaussian representation of the given class $c$.}}}
     \label{fig:centroids_exps_far_monumai}
     \hfill
\end{figure}

\updateremi{For PascalPART, the closest images tend to be predominantly close-up shots of the object in question. However, this occasionally presents challenges due to the inherent pixelation of the images, leading to somewhat counterintuitive results. Conversely, the farthest images are typically those where the object is distant. Another scenario of failure arises when there are two classes present within the image, which complicates the retrieval process. On the other hand, for MonumAI, the closest images appear to be the simplest ones, without sign of disturbance. In contrast, the farthest images may include extraneous objects, such as signs.}

\subsection{Closed-form solution of counterfactuals for CLIP \remi{QDA}} \label{close_form_solution}

First, we will derive the resolution of \eqref{equ:Causal_Concept_Order} for the binary case before extending it to the multiclass case.
Let us begin with a binary classifier, where ${Y \in \{ h_{\vomega^h}(\vz), \overline{h_{\vomega^h}(\vz)} \}} $ (for convenience, we note $h_{\vomega^h}(\vz) = c_{h} $ and $\overline{h_{\vomega^h}(\vz)} = c_{\overline{h}} $ ). 

\begin{proposition} \label{demo_sample_wise_binary}
Let $h_{\vomega^h}$ a binary classifier %
  and $\vZ$ following a mixture of Gaussians such as $\vZ_{Y=c_{h}} \sim \mathcal{N}(\vmu_{{c_{h}}},\,\vSigma_{{{{c_{h}}}}})$ and $\vZ_{Y=c_{\overline{h}}} \sim \mathcal{N}(\vmu_{{c_{\overline{h}}}},\,\vSigma_{c_{\overline{h}}}^{-1})$, and $\vepsilon^{j}_{s}$ a perturbation with the above sparsity and sign restrictions. Then, there is a closed form solution to  problem \ref{equ:Causal_Concept_Order}, given by:
\begin{equation} \epsilon_s^j = 
\begin{cases}
\emptyset ~~ \textnormal{if} ~~ b^2-4 P c < 0  ~ \\
\textnormal{or} ~ ( s \neq \textnormal{sign}(b_1) ~ \textnormal{and} ~ s \neq \textnormal{sign}(b_2) ) \\
\\
b_1 ~~ \textnormal{if} ~~ b^2-4 P c > 0 ~~ \textnormal{and} ~~ \textnormal{sign}(b_1) = s \\
\textnormal{and} ~~ (\textnormal{sign}(b_2) \neq s ~~ \textnormal{or} ~~ |b_2| \geq |b_1|) \\
\\
b_2 ~~ \textnormal{if} ~~ b^2-4 P c > 0 ~~ \textnormal{and} ~~ \textnormal{sign}(b_2) = s \\
\textnormal{and} ~~ (\textnormal{sign}(b_1) \neq s ~~ \textnormal{or} ~~ |b_1| > |b_2|), 
\end{cases}
\label{equ:lagrangien_sol}
\end{equation}
with:
\begin{align}
& P=\frac{1}{2} \left([\vSigma_{c_{\overline{h}}}^{-1}]_{(j,j)}-[\vSigma_{c_{h}}^{-1}]_{(j,j)} \right) \nonumber \\
& b=\sum_{k=1}^N \left( ([\vz]_{(k)}-[\vmu_{{c_{\overline{h}}}}]_{(k)}) [\vSigma_{c_{\overline{h}}}^{-1}]_{(j,k)} \nonumber - ([\vz]_{(k)}-[\vmu_{{c_{h}}}]_{(k)}) [\vSigma_{c_{h}}^{-1}]_{(j,k)} \right) \nonumber \\
& c=\frac{1}{2} \log \left[\frac{|\vSigma_{c_{\overline{h}}}|}{|\vSigma_{c_{h}}|} \right] + \log \left[\frac{p_{c_{h}}}{p_{c_{\overline{h}}}} \right] + \frac{1}{2}(\vz-\vmu_{c_{\overline{h}}})^{\top} \vSigma_{c_{\overline{h}}}^{-1}(\vz-\vmu_{c_{\overline{h}}}) \nonumber - \frac{1}{2}(\vz-\vmu_{c_{h}})^{\top} {{\vSigma_{c_{h}}^{-1}}}(\vz-\vmu_{c_{h}}) \nonumber \\
& b_1 = \frac{-b - \sqrt{b^2-4 P c}}{2 P} \\ 
& b_2 = \frac{-b + \sqrt{b^2-4 P c}}{2 P} \, . \nonumber 
\end{align} 
\end{proposition}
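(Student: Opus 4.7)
The plan is to reduce the constrained minimization to a scalar quadratic equation, by exploiting the fact that in the binary QDA case the classifier is decided by the sign of a single log-posterior-ratio. Write $c_h = h_{\vomega^h}(\vz)$ and $c_{\overline{h}}$ for the opposite class, and define
\begin{equation*}
f(\vu) = \log\bigl[p_{c_h}\mathcal{N}(\vu\mid\vmu_{c_h},\vSigma_{c_h})\bigr] - \log\bigl[p_{c_{\overline{h}}}\mathcal{N}(\vu\mid\vmu_{c_{\overline{h}}},\vSigma_{c_{\overline{h}}})\bigr].
\end{equation*}
By equation \eqref{CLIP-QDA_proba}, the classifier returns $c_h$ precisely when $f(\vu)\ge 0$, so $f(\vz)>0$ by assumption. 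Since $f$ is continuous, the smallest perturbation $\vepsilon^j_s$ flipping the class is attained on the boundary $f(\vz+\vepsilon^j_s)=0$; hence the problem reduces to finding the zero of the scalar map $\epsilon^j_s\mapsto f(\vz+\vepsilon^j_s)$ that is closest to the origin and has the prescribed sign.

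Next, I would expand $f(\vz+\vepsilon^j_s)=0$ explicitly. Because $\vepsilon^j_s$ is supported only on coordinate $j$, for each class $c$ the quadratic form $(\vz+\vepsilon^j_s-\vmu_c)^\top\vSigma_c^{-1}(\vz+\vepsilon^j_s-\vmu_c)$ decomposes cleanly into its value at $\vepsilon^j_s=\vzero$, a linear cross-term proportional to $\sum_k [\vz-\vmu_c]_{(k)}[\vSigma_c^{-1}]_{(j,k)}$, and a pure quadratic term proportional to $[\vSigma_c^{-1}]_{(j,j)}$. Subtracting the two class contributions and absorbing the log-determinant and prior terms into the constant then produces exactly the scalar equation $P(\epsilon^j_s)^2 + b\,\epsilon^j_s + c = 0$ with the coefficients $P$, $b$, $c$ displayed in the proposition. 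The two candidate roots $b_1, b_2$ are immediate from the quadratic formula.

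The final step is the sign and minimality case analysis. If $b^2-4Pc<0$, the map $\epsilon^j_s\mapsto f(\vz+\vepsilon^j_s)$ never vanishes, so no counterfactual along direction $j$ exists and the answer is $\emptyset$. Otherwise two real roots $b_1, b_2$ are available, and since the objective $\|\vepsilon^j_s\|^2=(\epsilon^j_s)^2$ is strictly increasing in $|\epsilon^j_s|$, the optimum on the sign-$s$ half-line is the root (if any) of sign $s$ with smallest absolute value. Splitting into the sub-cases ``neither root has sign $s$'', ``only $b_1$ does'', ``only $b_2$ does'', and ``both do'' recovers exactly the three branches of equation \eqref{equ:lagrangien_sol}.

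The main obstacle I anticipate is the careful treatment of two borderline situations. First, the convention at the decision boundary itself, where $f=0$ and the QDA label is formally tied: one has to argue either that the infimum in \eqref{equ:Causal_Concept_Order} is attained at the boundary with a chosen tie-breaking rule, or equivalently interpret the counterfactual as the unique boundary-crossing point. Second, the degenerate case $P=0$, where the two classes share a variance along axis $j$, the quadratic becomes affine, and one of the closed-form roots must be read as $\pm\infty$; a short limiting argument or a dedicated subcase shows that the branching of \eqref{equ:lagrangien_sol} still selects the unique finite root of sign $s$ when it exists. Everything else is routine algebraic bookkeeping.
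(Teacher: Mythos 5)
Your proposal is correct and, at its computational core, identical to the paper's proof: both expand the log-ratio of the two weighted Gaussian densities at $\vz+\vepsilon^{j}_{s}$, use the single-coordinate support of $\vepsilon^{j}_{s}$ to collapse the difference of quadratic forms into the scalar quadratic $P(\epsilon^{j}_{s})^2+b\epsilon^{j}_{s}+c=0$ with the stated coefficients, and then select among the two roots by sign and minimal magnitude. The one genuine methodological difference is how you justify that the optimum sits on the decision boundary: the paper introduces a Lagrangian with slack variables $\lambda$ and $I$, solves the stationarity system, and discards the unsaturated case $\lambda=0$ as impossible by construction, whereas you argue directly that the constraint set is reached only at a zero of the continuous log-ratio $f$, since any strictly interior point of the flipped-class region could be shrunk toward the origin. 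Your route is more elementary and avoids the slight awkwardness in the paper's derivation, where the inequality flips orientation between the constraint expansion and the restated problem. You also explicitly flag two degeneracies the paper's proof passes over in silence: the tie-breaking convention when $f=0$ exactly (which determines whether the infimum is attained or only approached), and the case $P=0$, where the quadratic degenerates to an affine equation with a single finite root and the formulas for $b_1,b_2$ are undefined as written. Neither issue invalidates the paper's statement in the generic case, but your treatment is the more careful of the two.
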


\begin{proof}

\remi{For the binary case, \eqref{equ:Causal_Concept_Order} can be written as :}
\remi{\begin{align}
\min \|\vepsilon^{j}_{\remi{s}}\|^2 ~~ \textnormal{s.t.}  \quad  \frac{p_{c_{\overline{h}}}}{(2\pi)^{N/2}|\vSigma_{c_{\overline{h}}}|^{\frac{1}{2}}} e^{-\frac{1}{2}(\vz+\vepsilon^{j}_{\remi{s}}-\vmu_{c_{\overline{h}}})^T \vSigma_{c_{\overline{h}}}^{-1}(\vz+\vepsilon^{j}_{\remi{s}}-\vmu_{c_{\overline{h}}})} \leqslant  \frac{p_{c_{h}}}{(2\pi)^{N/2}|\vSigma_{c_h}|^{\frac{1}{2}}} e^{-\frac{1}{2}(\vz+\vepsilon^{j}_{\remi{s}}-\vmu_{{c_{h}}})^T\vSigma_{c_h}^{-1}(\vz+\vepsilon^{j}_{\remi{s}}-\vmu_{{c_{h}}})}  .
\label{equ:Causal_Concept_Order_classif}
\end{align}}

Let us focus on the inequality constraint:
\begin{multline*}
\hfill \frac{p_{c_{\overline{h}}}}{|\vSigma_{c_{\overline{h}}}|^{\frac{1}{2}}} ~~ e^{-\frac{1}{2}(\vz+\vepsilon^{j}_{s}-\vmu_{{c_{\overline{h}}}})^T\vSigma_{c_{\overline{h}}}^{-1}(\vz+\vepsilon^{j}_{s}-\vmu_{{c_{\overline{h}}}})} \leqslant ~~ \frac{p_{c_{h}}}{|\vSigma_{c_{h}}|^{\frac{1}{2}}} ~~ e^{-\frac{1}{2}(\vz+\vepsilon^{j}_{s}-\vmu_{{c_{h}}})^T \vSigma_{c_{h}}^{-1}(\vz+\vepsilon^{j}_{s}-\vmu_{{c_{h}}})} \hfill \\
\iff \log (p_{c_{\overline{h}}}) - \frac{1}{2} \log |\vSigma_{c_{\overline{h}}}| - \frac{1}{2} (\vz+\vepsilon^{j}_{s}-\vmu_{c_{\overline{h}}})^T\vSigma_{c_{\overline{h}}}^{-1}(\vz+\vepsilon^{j}_{s}-\vmu_{c_{\overline{h}}}) \hfill \\  
\hfill \leqslant ~~ \log  (p_{c_{h}}) - \frac{1}{2} \log |\vSigma_{c_{h}}| - \frac{1}{2} (\vz+\vepsilon^{j}_{s}-\vmu_{{c_{h}}})^T \vSigma_{c_{h}}^{-1}(\vz+\vepsilon^{j}_{s}-\vmu_{{c_{h}}}) \\
\iff  \log (p_{c_{\overline{h}}}) - \frac{1}{2} \log |\vSigma_{c_{\overline{h}}}| - \frac{1}{2} (\vz-\vmu_{{c_{\overline{h}}}})^T \vSigma_{c_{\overline{h}}}^{-1}  (\vz-\vmu_{{c_{\overline{h}}}}) - \frac{1}{2}  {\vepsilon^j_s}^T \vSigma_{c_{\overline{h}}}^{-1} \vepsilon^j_s  - {\vepsilon^j_s}^T \vSigma_{c_{\overline{h}}}^{-1}(\vz-\vmu_{{c_{\overline{h}}}}) \hfill \\ 
\hfill \leqslant ~~ \log  (p_{c_{h}}) - \frac{1}{2} \log |\vSigma_{c_{h}}| - \frac{1}{2} (\vz-\vmu_{{c_{h}}})^T \vSigma_{c_{h}}^{-1} (\vz-\vmu_{{c_{h}}}) - \frac{1}{2}   {\vepsilon^j_s}^T \vSigma_{c_{h}}^{-1} \vepsilon^j_s -  {\vepsilon^j_s}^T \vSigma_{c_{h}}^{-1}(\vz-\vmu_{{c_{h}}}) \\
\iff  \log (p_{c_{\overline{h}}}) - \frac{1}{2} \log |\vSigma_{c_{\overline{h}}}| - \frac{1}{2} (\vz-\vmu_{{c_{\overline{h}}}})^T \vSigma_{c_{\overline{h}}}^{-1}  (\vz-\vmu_{{c_{\overline{h}}}}) - \frac{1}{2}[\vSigma_{c_{\overline{h}}}^{-1}]_{(j,j)} (\epsilon^j_s)^2 - \epsilon^j_s \sum_{k=1}^N ([\vz]_{(k)}-[\vmu_{{c_{\overline{h}}}}]_{(k)}) [\vSigma_{c_{\overline{h}}}^{-1}]_{(j,k)} \hfill \\
\hfill \leqslant ~~ \log  (p_{c_{h}}) -  \frac{1}{2} \log | \vSigma_{c_{h}} | -  \frac{1}{2} (\vz-\vmu_{{c_{h}}})^T \vSigma_{c_{h}}^{-1} (\vz-\vmu_{{c_{h}}}) -  \frac{1}{2} [\vSigma_{c_{h}}^{-1}]_{(j,j)} (\epsilon^j_s)^2 - \epsilon^j_s  \sum_{k=1}^N ([\vz]_{(k)}-[\vmu_{{c_{h}}}]_{(k)}) [\vSigma_{c_{h}}^{-1}]_{(j,k)} \\
 \iff  \hfill P(\epsilon^{j}_{s})^2 + b\epsilon^{j}_{s} + c \geqslant 0 \, . \hfill
\end{multline*}

Then we can rewrite the problem as :
\begin{align}
\min~ (\vepsilon^{j}_{s})^2 ~~ \textnormal{s.t.} ~~ P(\epsilon^{j}_{s})^2 + b\epsilon^{j}_{s} + c & \leqslant 0 . \label{equ:optim_binary_bis}
\end{align}

To solve this problem, we introduce slack variables  $\lambda$ and~$I$, and define a Lagrangian as:
\begin{align}
L(\vepsilon^{j}_{s},\lambda,I) = (\epsilon^{j}_{s})^2 + \lambda(P(\epsilon^{j}_{s})^2 + b\epsilon^{j}_{s} + c + I^2) \, .
\end{align}

Then, we can find the possible solutions by solving the system :
\begin{equation}
\begin{cases}
\frac{\partial L}{\partial \epsilon^{j}_{s}} = 0 \\
\frac{\partial L}{\partial \lambda} = 0 \\
\frac{\partial L}{\partial I } = 0,
\end{cases} \nonumber 
\end{equation}
which corresponds to:
\begin{equation}
\begin{cases}
2(\lambda P+1)\epsilon^{j}_{s} + \lambda b = 0 \\
P(\epsilon^{j}_{s})^2 + b\epsilon^{j}_{s} + c + I^2 = 0 \\
2 \lambda I = 0 .
\end{cases}
\label{equ:lagrangien_sys}
\end{equation}

The third equation of \ref{equ:lagrangien_sys} indicates whether the inequality condition is saturated or not. If it is not saturated ($\lambda = 0$), then the label $c_{\overline{h}}$ is already achieved for $\vz$, resulting in a solution of $\epsilon^{j}_{s,*}=0$. This being impossible by construction, we only focus on the case where $\lambda \neq 0$.

If $\lambda \neq 0$, the condition is saturated, the second equation leads  to:
\begin{equation}
P (\epsilon^{j}_{s})^2 + b \epsilon_s^{j} + c = 0,  \nonumber  
\end{equation}
whose solutions are:
\begin{align}
 \epsilon^{j}_{s} \in  \left\{ \frac{-b - \sqrt{b^2-4 P c}}{2 P},\frac{-b + \sqrt{b^2-4 P c}}{2 P} \right\}   ~~\textnormal{if} ~~ b^2-4 P c > 0   , \nonumber
\end{align}
 which we rewrite as:
\begin{align}
  \epsilon^{j}_{s} \in \{ b_1,b_2 \} ~~\textnormal{if} ~~ b^2-4 P c > 0 . \label{solution_border}
\end{align}

Considering \ref{solution_border}, the validity of the results is conditioned by the sign $s$ and the condition of the magnitude of $\epsilon^{j}_{s,*}$. Then, the final result of the problem \ref{equ:Causal_Concept_Order} is either $\emptyset$, $b_1$ or $b_2$ depending on the conditions:
\begin{equation}
\begin{cases}
\emptyset ~~ \textnormal{if} ~~ b^2-4 P c < 0  ~ \\
\textnormal{or} ~ ( s \neq \textnormal{sign}(b_1) ~ \textnormal{and} ~ s \neq \textnormal{sign}(b_2) ) \\
\\
b_1 ~~ \textnormal{if} ~~ b^2-4 P c > 0 ~~ \textnormal{and} ~~ \textnormal{sign}(b_1) = s \\
\textnormal{and} ~~ (\textnormal{sign}(b_2) \neq s ~~ \textnormal{or} ~~ |b_2| \geq |b_1|) \\
\\
b_2 ~~ \textnormal{if} ~~ b^2-4 P c > 0 ~~ \textnormal{and} ~~ \textnormal{sign}(b_2) = s \\
\textnormal{and} ~~ (\textnormal{sign}(b_1) \neq s ~~ \textnormal{or} ~~ |b_1| > |b_2|)  . \\
\end{cases}
\end{equation}
\end{proof}

To expand problem \ref{equ:Causal_Concept_Order} to multiclass classification $C>2$, we consider it as a succession of $C-1$ binary classifications between each $i' \neq c_{h}$ and $c_h$. Given a concept index $j$ and a sign $s$, if we denote the solutions of theses problems as the set $\{ \vepsilon^{j}_{+,*,1},...,\vepsilon^{j}_{s,*, c_{h}-1},\vepsilon^j_{s,*,c_{h}+1},...,\vepsilon^{j}_{s,*,C} \}$, the final solution $\vepsilon^{j}_{s,*}$ is if it exists, the value of minimal magnitude among this set.

Examples of explanations based on this metric are given in Sections \ref{examples} and \ref{examples_bis}.

\subsection{\textcolor{black}{\updateremi{Sample-wise}} explanation time} \label{exptime}

In Table \ref{tabletimeexp}, we display the amount of time taken to produce a sample-wise explanation for each image of the test set of PASCAL-Part and Cats/Dogs/Cars.

\begin{table}[H]
\caption{Time (in seconds) to produce explanations on the entire test set.}
\label{tabletimeexp}
\begin{center}
\begin{tabular}{llll}
\hline
\multicolumn{1}{c}{\bf Method}  &\multicolumn{1}{c}{\bf \updateremi{\methodsample} }  &\multicolumn{1}{c}{\bf \updateremi{CLIP-LIME} } &\multicolumn{1}{c}{\bf \updateremi{CLIP-SHAP} }
\\ \hline \\
\textit{Cats/Dogs/Cars} & 3.01 & 76.69 & 256.76 \\
\textit{PASCAL-Part} & 2341.12 & 2857.28 & 7207.58 \\
\textit{\updateremi{MonuMAI}} & 2.19 & 19.94 & 56.25 \\
\textit{MITscenes} & 33.41 & 509.33 & 1398.64 \\
\end{tabular}
\end{center}
\end{table}

In this table, we can observe that using the \remi{local} explanation is the fastest, especially for the \remi{cases} where the number of \remi{concepts} and classes are low. \remi{This is justified by the fact that our method consists of using Proposition \ref{demo_sample_wise_binary} for each concept, sign, and class other than the inference one. Hence the complexity of this computation  for each sample is $O(2(C-1)N)$, where LIME's complexity does not depend on $C$.}

\subsection{Additional samples} \label{examples_bis}

\remi{We display here additional samples \remi{of both \updateremi{sample-wise} and \updateremi{dataset-wise} explanations} from the PASCAL-Part and Cats/Dogs/Cars dataset.}

\begin{figure}[htb]
\begin{center}
\begin{tabular}{l l l l }
(a) Input image & (b) GradCAM Explanation & (c) LIME Explanation  & (d) SHAP explanation\\
\includegraphics[width=0.24\columnwidth]{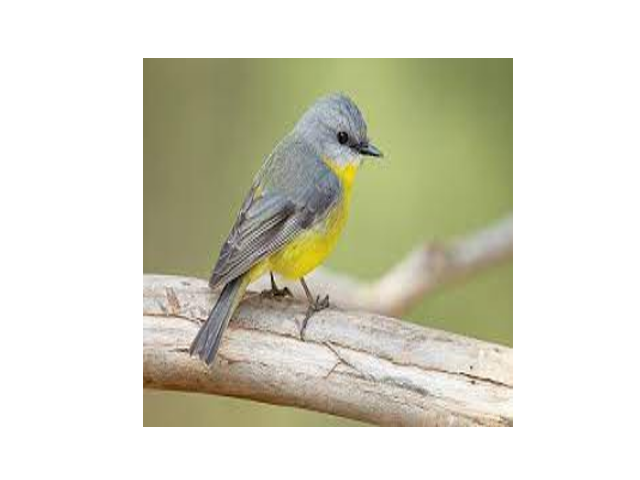}&
\includegraphics[width=0.24\columnwidth]{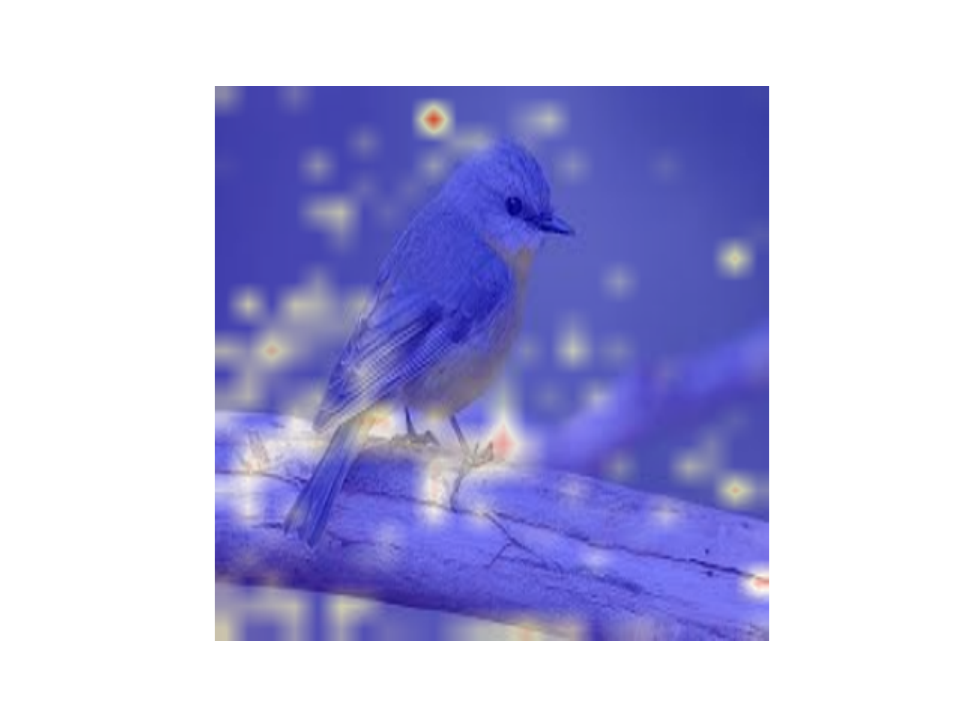}&
\includegraphics[width=0.24\columnwidth]{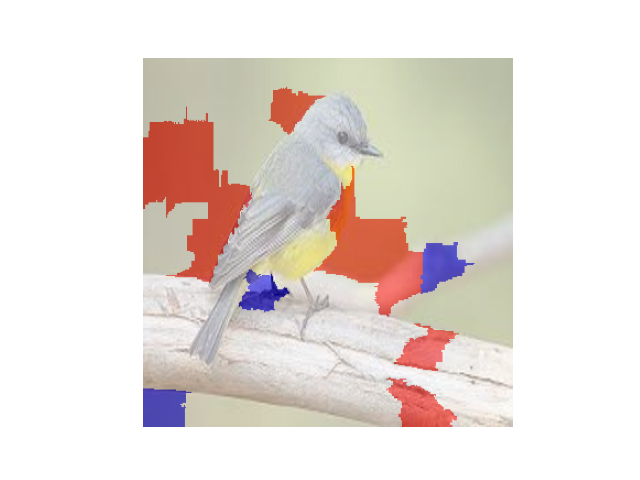}&
\includegraphics[width=0.24\columnwidth]{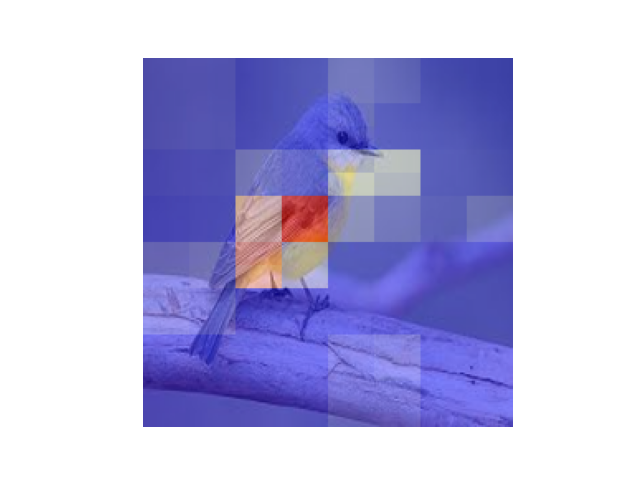}\\
\includegraphics[width=0.24\columnwidth]{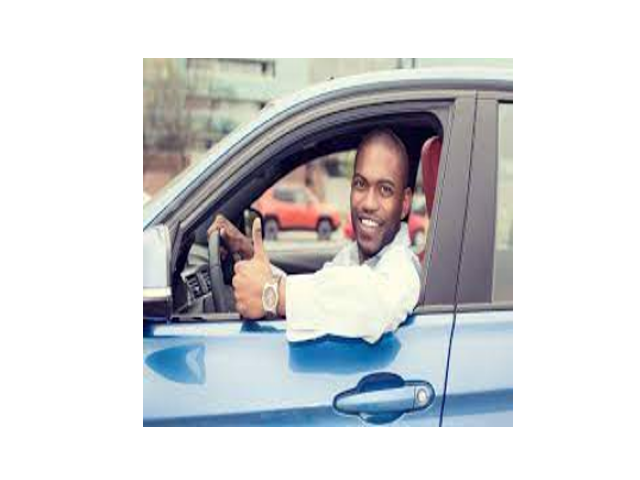}&
\includegraphics[width=0.24\columnwidth]{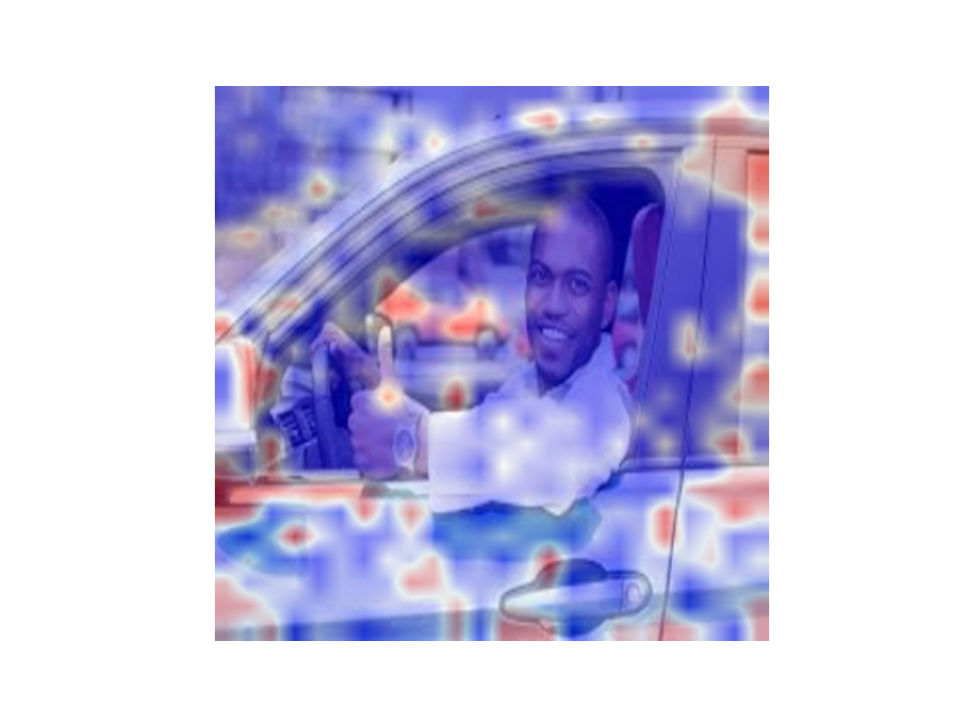}&
\includegraphics[width=0.24\columnwidth]{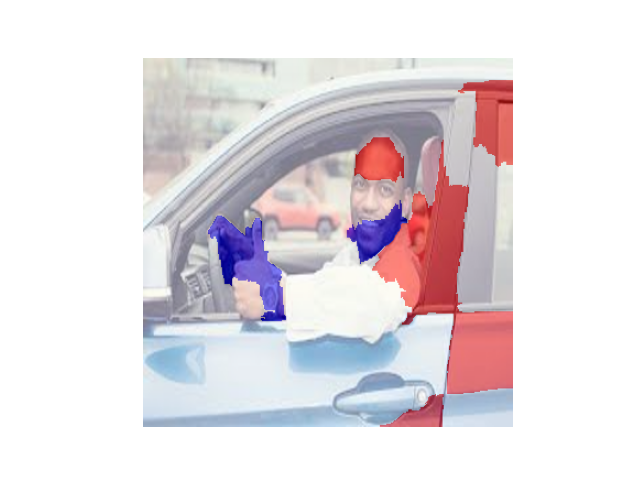}&
\includegraphics[width=0.24\columnwidth]{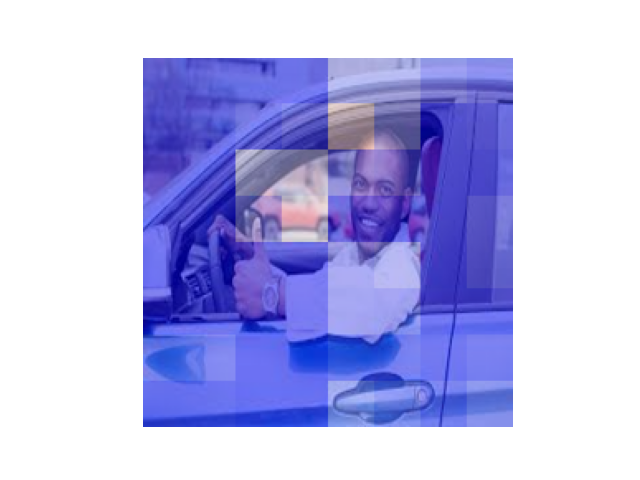}\\
\includegraphics[width=0.24\columnwidth]{Images/samples_cats_dogs2/Car/Car.png}&
\includegraphics[width=0.24\columnwidth]{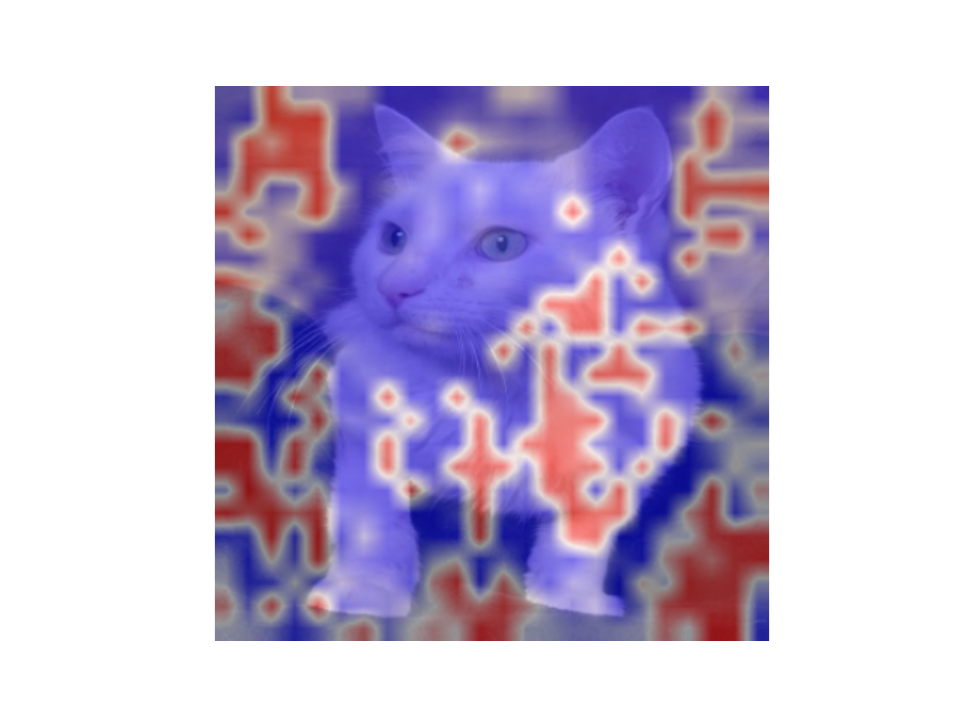}&
\includegraphics[width=0.24\columnwidth]{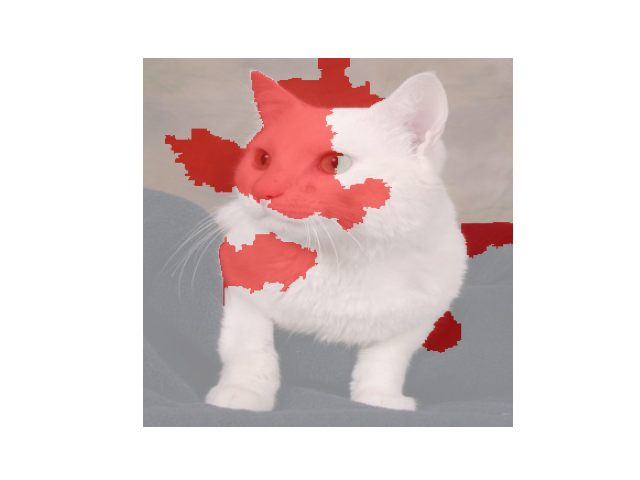}&
\includegraphics[width=0.24\columnwidth]{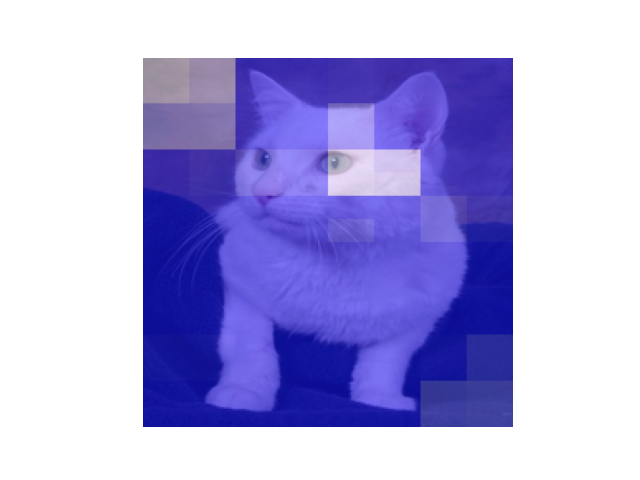}\\
\includegraphics[width=0.24\columnwidth]{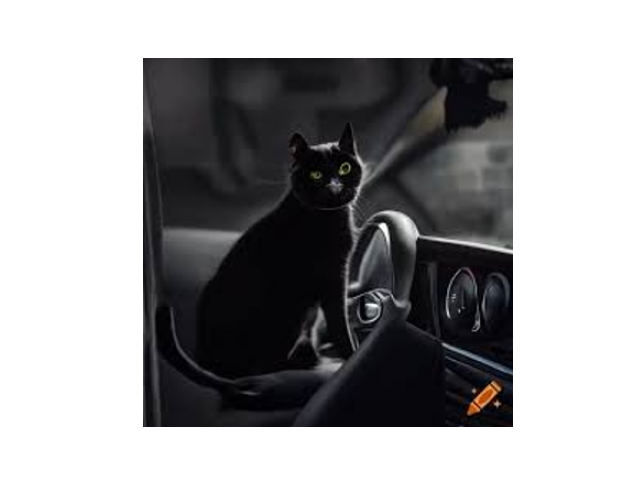}&
\includegraphics[width=0.24\columnwidth]{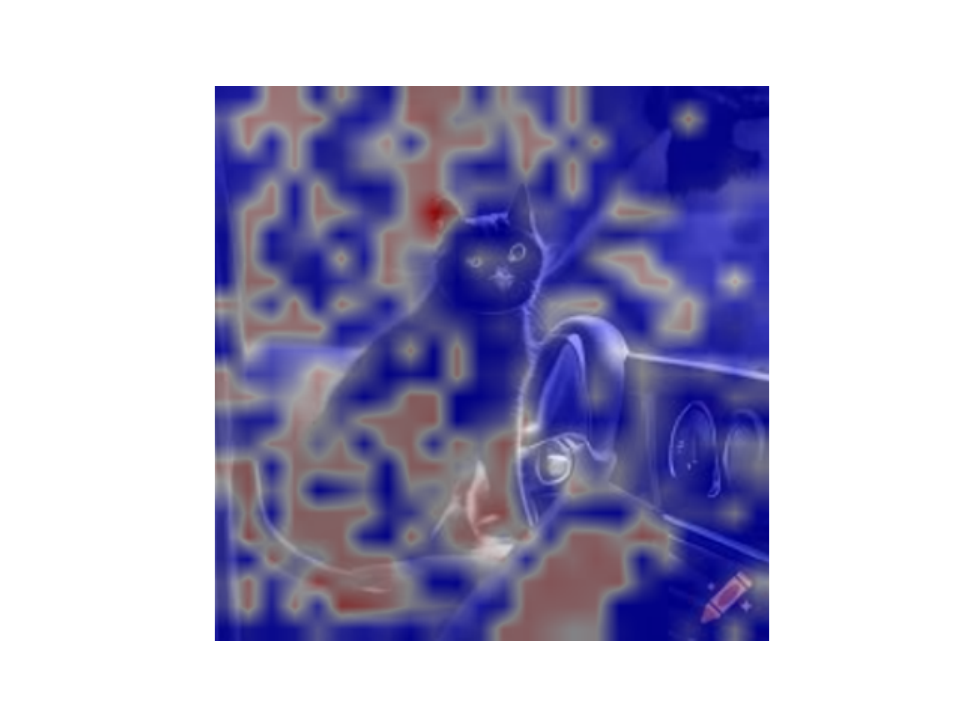}&
\includegraphics[width=0.24\columnwidth]{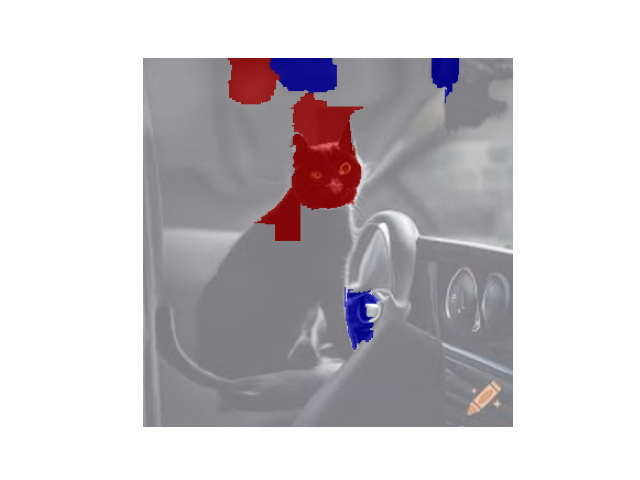}&
\includegraphics[width=0.24\columnwidth]{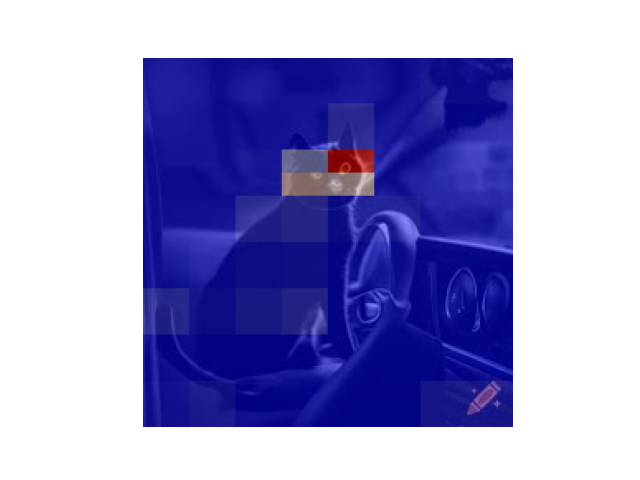}\\
\end{tabular}
\end{center}
\caption{\updateremi{\textbf{Sample-wise explanations (image level).} The first two examples come from the PascalPART dataset, and the last two samples come from the Cats/Dogs/Cars dataset in the biased setup. Note that the classifier mislabeled the $2^{nd}$ example as ``car'' and the $4^{th}$ example as ``car''.}}
\label{fig:extra_image_level}
\end{figure}

\begin{figure}[htb]
\begin{center}
\begin{adjustwidth}{-2cm}{2cm}
\begin{tabular}{l c c c }
(a) Input image & (b) \methodsample Explanation & (c) Yan et al. (sample) explanation  & (d) \CBMLIME explanation \\
\includegraphics[width=0.15\columnwidth]{Images/samples_pascalpart3/bird/bird.png}&
\includegraphics[width=0.32\columnwidth]{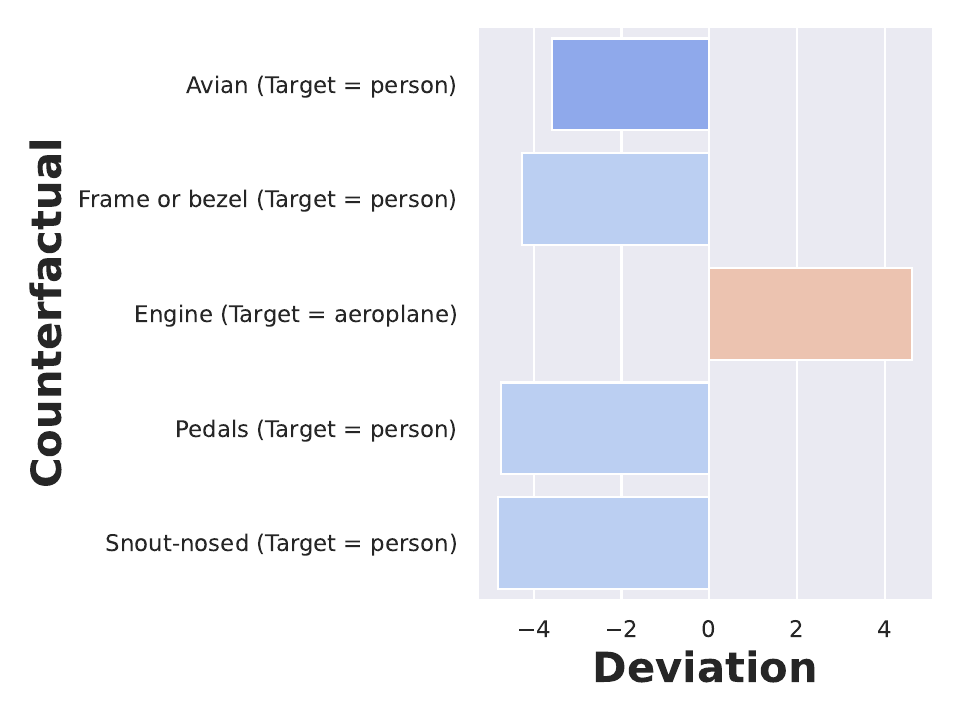}&
\includegraphics[width=0.32\columnwidth]{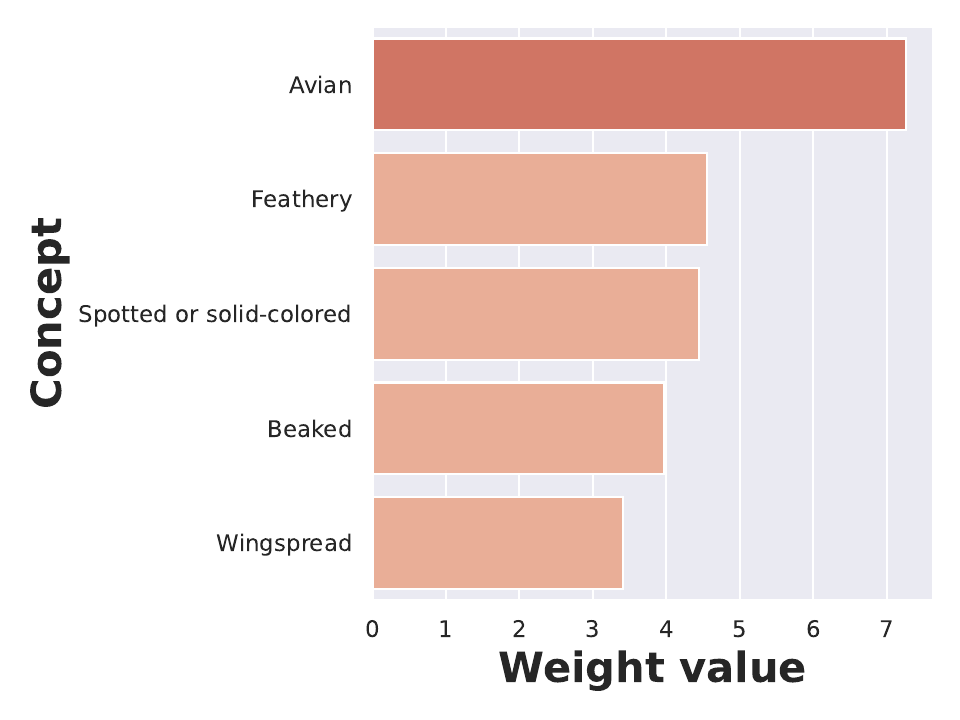}&
\includegraphics[width=0.32\columnwidth]{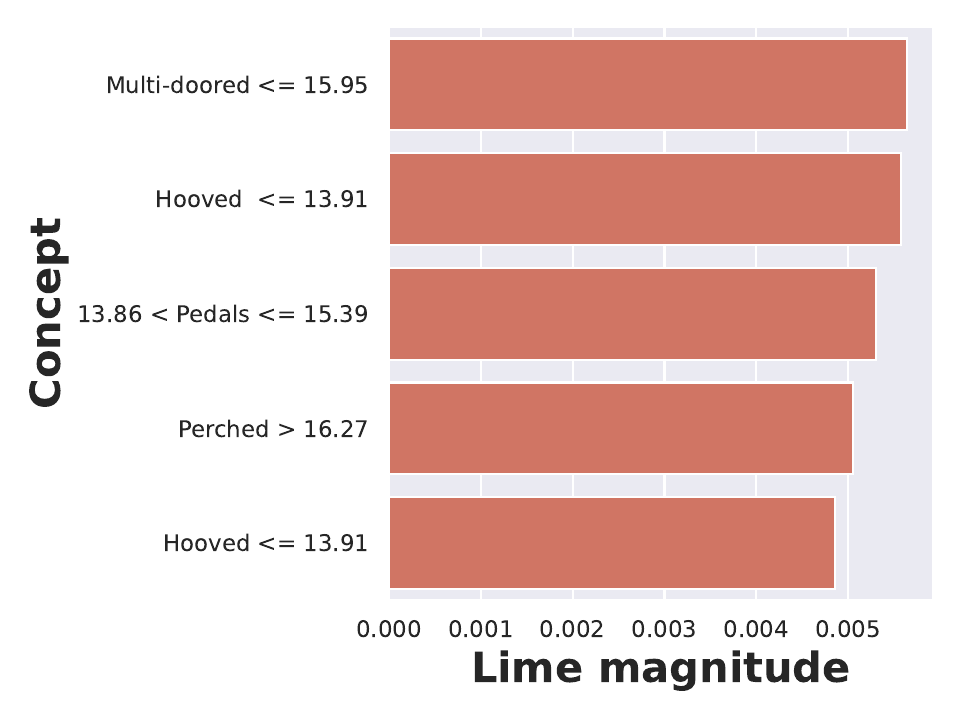}\\
\includegraphics[width=0.15\columnwidth]{Images/samples_pascalpart3/car/car.png}&
\includegraphics[width=0.32\columnwidth]{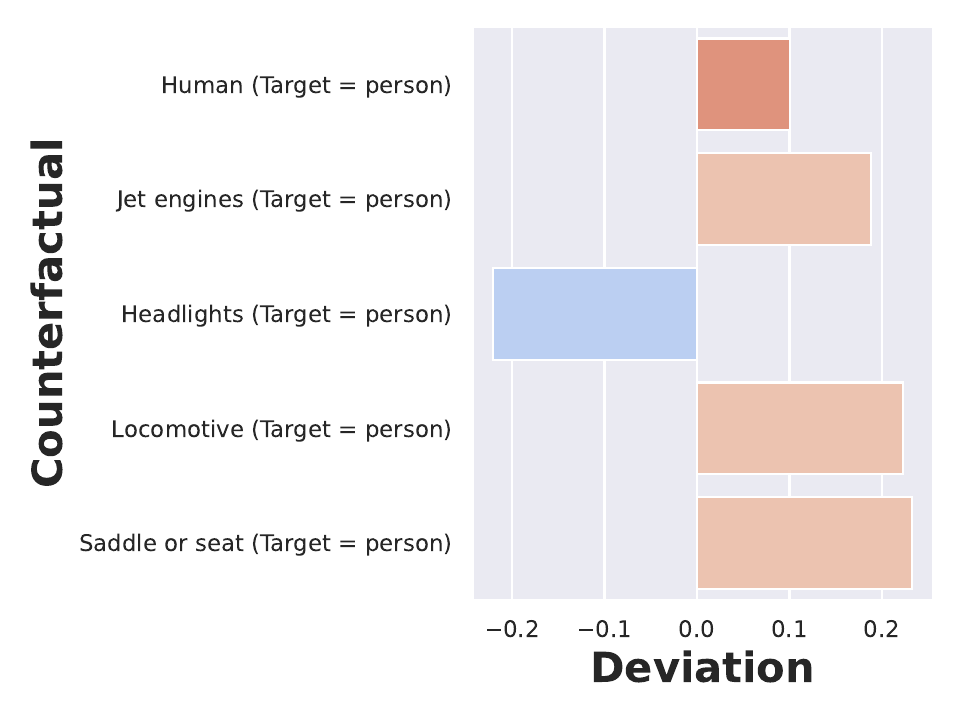}&
\includegraphics[width=0.32\columnwidth]{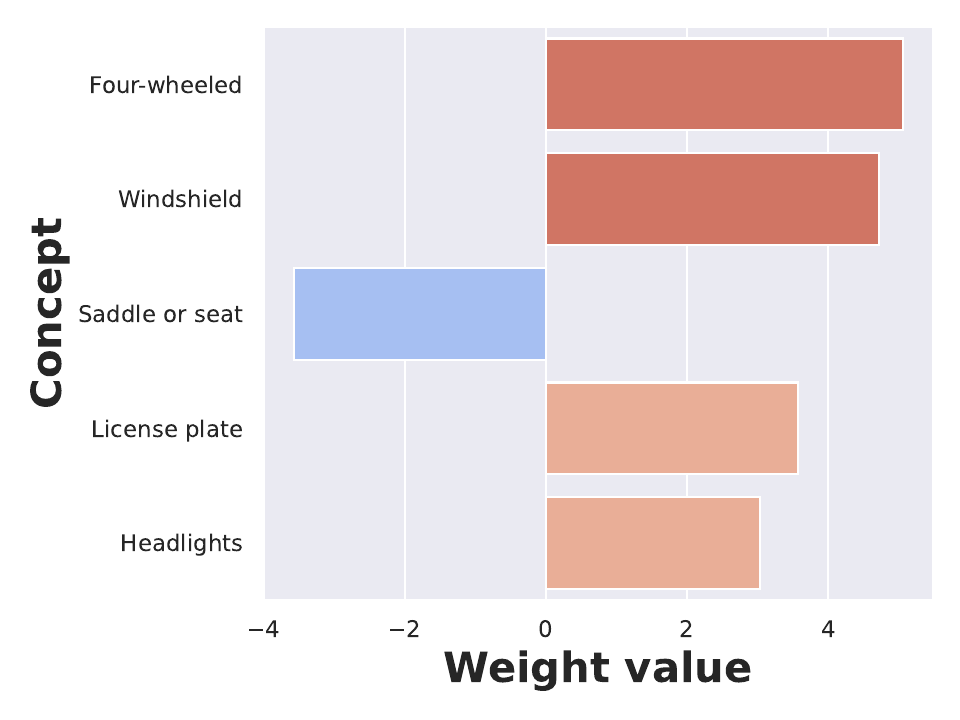}&
\includegraphics[width=0.32\columnwidth]{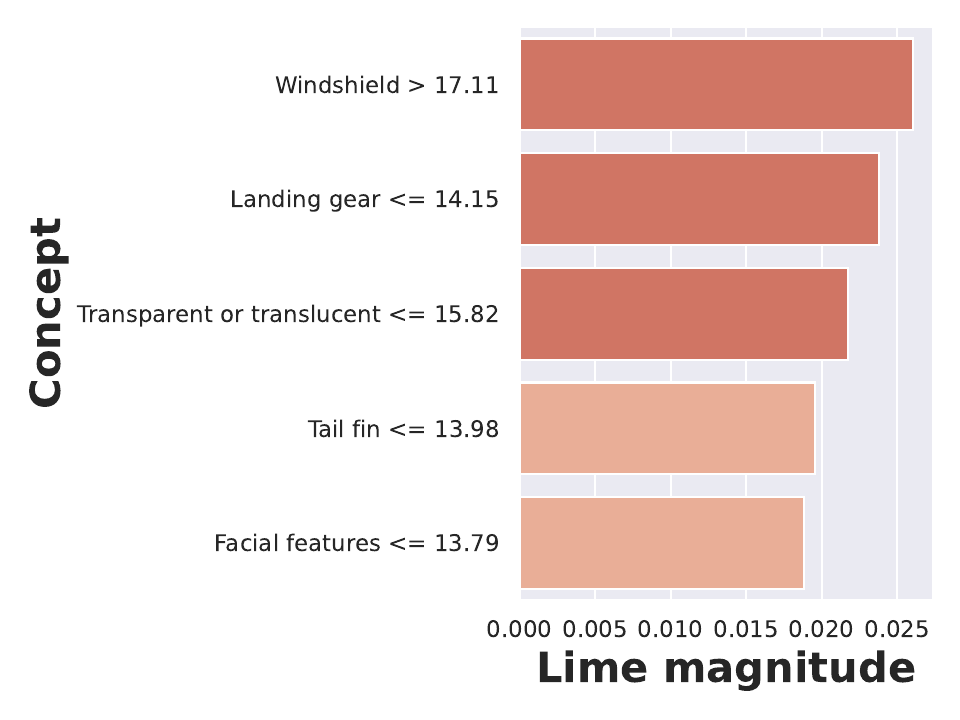}\\
\includegraphics[width=0.15\columnwidth]{Images/samples_cats_dogs2/Car/Car.png}&
\includegraphics[width=0.32\columnwidth]{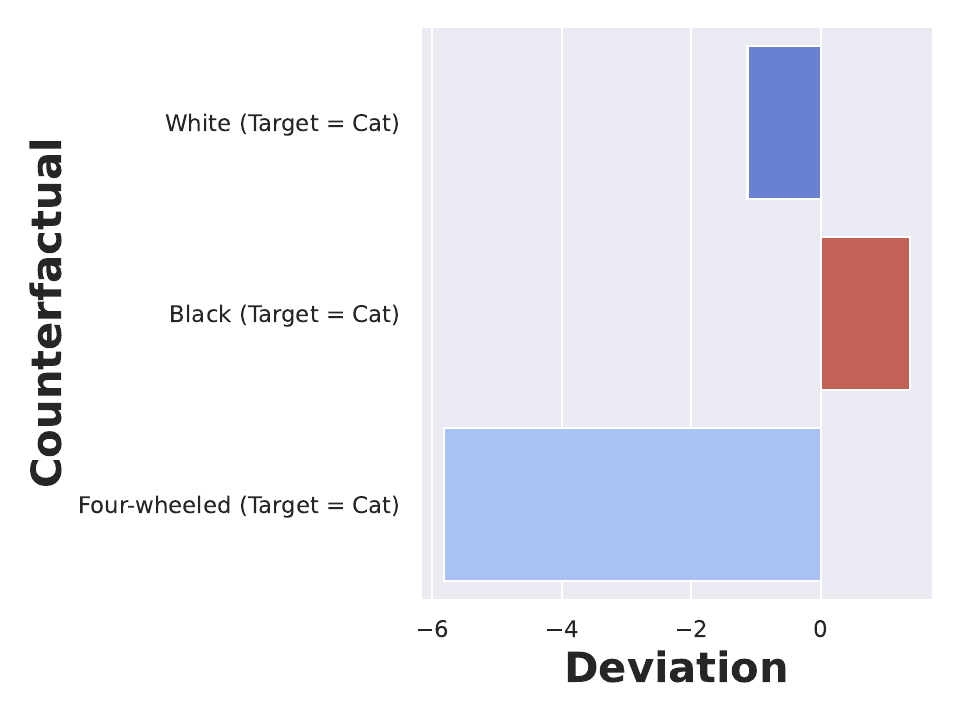}&
\includegraphics[width=0.32\columnwidth]{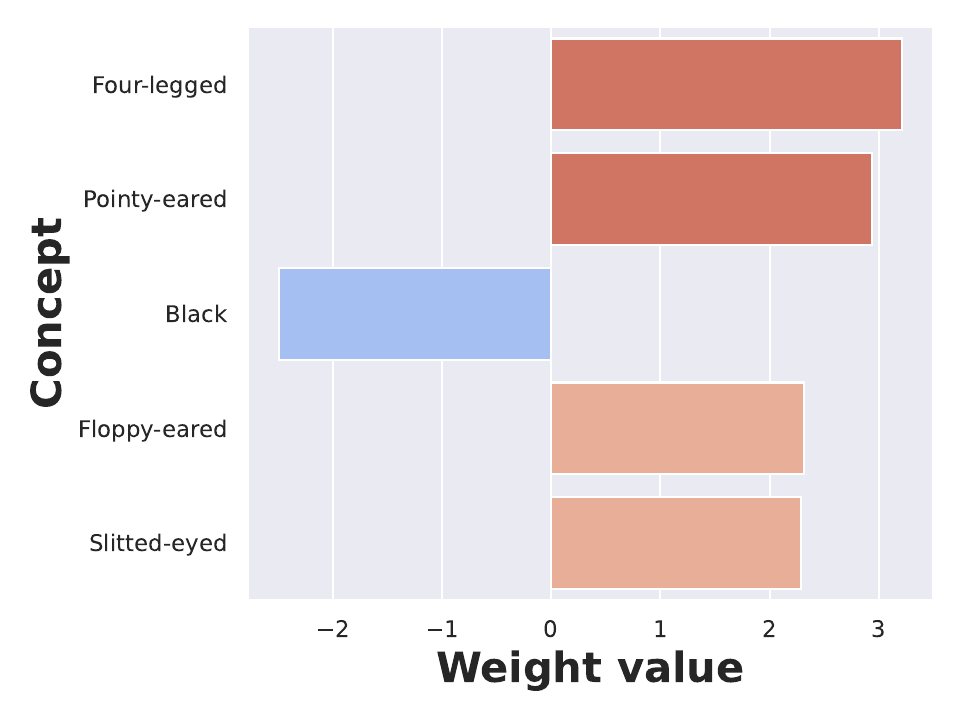}&
\includegraphics[width=0.32\columnwidth]{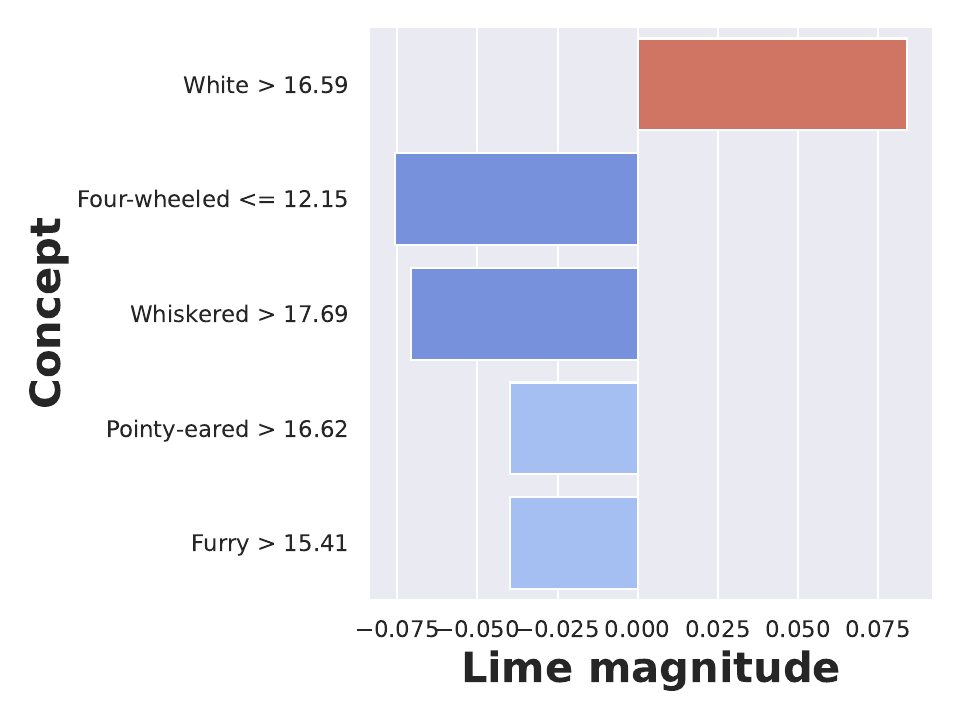}\\
\includegraphics[width=0.15\columnwidth]{Images/samples_cats_dogs2/Cat/Cat.png}&
\includegraphics[width=0.32\columnwidth]{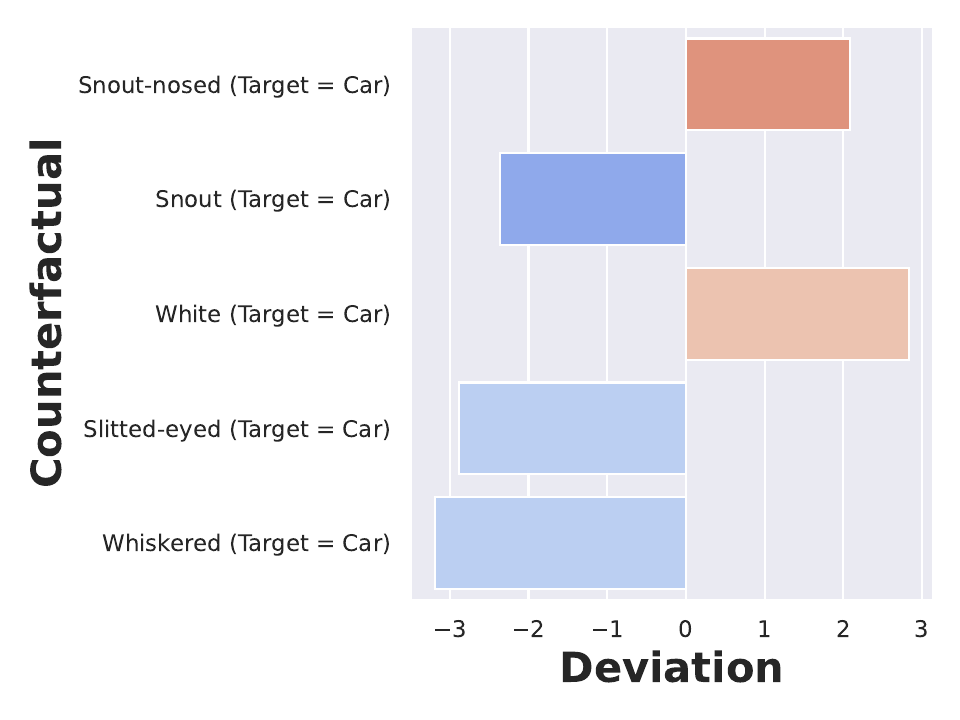}&
\includegraphics[width=0.32\columnwidth]{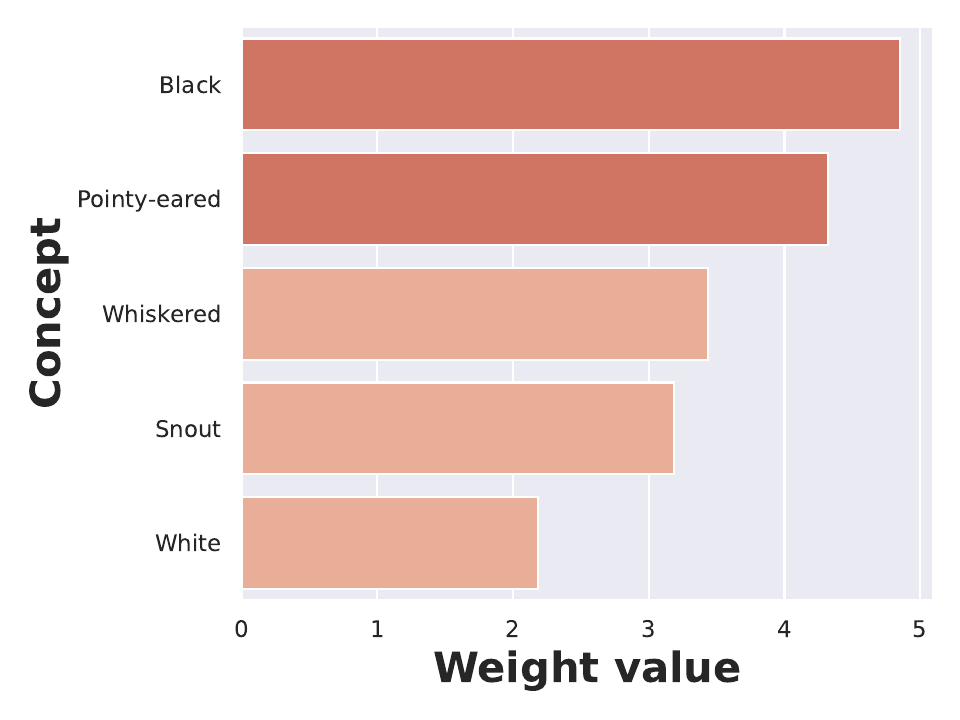}&
\includegraphics[width=0.32\columnwidth]{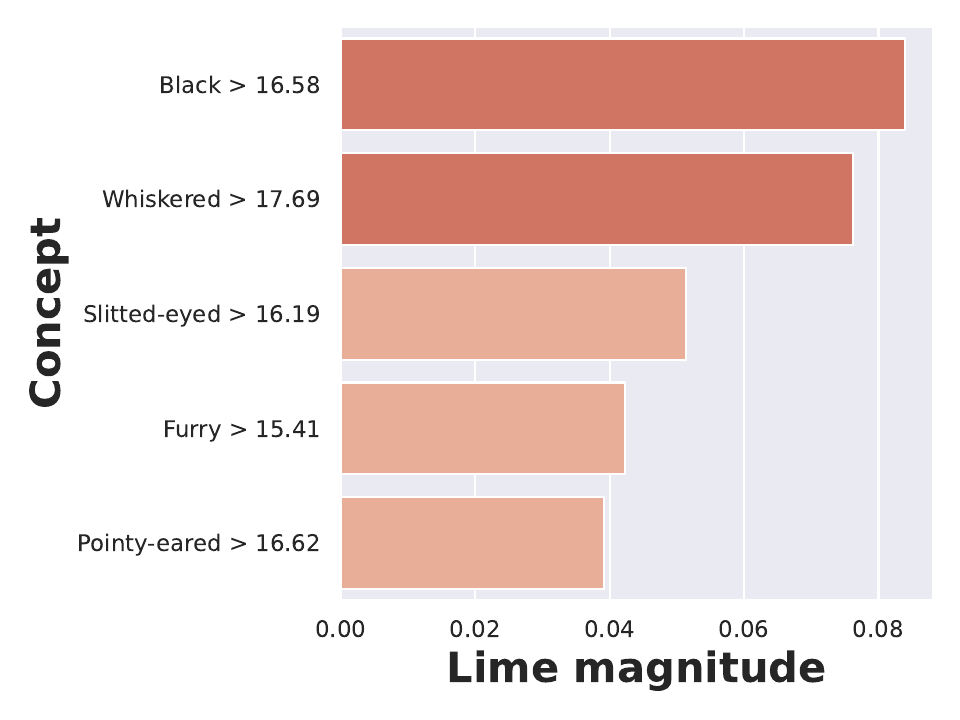}\\
\end{tabular}
\end{adjustwidth}
\end{center}
\caption{\updateremi{\textbf{Sample-wise explanations (concept level).} The first two examples come from the PascalPART dataset, and the last two samples come from the Cats/Dogs/Cars dataset in the biased setup. Note that the classifier mislabeled the $2^{nd}$ example as ``car'' and the $4^{th}$ example as ``car''.}}
\label{fig:extra_concept_level}
\end{figure}

\begin{figure}[htb]
\begin{center}
\begin{adjustwidth}{-2cm}{2cm}
\begin{tabular}{l c}
(a) Input image & (b) \CBMshap Explanation \\
\includegraphics[width=0.15\columnwidth]{Images/samples_pascalpart3/bird/bird.png}&
\includegraphics[width=1\columnwidth]{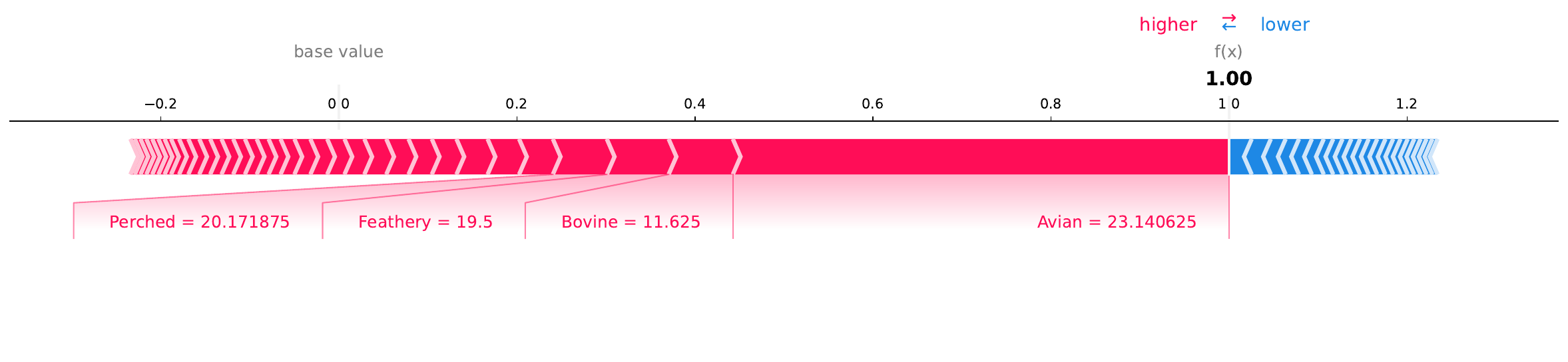}\\
\includegraphics[width=0.15\columnwidth]{Images/samples_pascalpart3/car/car.png}&
\includegraphics[width=1\columnwidth]{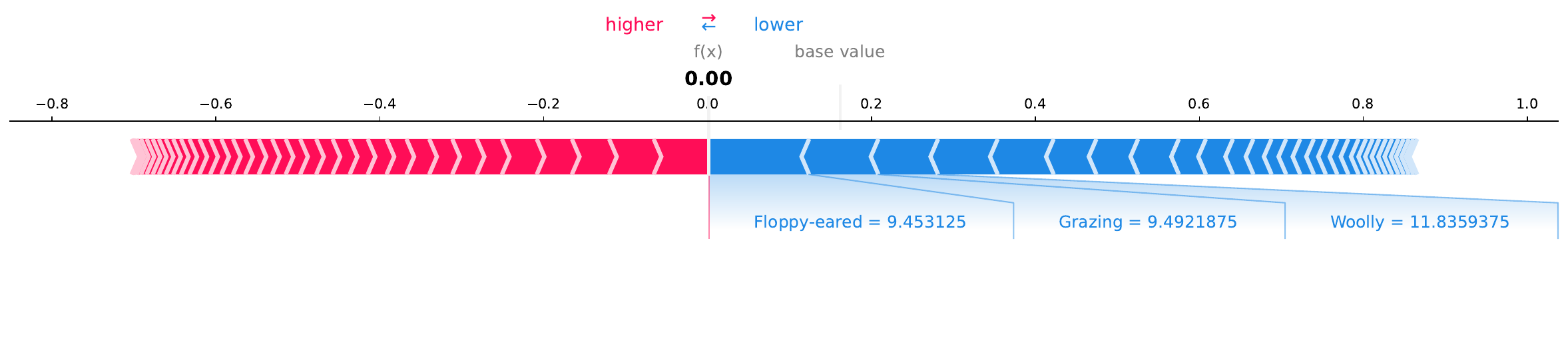}\\
\includegraphics[width=0.15\columnwidth]{Images/samples_cats_dogs2/Car/Car.png}&
\includegraphics[width=1\columnwidth]{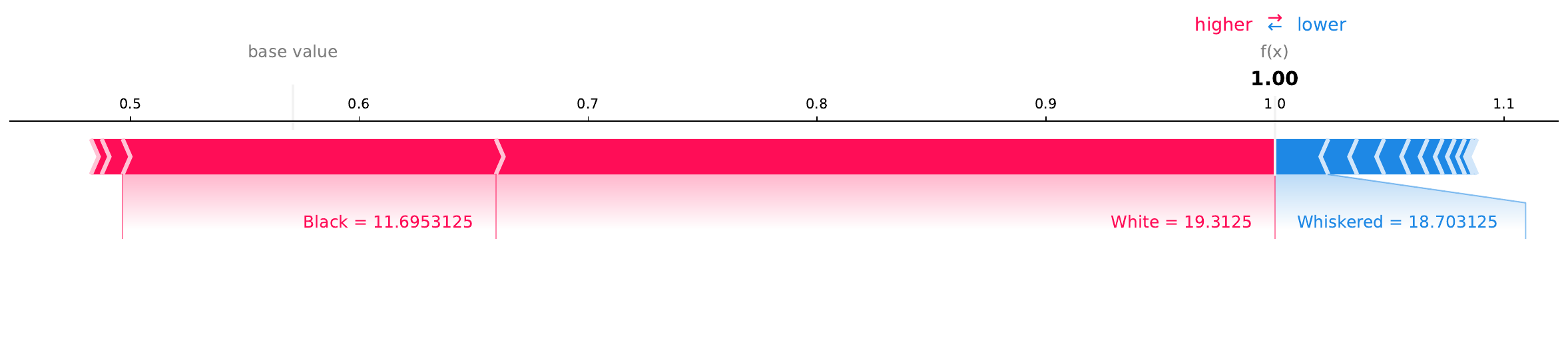}\\
\includegraphics[width=0.15\columnwidth]{Images/samples_cats_dogs2/Cat/Cat.png}&
\includegraphics[width=1\columnwidth]{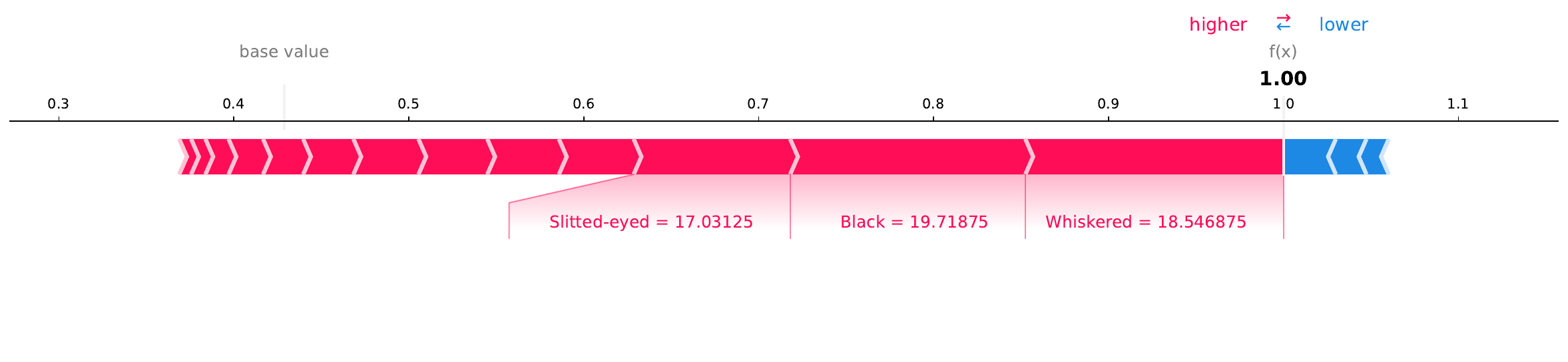}\\
\end{tabular}
\end{adjustwidth}
\end{center}
\caption{\updateremi{\textbf{Sample-wise explanations (concept level).} The first two examples come from the PascalPART dataset, and the last two samples come from the Cats/Dogs/Cars dataset in the biased setup. Note that the classifier mislabeled the $2^{nd}$ example as ``car'' and the $4^{th}$ example as ``car''.}}
\label{fig:extra_concept_level2}
\end{figure}

\begin{figure}[htb]
\begin{center}
\begin{adjustwidth}{-2cm}{2cm}
\begin{tabular}{l c c c }
(a) Input image & (b) \methoddata Explanation & (c) LaBo explanation & (d) Yan et al. (dataset) explanation\\
\includegraphics[width=0.15\columnwidth]{Images/samples_pascalpart3/bird/bird.png}&
\includegraphics[width=0.32\columnwidth]{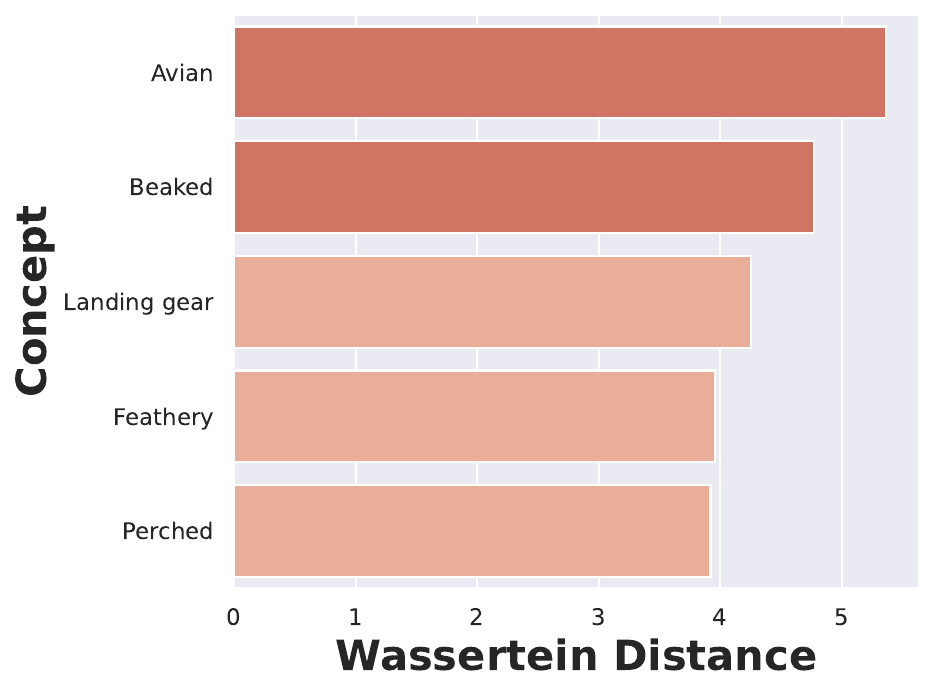}&
\includegraphics[width=0.32\columnwidth]{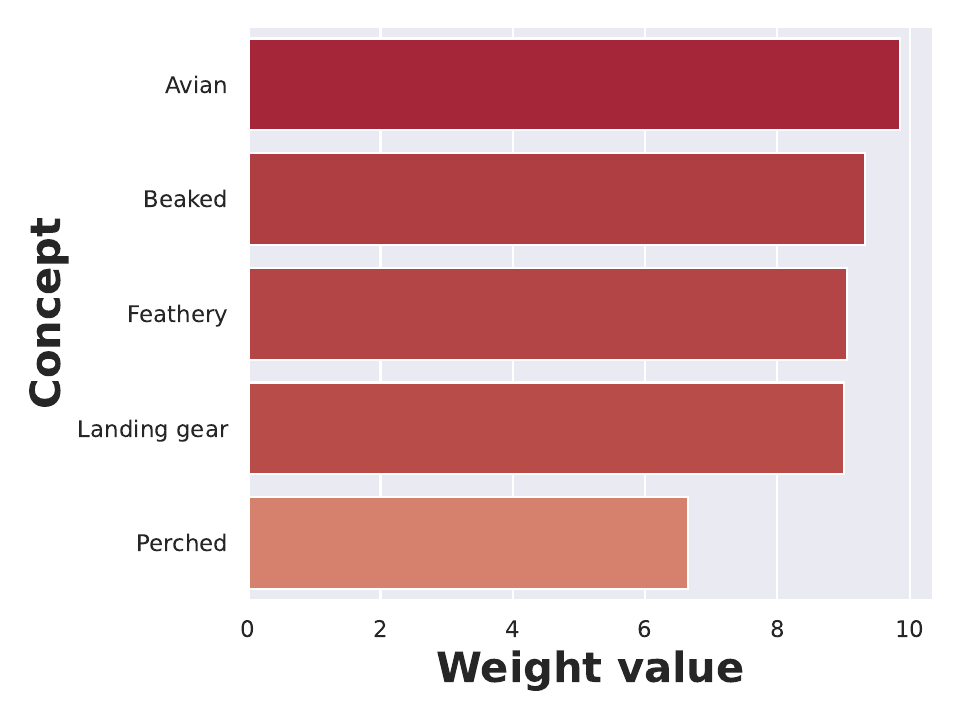}&
\includegraphics[width=0.32\columnwidth]{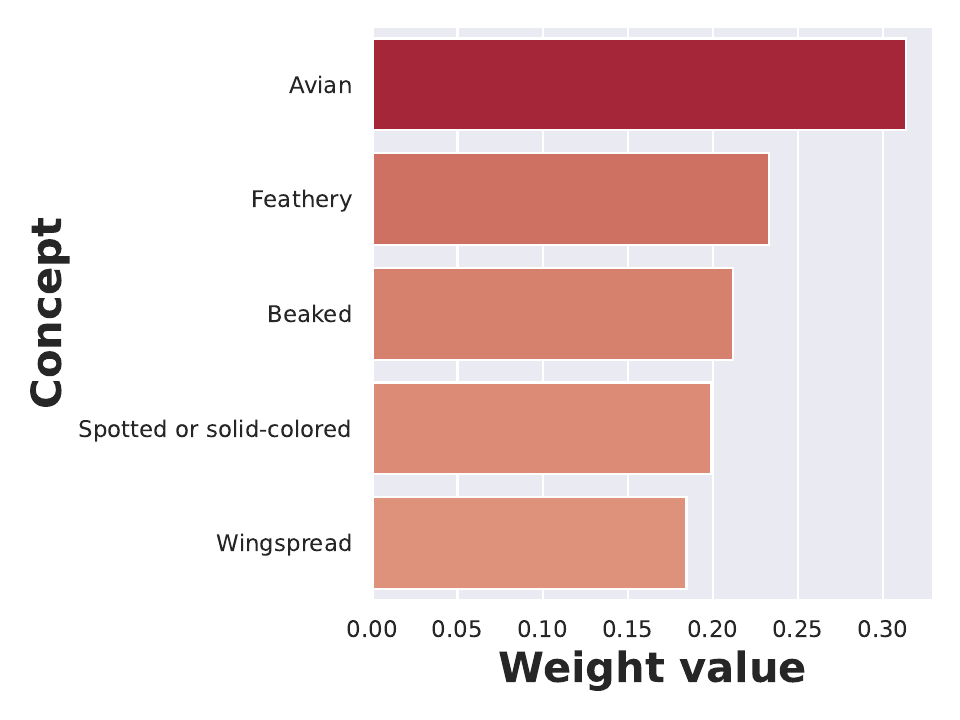}\\
\includegraphics[width=0.15\columnwidth]{Images/samples_pascalpart3/car/car.png}&
\includegraphics[width=0.32\columnwidth]{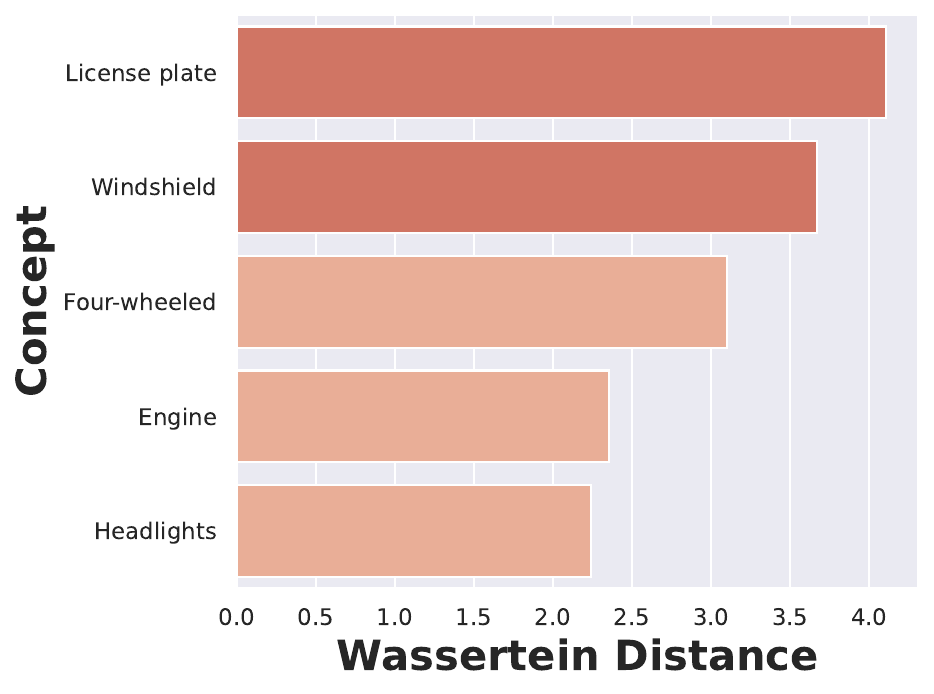}&
\includegraphics[width=0.32\columnwidth]{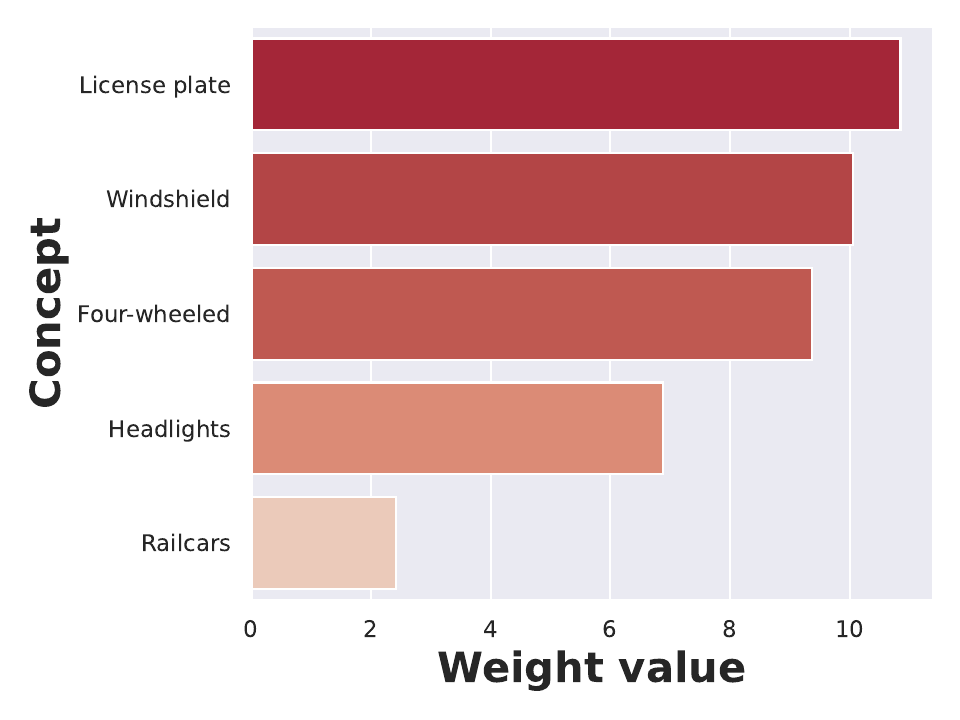}&
\includegraphics[width=0.32\columnwidth]{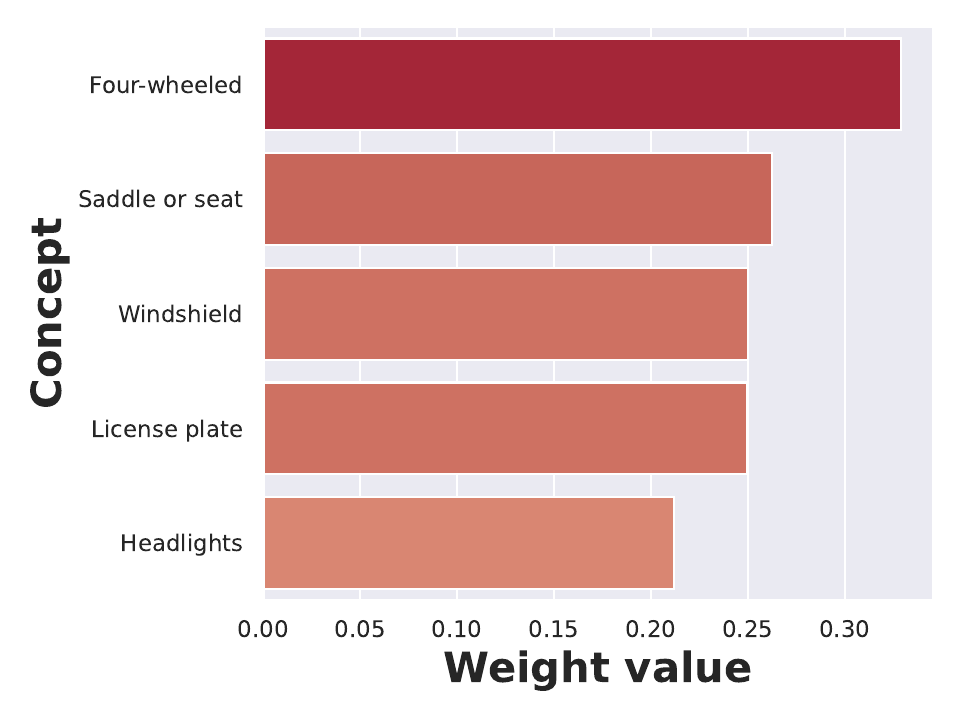}\\
\includegraphics[width=0.15\columnwidth]{Images/samples_cats_dogs2/Car/Car.png}&
\includegraphics[width=0.32\columnwidth]{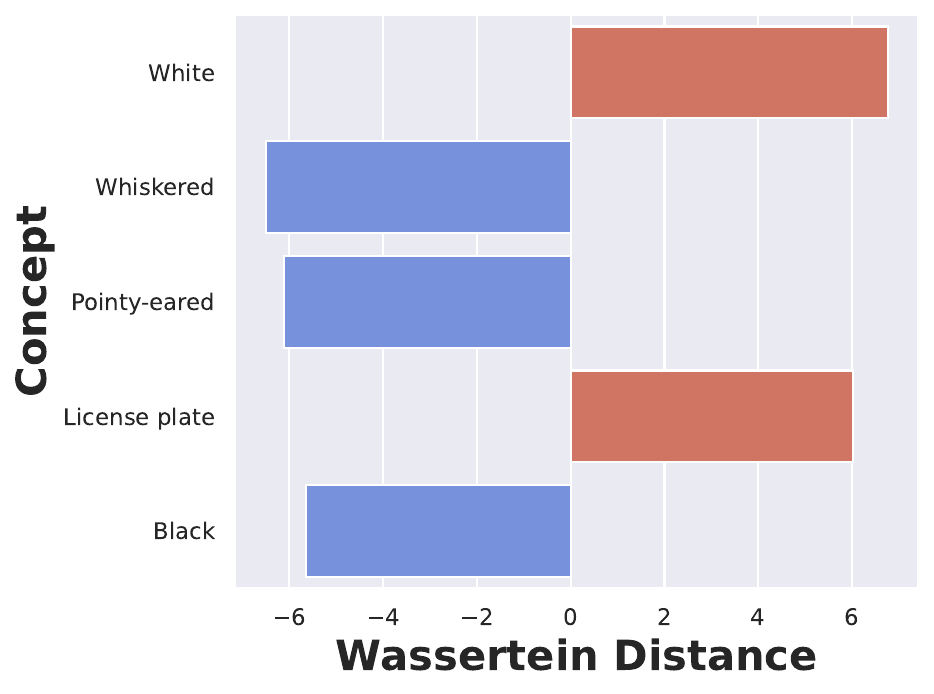}&
\includegraphics[width=0.32\columnwidth]{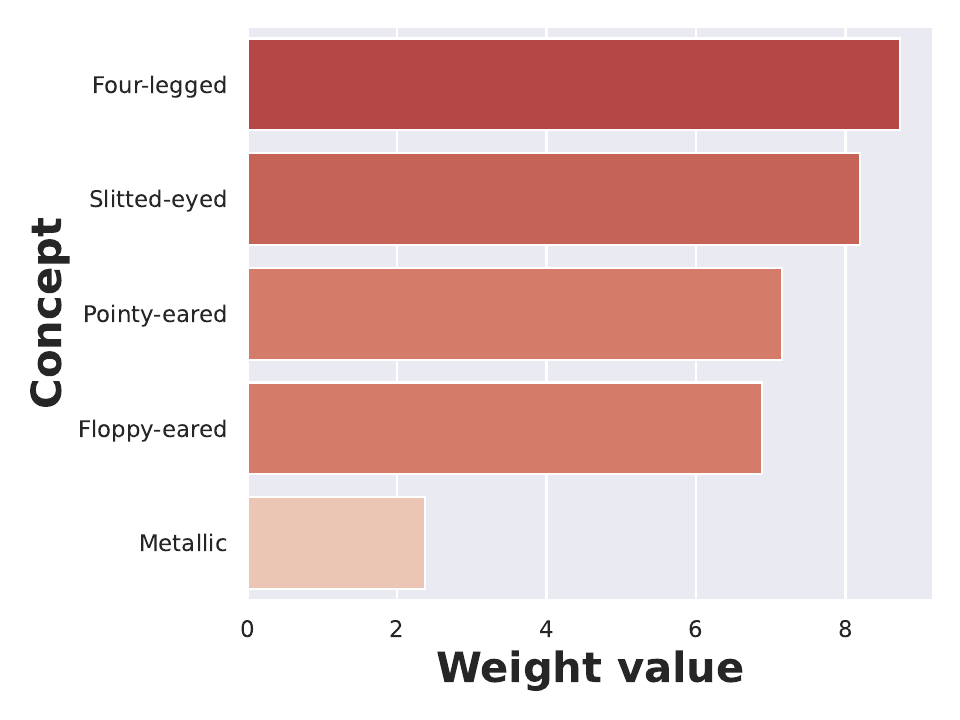}&
\includegraphics[width=0.32\columnwidth]{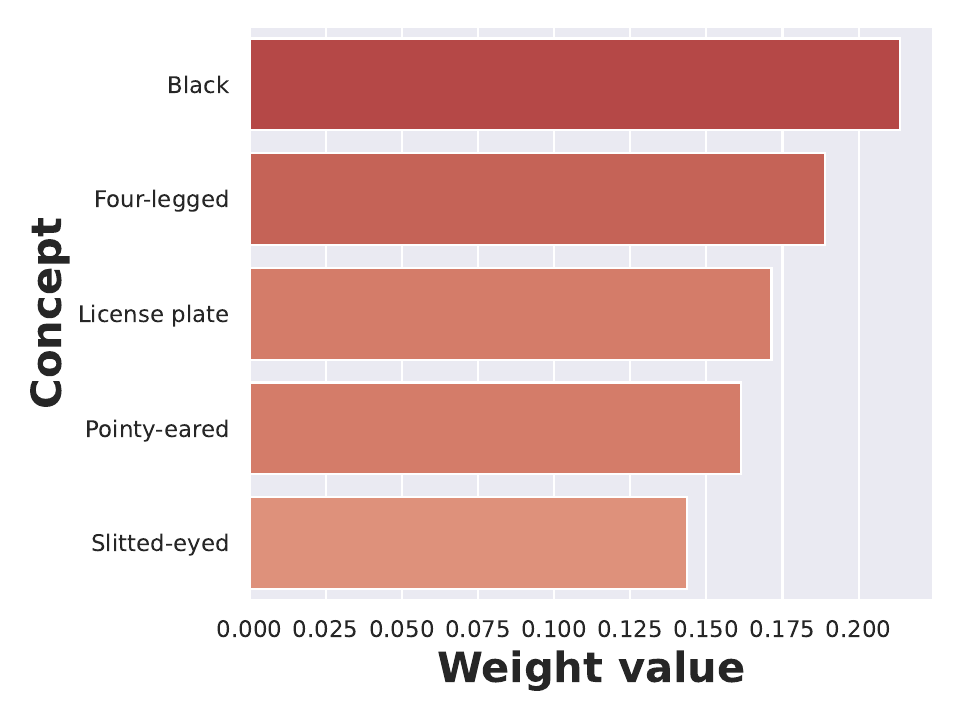}\\
\includegraphics[width=0.15\columnwidth]{Images/samples_cats_dogs2/Cat/Cat.png}&
\includegraphics[width=0.32\columnwidth]{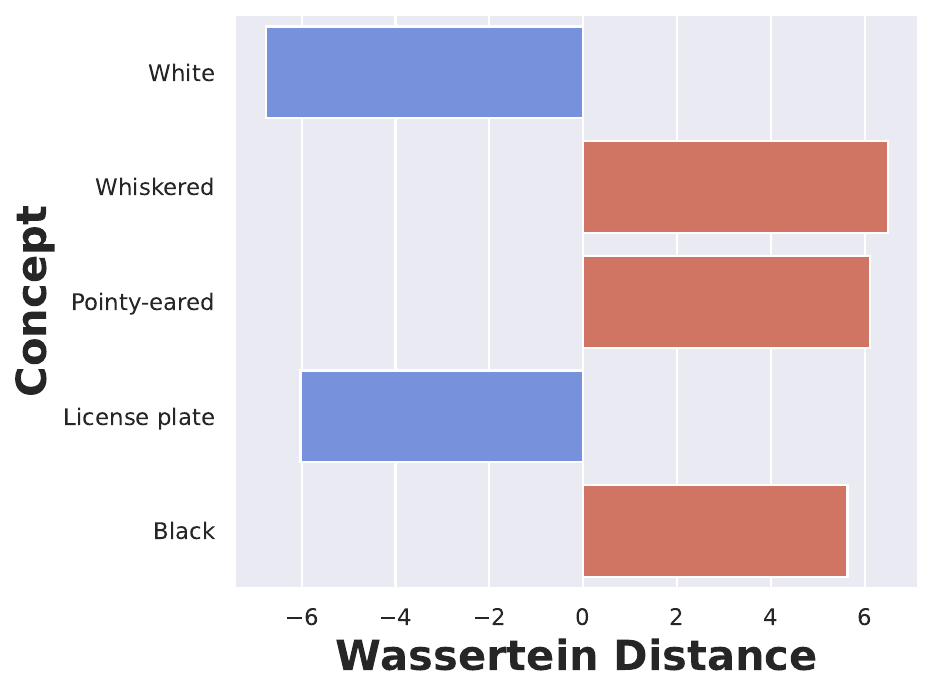}&
\includegraphics[width=0.32\columnwidth]{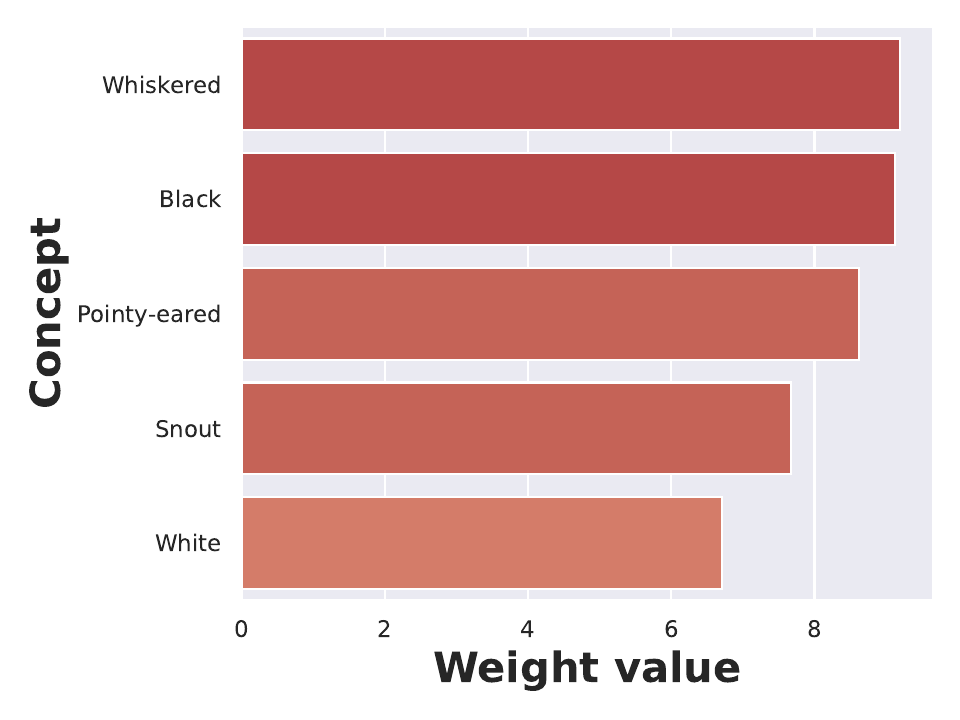}&
\includegraphics[width=0.32\columnwidth]{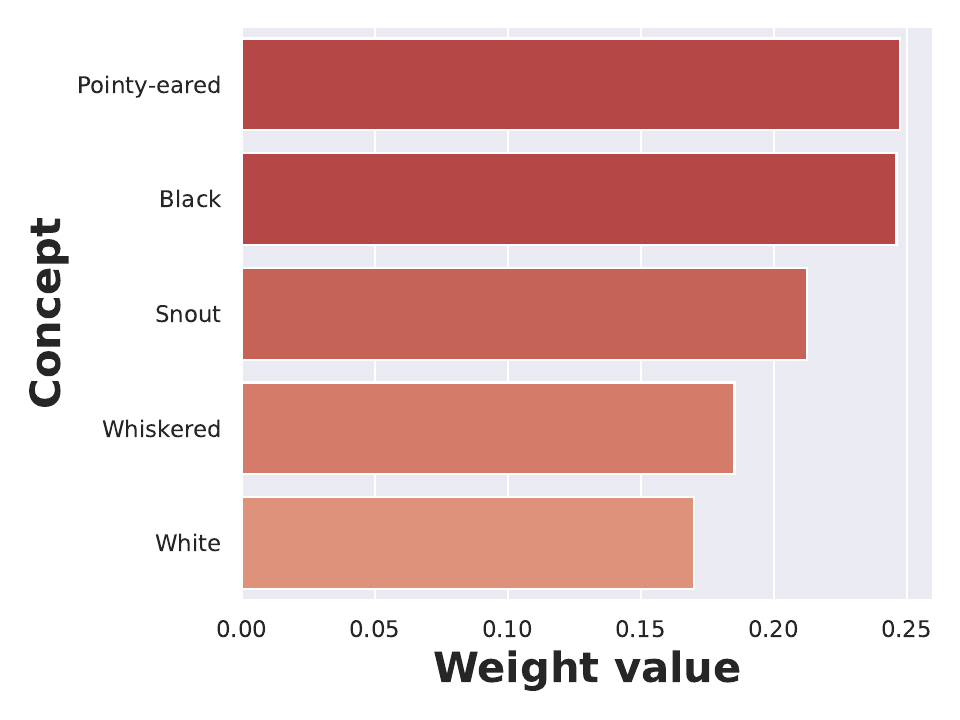}\\
\end{tabular}
\end{adjustwidth}
\end{center}
\caption{\updateremi{\textbf{Dataset-wise explanations.} The first two examples come from the PascalPART dataset, and the last two samples come from the Cats/Dogs/Cars dataset in the biased setup. Note that the classifier mislabeled the $2^{nd}$ example as ``car'' and the $4^{th}$ example as ``car''.}}
\label{fig:extra_detaset_wise}
\end{figure}

\end{document}